\newtheorem{definition}{Definition}
\newtheorem{theorem}{Theorem}
\newtheorem{lemma}{Lemma}
\newtheorem{proposition}{Proposition}
\theoremstyle{remark}
\newtheorem{remark}{Remark}
\newtheorem{example}{Example}
\newcommand{\mat}{\mathbf }
\newcommand{\vct}{\boldsymbol}
\newcommand{\ud}{\mathrm d}
\newcommand{\kl}{\mathrm{KL}}
\newcommand{\argmin}{\mathrm{argmin}}
\newcommand{\tr}{\mathrm{tr}}
\def\mP{\mathbb P}
\def\op{\mathrm{op}}
\def\sb{\mathfrak{b}}
\def\sv{\mathfrak{s}}
\def\sL{\mathfrak{L}}
\def\u{\mathsf{U}}
\def\l{\mathsf{L}}
\renewcommand{\hat}{\widehat}
\renewcommand{\tilde}{\widetilde}
\renewcommand{\check}{\widecheck}
\begin{document}

\title{Optimization of Smooth Functions with Noisy Observations: Local Minimax Rates}
\author[1]{Yining Wang}
\author[2]{Sivaraman Balakrishnan}
\author[1]{Aarti Singh}
\affil[1]{Machine Learning Department, Carnegie Mellon University}
\affil[2]{Department of Statistics and Data Science, Carnegie Mellon University}

\maketitle

%
%
%
%


\begin{abstract}
We consider the problem of \emph{global optimization} of an unknown non-convex smooth function
with zeroth-order feedback.
In this setup, an 
algorithm is allowed to adaptively query the underlying function at different locations
and receives noisy evaluations of function values at the queried points 
(i.e. the algorithm has access to zeroth-order information).
Optimization performance is evaluated by the expected difference of function values at the estimated optimum and the true optimum.
In contrast to the classical optimization setup, first-order information like gradients are \emph{not} directly 
accessible to the optimization algorithm. We show that the classical minimax framework of analysis, which roughly
characterizes the worst-case query complexity of an optimization algorithm in this setting, leads to excessively pessimistic results.
We propose a \emph{local minimax} framework to study the fundamental difficulty of optimizing smooth functions with adaptive function evaluations, 
which provides a refined picture of the intrinsic difficulty of zeroth-order optimization.
We show that for functions with fast level set growth around the global minimum, carefully designed optimization algorithms can 
identify a near global minimizer with many fewer queries.
For the special case of strongly convex and smooth functions, our implied convergence rates match the ones developed for zeroth-order \emph{convex} optimization problems \citep{flaxman2005online,agarwal2010optimal}.
At the other end of the spectrum, for worst-case smooth functions no algorithm can converge faster than the minimax rate of estimating the entire unknown function in the $\ell_\infty$-norm.
We provide an intuitive and efficient algorithm that attains the derived upper error bounds. Finally, using the local minimax framework we are able to clearly 
dichotomize adaptive and non-adaptive algorithms by showing that non-adaptive algorithms, although optimal in a global minimax sense, 
do not attain the optimal local minimax rate.

\end{abstract}

\section{Introduction}

Global function optimization with stochastic (zeroth-order) query oracles is an important problem in optimization, machine learning and statistics.
To optimize an unknown bounded function $f:\mathcal X\mapsto\mathbb R$ defined on a known compact $d$-dimensional domain $\mathcal X\subseteq\mathbb R^d$,
 the data analyst makes $n$ \emph{active} queries $x_1,\ldots,x_n\in\mathcal X$ and observes
\begin{equation}
y_t = f(x_t) + w_t, \;\;\;\;\;\; w_t\overset{i.i.d.}{\sim} \mathcal N(0, 1),\footnote{The exact distribution of the independent noise variables $\varepsilon_t$ is not important, and our results can be generalized to sub-Gaussian noise variables as well.}\;\; t=1,\ldots,n.
\label{eq:model}
\end{equation}
The queries $x_1,\ldots,x_t$ are \emph{active} in the sense that the selection of $x_t$ can depend on the previous queries and their responses $x_1,y_1,\ldots,x_{t-1},y_{t-1}$.
After $n$ queries, an estimate $\hat x_n\in\mathcal X$ is produced that approximately minimizes the unknown function $f$.
{
Such ``active query'' models are relevant in a broad range of (noisy) global optimization applications, for instance in hyper-parameter tuning of machine learning algorithms \citep{rasmussen2006gaussian} and sequential design
in material synthesis experiments where the goal is to maximize strengths of the produced materials \citep{reeja2012microwave,nakamura2017design}.
We refer the readers to Section~\ref{subsec:active} for a rigorous formulation of the active query model and contrast it with the classical passive query model.
}

The error of the estimate $\hat x_n$ is measured by the difference of $f(\hat x_n)$ and the \emph{global minimum} of $f$:
\begin{equation}
\sL(\hat x_n;f) := f(\hat x_n) - f^* \;\;\;\;\;\;\text{where}\;\; f^* := \inf_{x\in\mathcal X}f(x).
\end{equation}
To simplify our presentation, throughout the paper we take the domain $\mathcal X$ to be the $d$-dimensional unit cube $[0,1]^d$,
while our results can be easily generalized to other compact domains satisfying minimal regularity conditions.

When $f$ belongs to a smoothness class, say the H\"{o}lder class with exponent $\alpha$, a straightforward global optimization method is to first sample $n$ points uniformly at random from $\mathcal X$
and then construct nonparametric estimates $\hat f_n$ of $f$ using nonparametric regression methods such as (high-order) kernel smoothing or local polynomial regression \citep{tsybakov2009introduction,fan1996local}.
Classical analysis shows that the sup-norm reconstruction error $\|\hat f_n-f\|_\infty = \sup_{x\in\mathcal X}|\hat f_n(x)-f(x)|$
can be upper bounded by $\tilde O_\mP(n^{-\alpha/(2\alpha+d)})$\footnote{In the $\tilde O(\cdot)$ or $\tilde O_\mP(\cdot)$ notation we drop poly-logarithmic dependency on $n$}.
This global reconstruction guarantee 
then implies an $\tilde O_\mP(n^{-\alpha/(2\alpha+d)})$ upper bound on $\sL(\hat x_n;f)$ by 
considering $\hat x_n\in\mathcal X$ such that $\hat f_n(\hat x_n) = \inf_{x\in\mathcal X}\hat f_n(x)$
(such an $\hat x_n$ exists because $\mathcal X$ is closed and bounded).
Formally, we have the following proposition (proved in the Appendix) that converts a global reconstruction guarantee into an upper bound on optimization error:
\begin{proposition}
Suppose $\hat f_n(\hat x_n) = \inf_{x\in\mathcal X}\hat f_n(x)$. Then $\sL(\hat x_n;f)\leq 2\|\hat f_n-f\|_\infty$.
\label{prop:reduction}
\end{proposition}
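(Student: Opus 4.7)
The plan is to run a standard three-term ``add and subtract'' argument, using only the defining property $\hat f_n(\hat x_n)\le \hat f_n(x)$ for all $x\in\mathcal X$ together with the definition of the sup-norm error $\varepsilon:=\|\hat f_n-f\|_\infty$.

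First I would pick any $x\in\mathcal X$ and write
\[
f(\hat x_n)-f(x) \;=\; \bigl[f(\hat x_n)-\hat f_n(\hat x_n)\bigr] \;+\; \bigl[\hat f_n(\hat x_n)-\hat f_n(x)\bigr] \;+\; \bigl[\hat f_n(x)-f(x)\bigr].
\]
The first bracket is at most $|f(\hat x_n)-\hat f_n(\hat x_n)|\le \varepsilon$; the third bracket is similarly at most $\varepsilon$; the middle bracket is nonpositive by the hypothesis that $\hat x_n$ minimizes $\hat f_n$ over $\mathcal X$. Therefore $f(\hat x_n)-f(x)\le 2\varepsilon$ for every $x\in\mathcal X$.

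Then I would take the infimum of the right-hand side over $x\in\mathcal X$. If the infimum $f^*$ is attained (which is the case here since $\mathcal X=[0,1]^d$ is compact and $f$ will be Hölder-continuous in our applications), one simply substitutes a minimizer $x^*$. Otherwise, I would pick a minimizing sequence $x^{(k)}\in\mathcal X$ with $f(x^{(k)})\to f^*$, apply the bound above at each $x^{(k)}$, and pass to the limit $k\to\infty$ to conclude $f(\hat x_n)-f^*\le 2\varepsilon$, which is exactly $\sL(\hat x_n;f)\le 2\|\hat f_n-f\|_\infty$.

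There is essentially no hard step here: the only mild subtlety is the possibility that $f^*$ is not attained, which is handled by the minimizing-sequence argument above. Everything else is the triangle inequality plus the optimality of $\hat x_n$ for $\hat f_n$.
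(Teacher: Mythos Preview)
Your proof is correct and follows essentially the same approach as the paper's: both bound $f(\hat x_n)-f(x^*)$ by replacing $f$ with $\hat f_n$ at a cost of $\|\hat f_n-f\|_\infty$ on each term and then use the optimality of $\hat x_n$ for $\hat f_n$. Your treatment is in fact slightly more careful than the paper's, since you explicitly handle the case where $f^*$ may not be attained via a minimizing sequence, whereas the paper simply assumes a minimizer $x^*$ exists.
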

Typically, fundamental limits on the optimal optimization error are understood through the lens of \emph{minimax analysis} where the object of study is the (global) minimax risk: 
\begin{equation}
\inf_{\hat x_n}\sup_{f\in\mathcal F} \mathbb E_f \sL(\hat x_n,f),
\label{eq:minimax}
\end{equation}
where $\mathcal F$ is a certain smoothness function class such as the H\"{o}lder class.
Although optimization appears to be easier than global reconstruction, we show in this paper that the $n^{-\alpha/(2\alpha+d)}$ 
rate is \emph{not} improvable in the global minimax sense in Eq.~(\ref{eq:minimax}) over H\"{o}lder classes.
Such a surprising phenomenon was also noted in previous works \citep{bull2011convergence,scarlett2017lower,hazan2017hyperparameter} for related problems.
On the other hand, extensive empirical evidence suggests that non-uniform/active allocations of query points can significantly reduce optimization error in practical global optimization of smooth, non-convex functions \citep{rasmussen2006gaussian}.
This raises the interesting question of understanding, from a theoretical perspective, under what conditions/in what scenarios is global optimization of smooth functions \emph{easier}
than their reconstruction, and the power of \emph{active/feedback-driven} queries that play important roles in global optimization.

In this paper, we propose a theoretical framework that partially answers the above questions.
In contrast to classical \emph{global} minimax analysis of nonparametric estimation problems, we adopt a \emph{local analysis}
which characterizes the optimal convergence rate of optimization error when the underlying function $f$ is within the neighborhood of a ``reference'' function $f_0$.
(See Section~\ref{subsec:local} for the rigorous local minimax formulation considered in this paper.)
%
Our main results are to characterize the local convergence rates $R_n(f_0)$ for a wide range of reference functions $f_0\in\mathcal F$.
More specifically, our contributions can be summarized as follows:
\begin{enumerate}[leftmargin=*]
\item We design an iterative (active) algorithm whose optimization error $\sL(\hat x_n;f)$ converges at a rate of $R_n(f_0)$ 
depending on the reference function $f_0$.
When the level sets of $f_0$ satisfy certain regularity and polynomial growth conditions, 
the local rate $R_n(f_0)$ can be upper bounded by
$R_n(f_0)=\tilde O(n^{-\alpha/(2\alpha+d-\alpha\beta)})$,
where $\beta\in[0,d/\alpha]$ is a parameter depending on $f_0$ that characterizes the volume growth of the \emph{level sets} of the reference function $f_0$.
(See assumption (A2), Proposition \ref{prop:upper-polynomial-growth} and Theorem \ref{thm:upper} for details).
The rate matches the global minimax convergence $n^{-\alpha/(2\alpha+d)}$ for worst-case $f_0$ where $\beta=0$,
but has the potential of being much faster when $\beta>0$. We emphasize that our algorithm has no knowledge of the reference function $f_0$ and achieves this rate adaptively. 

\item We prove \emph{local} minimax lower bounds that match the $n^{-\alpha/(2\alpha+d-\alpha\beta)}$ upper bound, up to logarithmic factors in $n$.
More specifically, we show that \emph{even if $f_0$ is known}, no (active) algorithm can estimate $f$ in close neighborhoods of $f_0$ at a rate faster than $n^{-\alpha/(2\alpha+d-\alpha\beta)}$.
We further show that, if active queries are not available and queries $x_1,\ldots,x_n$ are i.i.d.~uniformly sampled from $\mathcal X$, then the $n^{-\alpha/(2\alpha+d)}$ global minimax rate also applies locally regardless of how large $\beta$ is. 
Thus, there is an explicit gap between local minimax rates of active and uniform query models when $\beta$ is large.

\item In the special case when $f$ is \emph{convex}, the global optimization problem is usually referred to as \emph{zeroth-order convex optimization} and this problem has been widely studied
\citep{nemirovski1983problem,flaxman2005online,agarwal2010optimal,jamieson2012query,agarwal2013stochastic,bubeck2016kernel}.
Our results imply that, when $f_0$ is \emph{strongly} convex and smooth, the local minimax rate $R_n(f_0)$ is on the order of $\tilde O(n^{-1/2})$,
which matches the convergence rates in \citep{agarwal2010optimal}.
Additionally, our negative results (Theorem \ref{thm:lower}) indicate that the $n^{-1/2}$ rate cannot be achieved if $f_0$ is merely convex,
which seems to contradict $n^{-1/2}$ results in \citep{agarwal2013stochastic,bubeck2016kernel} that do not require strong convexity of $f$.
However, it should be noted that mere convexity of $f_0$ does \emph{not} imply convexity of $f$ in a neighborhood of $f_0$ (e.g., $\|f-f_0\|_\infty\leq\varepsilon$).
Our results show significant differences in the intrinsic difficulty of zeroth-order optimization of convex and near-convex functions.


\end{enumerate}




\begin{figure}[t]
\centering
\includegraphics[width=7.5cm]{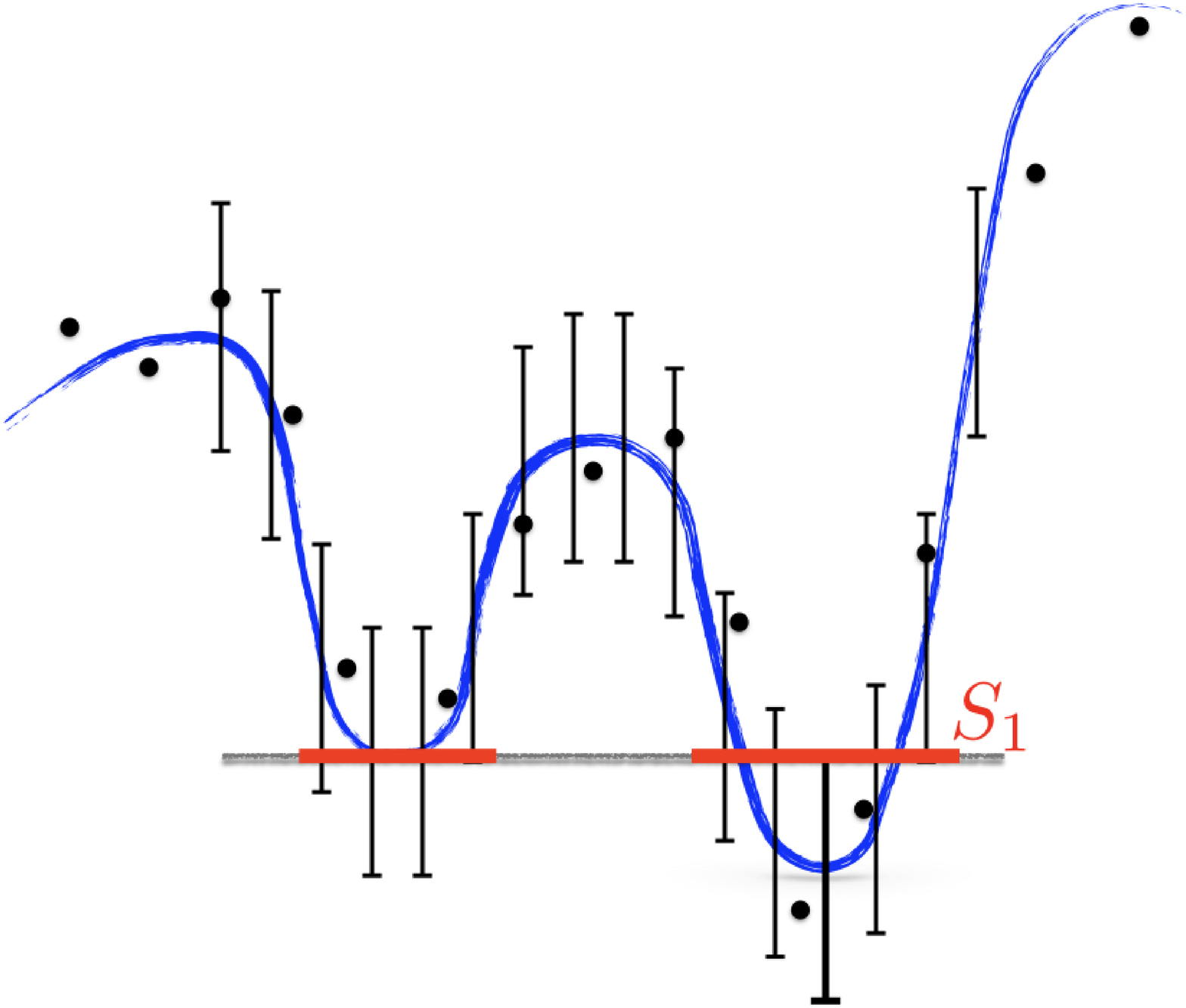}
\includegraphics[width=7.5cm]{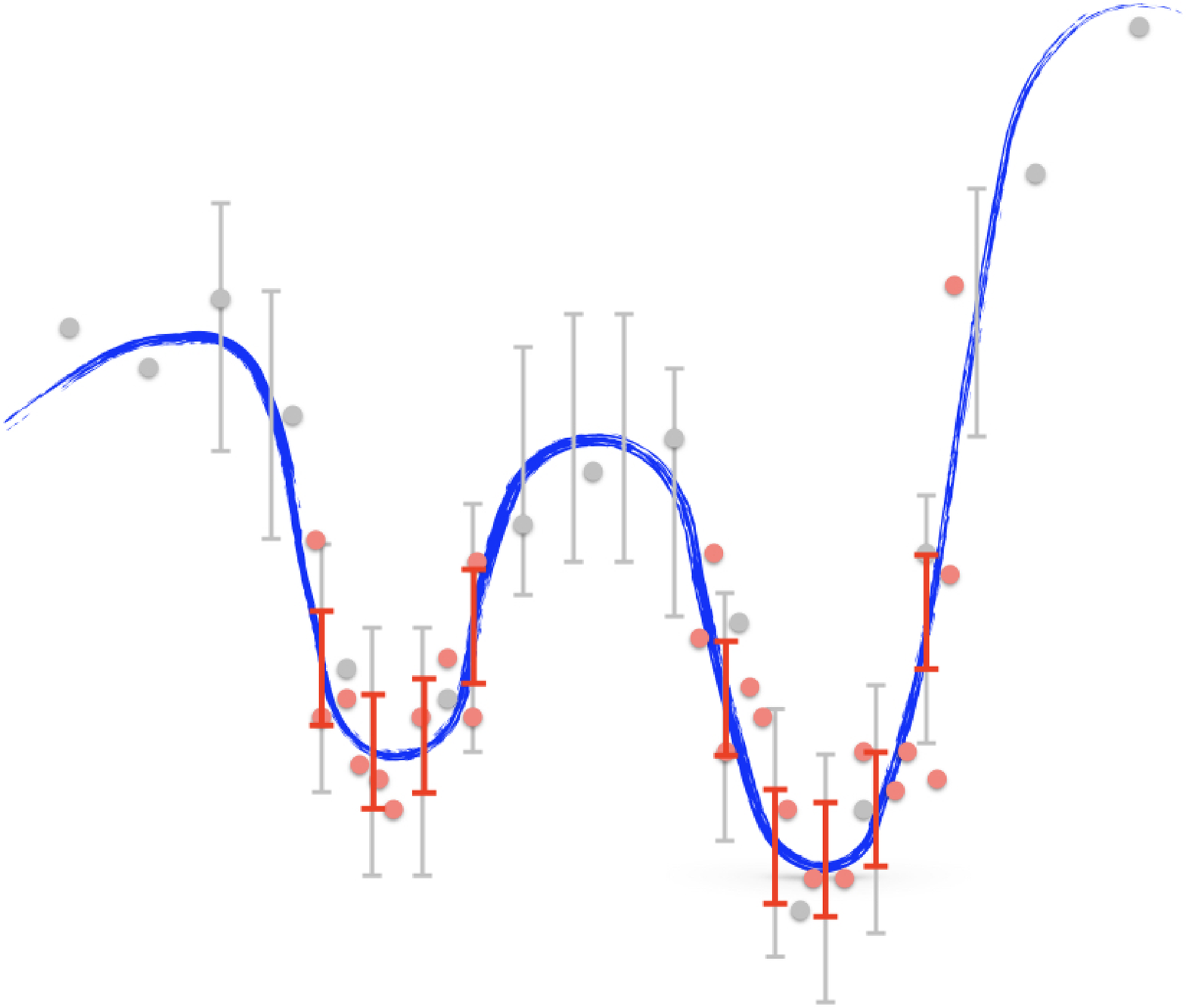}
\caption{\small\noindent Informal illustrations of Algorithm \ref{fig:main-alg}.
Solid blue curves depict the underlying function $f$ to be optimized, black and red solid dots denote the query points and their responses $\{(x_t,y_t)\}$,
and black/red vertical line segments correspond to uniform confidence intervals on function evaluations constructed using current batch of data observed.
The left figure illustrates the first epoch of our algorithm, where query points are uniformly sampled from the entire domain $\mathcal X$.
Afterwards, sub-optimal locations based on constructed confidence intervals are removed, and a shrinkt ``candidate set'' $S_1$ is obtained.
The algorithm then proceeds to the second epoch, illustrated in the right figure, where query points (in red) are sampled only from the restricted candidate set  
and shorter confidence intervals (also in red) are constructed and updated.
The procedure is repeated until $O(\log n)$ epochs are completed.
}
\label{fig:draws}
\end{figure}

%

\subsection{Related Work}

\emph{Global optimization},  known variously as \emph{black-box optimization}, \emph{Bayesian optimization} and the 
\emph{continuous-armed bandit},
has a long history in the optimization research community \citep{kan1987stochastic1,kan1987stochastic2}
and has also received a significant amount of recent interest in statistics and machine learning
\citep{hazan2017hyperparameter,rasmussen2006gaussian,bubeck2011x,malherbe2016ranking,malherbe2017global,bull2011convergence}.

Among the existing works, \citep{malherbe2016ranking,malherbe2017global} are perhaps the closest to our paper in terms of analytical perspectives.
Both papers impose additional assumptions on the level sets of the underlying function to obtain an improved convergence rate.
However, several important differences exist. 
First, the level set assumptions considered in the mentioned references 
are rather restrictive and essentially require the underlying function to be \emph{uni-modal},
while our assumptions are much more flexible and apply to multi-modal functions as well.
In addition, \cite{malherbe2016ranking,malherbe2017global} considered a \emph{noiseless} setting in which exact function evaluations $f(x_t)$ can be obtained,
while our paper studies the noise corrupted model in Eq.~(\ref{eq:model}) for which vastly different convergence rates are derived.
Finally, no matching lower bounds were proved in \citep{malherbe2016ranking,malherbe2017global}. 

The (stochastic) global optimization problem is similar to \emph{mode estimation} of either densities or regression functions,
which has a rich literature \citep{kiefer1952stochastic,purzen1962estimation,chen1988lower}.
An important difference between statistical mode estimation and global optimization is the way sample/query points $x_1,\ldots,x_n\in\mathcal X$ are distributed:
in mode estimation it is customary to assume the samples are independently and identically distributed,
while in global optimization sequential designs of samples/queries are allowed.
Furthermore, to estimate/locate the mode of an unknown density or regression function, such a mode has to be well-defined;
on the other hand, producing an estimate $\hat x_n$ with small $\sL(\hat x_n,f)$ is easier and results in weaker conditions imposed on the underlying function.

Methodology-wise, our proposed algorithm is conceptually similar to the abstract \emph{Pure Adaptive Search (PAS)} framework proposed and analyzed in \citep{zabinsky1992pure}.
The iterative procedure also resembles disagreement-based active learning methods \citep{balcan2009agnostic,dasgupta2008general,hanneke2007bound}
and the ``successive rejection'' algorithm in bandit problems \citep{even2006action}.
The intermediate steps of candidate point elimination can also be viewed as sequences of level set estimation problems \citep{polonik1995measuring,rigollet2009optimal,singh2009adaptive}
or cluster tree estimation \citep{chaudhuri2014consistent,balakrishnan2013cluster}
with active queries.


Another line of research has focused on \emph{first-order} optimization of quasi-convex or non-convex functions \citep{nesterov2006cubic,hazan2015beyond,ge2015escaping,agarwal2017finding,carmon2017convex,zhang2017hitting},
in which exact or unbiased evaluations of function \emph{gradients} are available at query points $x\in\mathcal X$.
\cite{zhang2017hitting} considered a Cheeger's constant restriction on level sets which is similar to our level set regularity assumptions (A2 and A2').
\cite{duchi2016local,duchi2016asymptotics} studied local minimax rates of first-order optimization of convex functions.
First-order optimization differs significantly from our setting because unbiased gradient estimation is generally impossible in the model of Eq.~(\ref{eq:model}).
Furthermore, most works on (first-order) non-convex optimization focus on convergence to stationary points or local minima, while 
we consider convergence to global minima.



\section{Background and Notation}
%
%

We first review standard asymptotic notation that will be used throughout this paper. For two sequences $\{a_n\}_{n=1}^{\infty}$ and $\{b_n\}_{n=1}^{\infty}$,
we write $a_n=O(b_n)$ or $a_n\lesssim b_n$ if $\limsup_{n\to\infty} |a_n|/|b_n| <\infty$, or equivalently $b_n=\Omega(a_n)$ or $b_n\gtrsim a_n$.
Denote $a_n=\Theta(b_n)$ or $a_n\asymp b_n$ if both $a_n\lesssim b_n$ and $a_n\gtrsim b_n$ hold.
We also write $a_n=o(b_n)$ or equivalently $b_n=\omega(a_n)$ if $\lim_{n\to\infty}|a_n|/|b_n|=0$.
For two sequences of random variables $\{A_n\}_{n=1}^\infty$ and $\{B_n\}_{n=1}^\infty$, denote $A_n=O_\mP(B_n)$
if for every $\epsilon>0$, there exists $C>0$ such that $\limsup_{n\to\infty}\Pr[|A_n|> C|B_n|]\leq\epsilon$. 
For $r>0$, $1\leq p\leq\infty$ and $x\in\mathbb R^d$, we denote $B_r^p(x) := \{z\in\mathbb R^d: \|z-x\|_p\leq r\}$ as the $d$-dimensional $\ell_p$-ball of radius $r$ centered at $x$,
where the vector $\ell_p$ norm is defined as $\|x\|_p := (\sum_{j=1}^d{|x_j|^p})^{1/p}$ for $1\leq p<\infty$ and $\|x\|_\infty := \max_{1\leq j\leq d}|x_j|$.
For any subset $S\subseteq\mathbb R^d$ we denote by $B_r^p(x;S)$ the set $B_r^p(x)\cap S.$


\subsection{Passive and Active Query Models}\label{subsec:active}

Let $U$ be a known random quantity defined on a probability space $\mathcal U$.
The following definitions characterize all passive and active optimization algorithms:
\begin{definition}[The passive query model]
Let $x_1,\ldots,x_n$ be i.i.d.~points uniformly sampled on $\mathcal X$ and $y_1,\ldots,y_n$
be observations from the model Eq.~(\ref{eq:model}).
A \emph{passive optimization algorithm} $\mathcal A$ with $n$ queries is parameterized by a mapping $\phi_n:(x_1,y_1,\ldots,x_n,y_n,U)\mapsto\hat x_n$
that maps the i.i.d.~observations $\{(x_i,y_i)\}_{i=1}^n$ to an estimated optimum $\hat x_n\in\mathcal X$, potentially randomized by $U$.
\label{defn:passive}
\end{definition}

\begin{definition}[The active query model]
An \emph{active optimization algorithm} can be parameterized by mappings $(\chi_1,\ldots,\chi_n,\phi_n)$, 
where for $t=1,\ldots,n$, 
\begin{align*}
\chi_t:(x_1,y_1,\ldots,x_{t-1},y_{t-1},U)\mapsto x_t
\end{align*}
produces a query point $x_t\in\mathcal X$ based on previous observations $\{(x_i,t_i)\}_{i=1}^{t-1}$,
and 
\begin{align*}
\phi_n:(x_1,y_1,\ldots,x_n,y_n,U)\mapsto \hat x_n
\end{align*} produces the final estimate.
All mappings $(\chi_1,\ldots,\chi_n,\phi_n)$ can be randomized by $U$.
\label{defn:active}
\end{definition}


\subsection{Local Minimax Rates}\label{subsec:local}

We use the classical \emph{local minimax analysis} \citep{van1998asymptotic} to understand the fundamental information-theoretical limits of noisy global optimization
of smooth functions.
On the upper bound side, we seek (active) estimators $\hat x_n$ such that
\begin{equation}
\sup_{f_0\in\Theta}\sup_{f\in\Theta', \|f-f_0\|_\infty \leq \varepsilon_n(f_0)} \Pr_f\left[\sL(\hat x_n;f)\geq C_1\cdot R_n(f_0)\right] \leq 1/4,
\label{eq:locally-minimax-ub}
\end{equation}
where $C_1>0$ is a positive constant.
Here $f_0\in\Theta$ is referred to as the \emph{reference function}, and $f\in\Theta'$ is the true underlying function which is assumed to be ``near'' $f_0$.
The minimax convergence rate of $\sL(\hat x_n;f)$ is then characterized \emph{locally} by $R_n(f_0)$ which depends on the reference function $f_0$. The constant of $1/4$ is chosen arbitrarily and any small constant leads to similar conclusions.
To establish negative results (i.e., locally minimax lower bounds), in contrast to the upper bound formulation,
we assume the potential active optimization estimator $\hat x_n$ has \emph{perfect knowledge} about the reference function $f_0\in\Theta$.
We then prove locally minimax lower bounds of the form
\begin{equation}
\inf_{\hat x_n}\sup_{f\in\Theta',\|f-f_0\|_\infty \leq \varepsilon_n(f_0)}
\Pr_f\left[\sL(\hat x_n;f)\geq C_2\cdot R_n(f_0)\right] \geq 1/3,
\label{eq:locally-minimax-lb}
\end{equation}
where $C_2>0$ is another positive constant and $\varepsilon_n(f_0), R_n(f_0)$ are desired local convergence rates for functions near the reference $f_0$.

Although in some sense classical, the local minimax definition we propose warrants further discussion.
\begin{enumerate}[leftmargin=*]
\item {\bf Roles of $\Theta$ and $\Theta'$: } The reference function $f_0$ and the true functions $f$ are assumed to belong to different but closely related function classes $\Theta$ and $\Theta'$.
In particular, in our paper $\Theta\subseteq\Theta'$, meaning that less restrictive assumptions are imposed on the true underlying function $f$ 
compared to those imposed on the reference function $f_0$ on which $R_n$ and $\varepsilon_n$ are based.

\item {\bf Upper Bounds: } It is worth emphasizing that the estimator $\widehat{x}_n$ has no knowledge of the reference function $f_0$. From the perspective of upper bounds, we can consider the simpler task of producing $f_0$-dependent bounds (eliminating the second supremum) to 
instead study the (already interesting) quantity:
\begin{align*}
\sup_{f_0\in\Theta}\Pr_{f_0}\left[\sL(\hat x_n;f_0)\geq C_1R_n(f_0)\right] \leq 1/4.
\end{align*}
As indicated above we maintain the double-supremum in the definition because fewer assumptions are imposed directly on the true underlying function $f$, and further because it allows 
to more directly compare our upper and lower bounds.

\item {\bf Lower Bounds and the 
choice of the ``localization radius'' $\varepsilon_n(f_0)$: } Our lower bounds allow the estimator knowledge of the reference function (this makes establishing the lower bound more challenging). Eq.~(\ref{eq:locally-minimax-lb}) implies that no estimator $\hat x_n$ can effectively optimize a function $f$ close to $f_0$ beyond the convergence rate of $R_n(f_0)$, even if perfect knowledge of the reference function $f_0$ is available a priori. The $\varepsilon_n(f_0)$ parameter that decides the ``range'' in which local minimax rates apply is taken to be on the same order as the actual local rate $R_n(f_0)$
in this paper. This is (up to constants) the smallest radius for which we can hope to obtain non-trivial lower-bounds: if we consider a much smaller radius than $R_n(f_0)$ then the trivial estimator which outputs the minimizer of the reference function would achieve a faster rate than $R_n(f_0)$. Selecting the smallest possible radius makes establishing the lower bound most challenging but provides a refined picture of the complexity of zeroth-order optimization.

%

\end{enumerate}

\section{Main Results}
With this background in place we now turn our attention to our main results. We begin by collecting our assumptions about the true underlying function and the reference function in Section~\ref{sec:ass}. We state and discuss the consequences of our upper and lower bounds in~Sections~\ref{subsec:upper} and~\ref{subsec:lower} respectively. We defer most technical proofs
to the Appendix and turn our attention to our optimization algorithm in~Section~\ref{sec:algorithm}.

\subsection{Assumptions}
\label{sec:ass}

We first state and motivate assumptions that will be used. 
The first assumption states that $f$ is locally H\"{o}lder smooth on its level sets.
\begin{enumerate}[leftmargin=0.5in]
\item[(A1)] 
There exist constants $\kappa,\alpha,M>0$ such that $f$ restricted on $\mathcal X_{f,\kappa}:=\{x\in\mathcal X: f(x)\leq f^*+\kappa\}$
belongs to the H\"{o}lder class $\Sigma^\alpha(M)$, meaning that $f$ is $k$-times differentiable on $\mathcal X_{f,\kappa}$ and furthermore
for any $x,x'\in\mathcal X_{f,\kappa}$,
\footnote{the particular $\ell_\infty$ norm is used for convenience only and can be replaced by any equivalent vector norms.}
\begin{equation}
\sum_{j=0}^k\sum_{\alpha_1+\ldots+\alpha_d=j}|f^{(\vct\alpha,j)}(x)| + \sum_{\alpha_1+\ldots+\alpha_d=k}\frac{|f^{(\vct\alpha,k)}(x)-f^{(\vct\alpha,k)}(x')|}{\|x-x'\|_\infty^{\alpha-k}}\leq M. 
\label{eq:holder}
\end{equation}
Here $k=\lfloor \alpha\rfloor$ is the largest integer lower bounding $\alpha$ and $f^{(\vct\alpha,j)}(x) := \partial^j f(x)/\partial x_1^{\alpha_1}\ldots\partial x_d^{\alpha_d}$.
\end{enumerate}

{
We use $\Sigma_\kappa^\alpha(M)$ to denote the class of all functions satisfying (A1).
We remark that (A1) is weaker than the standard assumption that $f$ on its entire domain $\mathcal X$ belongs to the H\"{o}lder class $\Sigma^\alpha(M)$.
This is because places with function values larger than $f^*+\kappa$ can be easily detected and removed by a pre-processing step.
We give further details of the pre-processing step in Section~\ref{subsec:relax}.
}


Our next assumption concern the ``regularity'' of the \emph{level sets} of the ``reference'' function $f_0$. 
Define $L_{f_0}(\epsilon) := \{x\in\mathcal X: f_0(x)\leq f_0^*+\epsilon\}$ as the $\epsilon$-level set of $f_0$,
and $\mu_{f_0}(\epsilon) := \lambda(L_{f_0}(\epsilon))$ as the Lebesgue measure of $L_{f_0}(\epsilon)$,
also known as the \emph{distribution function}.
Define also $N(L_{f_0}(\epsilon),\delta)$ as the smallest number of $\ell_2$-balls of radius $\delta$ that cover $ L_{f_0}(\epsilon)$.

\begin{enumerate}[leftmargin=0.5in]
\item[(A2)]There exist constants $c_0>0$ and $C_0>0$ such that $N(L_{f_0}(\epsilon),\delta) \leq C_0[1 + \mu_{f_0}(\epsilon)\delta^{-d}]$ 
for all $\epsilon,\delta\in(0,c_0]$.
\end{enumerate}
We use $\Theta_{\mat C}$ to denote all functions that satisfy (A2) with respect to parameters $\mat C=(c_0,C_0)$.


At a higher level, the regularity condition (A2) assumes that the level sets are sufficiently ``regular'' such that covering them with small-radius balls does not require significantly larger total volumes.
For example, consider a perfectly regular case of $L_{f_0}(\epsilon)$ being the $d$-dimensional $\ell_2$ ball of radius $r$:
$L_{f_0}(\epsilon) = \{x\in\mathcal X: \|x-x^*\|_2\leq r\}$.
Clearly, $\mu_{f_0}(\epsilon) \asymp r^d$.
In addition, the $\delta$-covering number in $\ell_2$ of $L_{f_0}(\epsilon)$ is on the order of $1+(r/\delta)^d \asymp 1 + \mu_{f_0}(\epsilon)\delta^{-d}$, which satisfies the scaling in (A2).

{When (A2) holds, uniform confidence intervals of $f$ on its level sets are easy to construct because little statistical efficiency is lost by slightly enlarging the level sets 
so that complete $d$-dimensional cubes are contained in the enlarged level sets.
On the other hand, when regularity of level sets fails to hold such nonparametric estimation can be very difficult or even impossible.
As an extreme example, suppose the level set $L_{f_0}(\epsilon)$ consists of $\mathfrak n$ standalone and well-spaced points in $\mathcal X$: the Lebesgue measure of $L_{f_0}(\epsilon)$ would be zero,
but at least $\Omega(\mathfrak n)$ queries are necessary to construct uniform confidence intervals on $L_{f_0}(\epsilon)$.
It is clear that such $L_{f_0}(\epsilon)$ violates (A2), because $N(L_{f_0}(\epsilon),\delta)\geq \mathfrak n$ as $\delta\to 0^+$ but $\mu_{f_0}(\epsilon) = 0$.
}

\subsection{Upper Bound}\label{subsec:upper}


%

The following theorem is our main result that upper bounds the local minimax rate of noisy global optimization with active queries.
\begin{theorem} 
For any $\alpha,M,\kappa,c_0,C_0>0$ and $f_0\in\Sigma_\kappa^\alpha(M)\cap\Theta_{\mat C}$, where $\mat C=(c_0,C_0)$, define
\begin{equation}
\varepsilon_n^\u(f_0) := \sup\left\{\varepsilon>0: \varepsilon^{-(2+d/\alpha)}\mu_{f_0}(\varepsilon) \geq n/\log^\omega n\right\},
\end{equation}
where $\omega > 5+d/\alpha$ is a large constant.
Suppose also that $\varepsilon_n^\u(f_0)\to 0$ as $n\to\infty$.
Then for sufficiently large $n$, there exists an estimator $\hat x_n$ with access to $n$ active queries $x_1,\ldots,x_n\in\mathcal X$,
 a constant $C_R>0$ depending only on $\alpha,M,\kappa,c,c_0,C_0$ and
 a constant $\gamma>0$ depending only on $\alpha$ and $d$ such that
\begin{equation}
\sup_{f_0\in\Sigma_\kappa^\alpha(M)\cap\Theta_{\mat C}}\sup_{\substack{f\in\Sigma_\kappa^\alpha(M),\\ \|f-f_0\|_\infty\leq \varepsilon_n^\u(f_0)}} \Pr_f\left[\sL(\hat x_n,f) > C_R\log^\gamma n \cdot (\varepsilon^\u_n(f_0)+n^{-1/2})\right] \leq 1/4.
\end{equation}
\label{thm:upper}
\end{theorem}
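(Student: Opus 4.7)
The approach is to analyze an epoch-based active elimination algorithm of the kind depicted in Figure~\ref{fig:draws}. Partition the budget $n$ into $K = \Theta(\log n)$ epochs of size $n_k = n/K$, and maintain a nested sequence of candidate sets $\mathcal X = S_0 \supseteq S_1 \supseteq \cdots \supseteq S_K$. In epoch $k$, draw $n_k$ fresh samples uniformly from $S_{k-1}$, fit a local polynomial regression estimator $\hat f_k$ of degree $\lfloor\alpha\rfloor$, and construct a uniform confidence band of half-width $u_k$ on $S_{k-1}$. Then eliminate candidates whose lower confidence envelope exceeds the smallest upper envelope,
\begin{equation*}
S_k = \Bigl\{x\in S_{k-1} : \hat f_k(x) - u_k \le \min_{x'\in S_{k-1}} \hat f_k(x') + u_k\Bigr\},
\end{equation*}
and output $\hat x_n \in \argmin_{x\in S_K} \hat f_K(x)$. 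A standard argument shows that the true minimizer $x^*$ of $f$ is never eliminated on the good event, and Proposition~\ref{prop:reduction} applied within $S_K$ then yields $\sL(\hat x_n;f) \le 2 u_K$.

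The per-epoch confidence width comes from the sup-norm rate for local polynomial regression over H\"older classes: with $n_k$ i.i.d.\ uniform samples on a region of Lebesgue volume $V_{k-1}$, the estimator $\hat f_k$ satisfies $\|\hat f_k - f\|_{\infty,S_{k-1}} \lesssim \log^{c} n\cdot (V_{k-1}/n_k)^{\alpha/(2\alpha+d)}$ with failure probability $\le 1/(4K)$. Assumption (A1) provides the smoothness; the crucial role of assumption (A2) is to certify that $S_{k-1}$ (a small inflation of a level set of $f_0$) can be covered by order $V_{k-1}h^{-d}$ balls of the regression bandwidth $h$, so the irregular geometry of the candidate set does not inflate the variance or break the uniform regression rate. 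Since $\|f-f_0\|_\infty \le \varepsilon_n^\u(f_0)$, level sets of $f$ differ from those of $f_0$ only by an $\varepsilon_n^\u(f_0)$ shift, so an inductive argument gives $S_k \subseteq L_{f_0}(C u_k)$ with high probability, and hence $V_k \le \mu_{f_0}(C u_k)$.

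Substituting this into the per-epoch bound yields the recursion
\begin{equation*}
u_{k+1}^{\,2+d/\alpha} \;\lesssim\; \frac{\mu_{f_0}(C u_k)}{n/\log^{c''} n}.
\end{equation*}
Because $\mu_{f_0}$ is monotone and the map $u\mapsto (\mu_{f_0}(Cu)/n)^{\alpha/(2\alpha+d)}$ contracts on $(0,c_0]$ under assumption (A2), after $K = \Theta(\log n)$ epochs the $u_k$ stabilize (up to logarithmic slack) at the scale $\varepsilon^\star$ satisfying $(\varepsilon^\star)^{-(2+d/\alpha)}\mu_{f_0}(\varepsilon^\star) \asymp n/\log^{\omega} n$, which is exactly $\varepsilon_n^\u(f_0)$. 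The additive $n^{-1/2}$ term arises from a final parametric refinement step: once $S_K$ has been pared down, dedicating a constant fraction of the budget to direct Monte Carlo averaging at the surviving candidate cells contributes an $n^{-1/2}$ floor on top of the nonparametric rate $\varepsilon_n^\u(f_0)$.

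The main technical obstacle is coupling the concentration of the nested regression estimators with the random, data-dependent shrinkage of the candidate sets, since the standard sup-norm regression rate is stated for a fixed domain, whereas $S_{k-1}$ depends on all previously drawn samples. This is handled in two steps: (i) sample splitting across epochs, so that conditional on $S_{k-1}$ the epoch-$k$ queries are genuinely i.i.d.\ uniform on $S_{k-1}$; and (ii) a union bound over the (essentially parametric) family of inflated level sets of $f_0$ guaranteed by (A2), which ensures a single confidence band is valid for every realization of $S_{k-1}$ that can arise. Choosing $\omega > 5+d/\alpha$ in the definition of $\varepsilon_n^\u$ is what absorbs the covering-number, union-bound, and regression constants accumulated across the $K$ epochs, and verifying this accounting is the most delicate part of the argument.
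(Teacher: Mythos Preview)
Your overall architecture matches the paper's proof: epoch-based successive elimination, local polynomial regression on the current candidate set, and an inductive argument showing $S_k$ is contained in a level set of $f$ (hence of $f_0$, via $\|f-f_0\|_\infty\le\varepsilon_n^\u(f_0)$) whose scale shrinks geometrically until it hits $\varepsilon_n^\u(f_0)$. Your use of (A2) to control the covering number of the inflated level set, and hence the uniform regression rate on an irregularly shaped region, is exactly the role it plays in the paper.

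Two points where your sketch diverges from the paper are worth flagging. First, the $n^{-1/2}$ floor does \emph{not} come from a separate parametric refinement or Monte Carlo averaging step. In the paper's analysis it falls out of the same bandwidth calculation: the extended sampling region has volume $\lesssim \mu_{f_0}(2\varepsilon_{\tau-1}) + [\rho_{\tau-1}^*]^d$ (this is where (A2) enters), and solving for the resulting bandwidth gives $\rho_{\tau-1}^* \lesssim \max\{(\mu_{f_0}(2\varepsilon_{\tau-1})/n_0)^{1/(2\alpha+d)}, n_0^{-1/(2\alpha)}\}$, so $\eta\lesssim[\rho_{\tau-1}^*]^\alpha$ carries an unavoidable $n_0^{-1/2}$ term once the level set becomes smaller than a single bandwidth cell. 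Your proposed averaging step would need to explain why only $O(1)$ cells survive, which is not immediate. Second, the sentence ``the map $u\mapsto (\mu_{f_0}(Cu)/n)^{\alpha/(2\alpha+d)}$ contracts on $(0,c_0]$ under assumption (A2)'' misattributes the mechanism: (A2) is purely a covering-number condition and says nothing about the growth of $\mu_{f_0}$. The shrinkage is driven directly by the \emph{definition} of $\varepsilon_n^\u(f_0)$, which guarantees $\mu_{f_0}(\varepsilon)\le \varepsilon^{2+d/\alpha} n/\log^\omega n$ for all $\varepsilon\ge\varepsilon_n^\u(f_0)$; plugging this in shows the per-epoch width is $\le \varepsilon_{\tau-1}/4$, giving geometric halving rather than a general contraction. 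The paper makes this explicit via $\varepsilon_\tau = \max\{\tilde c_0 2^{-\tau}, C_3[\varepsilon_n^\u(f_0)+n^{-1/2}]\log^2 n\}$.
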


\begin{remark}
Unlike the (local) smoothness class $\Sigma_\kappa^\alpha(M)$, the additional function class $\Theta_{\mat C}$ that encapsulates (A2) is imposed only on the ``reference'' function $f_0$ but not the true function $f$ to be estimated. 
This makes the assumptions considerably weaker because the true function $f$ may violate either or both (A2) while our results remain valid.
\label{rem:f0}
\end{remark}

\begin{remark}
The estimator $\hat x_n$ does \emph{not} require knowledge of parameters $\kappa,c_0,C_0$ or $\varepsilon_n^\u(f_0)$, and automatically adapts to them, as shown in the next section.
It however requires knowledge of $\alpha$ and $M$, parameters of the smooth function class.
Such knowledge is unlikely to be optional, as the key step of building 
honest confidence intervals that adapt to $\alpha$ and/or $M$ is very difficult and in general not possible without additional assumptions, as shown by \cite{low1997nonparametric,cai2004adaptation}.
\label{rem:adaptivity}
\end{remark}

{
\begin{remark}
When the distribution function $\mu_{f_0}(\epsilon)$ does not change abruptly with $\epsilon$ the expression of $\varepsilon_n^\u(f_0)$ can be significantly simplified.
In particular, if for all $\epsilon\in(0,c_0]$ it holds that
\begin{equation}
\mu_{f_0}(\epsilon/\log n) \geq \mu_{f_0}(\epsilon) / [\log n]^{O(1)},
\label{eq:stable-muf}
\end{equation}
then $\varepsilon_n^{\u}(f_0)$ can be upper bounded as
\begin{equation}
\varepsilon_n^\u(f_0) \leq [\log n]^{O(1)}\cdot \sup\left\{\varepsilon>0: \varepsilon^{-(2+d/\alpha)}\mu_{f_0}(\varepsilon)\geq n\right\}.
\label{eq:stable-muf-2}
\end{equation}
It is also noted that if $\mu_{f_0}(\epsilon)$ has a polynomial behavior of $\mu_{f_0}(\epsilon)\asymp \epsilon^\beta$ for some constant $\beta\geq 0$,
then Eq.~(\ref{eq:stable-muf}) is satisfied and so is Eq.~(\ref{eq:stable-muf-2}).
\label{rem:stable-muf}
\end{remark}
}

{
The quantity $\varepsilon_n^\u(f_0)=\inf\{\varepsilon>0: \varepsilon^{-(2+d/\alpha)}\mu_{f_0}(\varepsilon)\geq n/\log^\omega n\}$ is crucial in determining the convergence rate of optimization error of $\hat x_n$
\emph{locally} around the reference function $f_0$.
While the definition of $\varepsilon_n^\u(f_0)$ is mostly implicit and involves solving an inequality concerning the distribution function $\mu_{f_0}(\cdot)$,
we remark that it admits a simple form when $\mu_{f_0}$ has a polynomial growth rate, as shown by the following proposition:
\begin{proposition}
Suppose $\mu_{f_0}(\epsilon) \lesssim \epsilon^\beta$ for some constant $\beta\in[0,2+d/\alpha)$.
Then $\varepsilon_n^\u(f_0) = \tilde O(n^{-\alpha/(2\alpha+d-\alpha\beta)})$.
In addition, if $\beta\in[0,d/\alpha]$ then $\varepsilon_n^\u(f_0)+n^{-1/2} \lesssim \varepsilon_n^\u(f_0) = \tilde O(n^{-\alpha/(2\alpha+d-\alpha\beta)})$.
\label{prop:upper-polynomial-growth}
\end{proposition}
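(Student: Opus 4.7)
The plan is to simply unfold the definition of $\varepsilon_n^\u(f_0)$ and plug in the polynomial bound on $\mu_{f_0}$. Recall that
\begin{equation*}
\varepsilon_n^\u(f_0) = \sup\left\{\varepsilon>0: \varepsilon^{-(2+d/\alpha)}\mu_{f_0}(\varepsilon) \geq n/\log^\omega n\right\}.
\end{equation*}
Under the hypothesis $\mu_{f_0}(\varepsilon) \lesssim \varepsilon^\beta$ with $\beta<2+d/\alpha$, the quantity $\varepsilon^{-(2+d/\alpha)}\mu_{f_0}(\varepsilon)$ is upper-bounded by a constant multiple of $\varepsilon^{\beta-(2+d/\alpha)}$. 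Since the exponent $\beta-(2+d/\alpha)$ is strictly negative, the inequality $\varepsilon^{\beta-(2+d/\alpha)} \geq c^{-1} n/\log^\omega n$ forces $\varepsilon$ to be small.

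First I would solve this inequality explicitly: rearranging gives $\varepsilon \leq (c\log^\omega n/n)^{1/(2+d/\alpha-\beta)}$. Taking the supremum over $\varepsilon$ satisfying the defining inequality therefore yields
\begin{equation*}
\varepsilon_n^\u(f_0) \;\lesssim\; \left(\frac{\log^\omega n}{n}\right)^{\!1/(2+d/\alpha-\beta)} \;=\; \left(\frac{\log^\omega n}{n}\right)^{\!\alpha/(2\alpha+d-\alpha\beta)},
\end{equation*}
which is exactly $\tilde O(n^{-\alpha/(2\alpha+d-\alpha\beta)})$ after absorbing the polylogarithmic factor into the $\tilde O$ notation. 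The only minor subtlety is that the constant in $\mu_{f_0}(\varepsilon)\lesssim\varepsilon^\beta$ gets absorbed into the constant factor; this is routine.

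For the second statement, when $\beta\in[0,d/\alpha]$ I would just compare exponents: the constraint gives $2\alpha+d-\alpha\beta \geq 2\alpha+d-\alpha\cdot(d/\alpha) = 2\alpha$, so
\begin{equation*}
\frac{\alpha}{2\alpha+d-\alpha\beta} \;\leq\; \frac{\alpha}{2\alpha} \;=\; \frac{1}{2}.
\end{equation*}
Consequently $n^{-\alpha/(2\alpha+d-\alpha\beta)} \geq n^{-1/2}$, so up to constants (and log factors) $n^{-1/2}$ is dominated by $\varepsilon_n^\u(f_0)$, giving $\varepsilon_n^\u(f_0)+n^{-1/2} \lesssim \varepsilon_n^\u(f_0)$.

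There is really no hard step here: the whole argument is algebraic manipulation of the implicit definition of $\varepsilon_n^\u(f_0)$. The one thing worth being careful about is that the definition uses a supremum (not an infimum), so one must check that when $\beta < 2+d/\alpha$ and $\mu_{f_0}(\varepsilon)\lesssim\varepsilon^\beta$, the set in the supremum is nonempty for all sufficiently large $n$ and is bounded above by the computed quantity; both follow immediately from monotonicity of $\varepsilon\mapsto\varepsilon^{\beta-(2+d/\alpha)}$ on $(0,\infty)$.
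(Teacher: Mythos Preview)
Your proposal is correct and follows precisely the approach the paper indicates: the paper itself omits the proof, stating only that the proposition ``can be easily verified by solving the system $\varepsilon^{-(2+d/\alpha)}\mu_{f_0}(\varepsilon)\geq n/\log^\omega n$ with the condition $\mu_{f_0}(\epsilon)\lesssim \epsilon^\beta$,'' which is exactly what you do. One small caveat on the second part: from the hypothesis $\mu_{f_0}(\epsilon)\lesssim\epsilon^\beta$ alone you only obtain an \emph{upper} bound on $\varepsilon_n^\u(f_0)$, so the literal inequality $n^{-1/2}\lesssim\varepsilon_n^\u(f_0)$ is not established; however, the intended and operative conclusion---that $\varepsilon_n^\u(f_0)+n^{-1/2}=\tilde O(n^{-\alpha/(2\alpha+d-\alpha\beta)})$---does follow from your exponent comparison, and this is what is used in Theorem~\ref{thm:upper}.
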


Proposition \ref{prop:upper-polynomial-growth} can be easily verified by solving the system $\varepsilon^{-(2+d/\alpha)}\mu_{f_0}(\varepsilon)\geq n/\log^\omega n$
with the condition $\mu_{f_0}(\epsilon)\lesssim \epsilon^\beta$.
We therefore omit its proof.
The following two examples give some simple reference functions $f_0$ that satisfy the $\mu_{f_0}(\epsilon)\lesssim \epsilon^\beta$ condition in Proposition \ref{prop:upper-polynomial-growth}
with particular values of $\beta$.

\begin{example}
The constant function $f_0\equiv 0$ satisfies (A1) through (A3) with $\beta=0$.
\label{exmp:const}
\end{example}

\begin{example}
$f_0\in\Sigma_\kappa^2(M)$ that is \emph{strongly convex}
\footnote{A twice differentiable function $f_0$ is strongly convex if there exists $\sigma>0$ such that $\nabla^2 f_0(x)\succeq \sigma I, \forall x\in\mathcal X$.}
 satisfies (A1) through (A3) with $\beta=d/2$.
 \label{exmp:convex}
\end{example}

{
Example \ref{exmp:const} is simple to verify, as the volume of level sets of the constant function $f_0\equiv 0$ exhibits a phase transition at $\epsilon=0$ and $\epsilon>0$,
rendering $\beta=0$ the only parameter option for which $\mu_{f_0}(\epsilon)\lesssim \epsilon^\beta$.
Example \ref{exmp:convex} is more involved, and holds because the strong convexity of $f_0$ \emph{lower bounds} the growth rate of $f_0$ when moving away from its minimum.
We give a rigorous proof of Example \ref{exmp:convex} in the appendix.
We also remark that $f_0$ does \emph{not} need to be exactly strongly convex for $\beta=d/2$ to hold, and the example is valid for, e.g., piecewise strongly convex functions with 
a constant number of pieces too.
}
}

{
To best interpret the results in Theorem \ref{thm:upper} and Proposition \ref{prop:upper-polynomial-growth}, it is instructive to compare the ``local'' rate $n^{-\alpha/(2\alpha+d-\alpha\beta)}$ with the baseline rate $n^{-\alpha/(2\alpha+d)}$,
which can be attained by reconstructing $f$ in sup-norm and applying Proposition \ref{prop:reduction}.
Since $\beta\geq 0$, the local convergence rate established in Theorem \ref{thm:upper} is never slower,
and the improvement compared to the baseline rate $n^{-\alpha/(2\alpha+d)}$ is dictated by $\beta$, which governs the growth rate of volume of level sets of the reference function $f_0$.
In particular, for functions that grows fast when moving away from its minimum, the parameter $\beta$ is large and therefore the local convergence rate around $f_0$ could be much faster than $n^{-\alpha/(2\alpha+d)}$.

Theorem \ref{thm:upper} also implies concrete convergence rates for special functions considered in Examples \ref{exmp:const} and \ref{exmp:convex}.
For the constant reference function $f_0\equiv 0$, Example \ref{exmp:const} and Theorem \ref{thm:upper} yield that $R_n(f_0) \asymp n^{-\alpha/(2\alpha+d)}$, 
which matches the baseline rate $n^{-\alpha/(2\alpha+d)}$ and suggests that $f_0\equiv 0$ is the worst-case reference function.
This is intuitive, because $f_0\equiv 0$ has the most drastic level set change at $\epsilon\to 0^+$ and therefore small perturbations anywhere of $f_0$ result in changes of the optimal locations.
On the other hand, if $f_0$ is strongly smooth and convex as in Example \ref{exmp:convex}, Theorem \ref{thm:upper} suggests that $R_n(f_0) \asymp n^{-1/2}$, 
which is significantly better than the $n^{-2/(4+d)}$ baseline rate \footnote{Note that $f_0$ being strongly smooth implies $\alpha=2$ in the local smoothness assumption.}
and also matches existing works on zeroth-order optimization of convex functions \citep{agarwal2010optimal}.
The faster rate holds intuitively because strongly convex functions grows fast when moving away from the minimum, which implies small level set changes.
An active query algorithm could then focus most of its queries onto the small level sets of the underlying function, resulting in more accurate local function reconstructions and faster optimization error rate.

Our proof of Theorem \ref{thm:upper} is constructive, by upper bounding the local minimax optimization error of an explicit algorithm. 
At a higher level, the algorithm partitions the $n$ active queries evenly into $\log n$ epochs, and level sets of $f$ are estimated at the end of each epoch
by comparing (uniform) confidence intervals on a dense grid on $\mathcal X$.
It is then proved that the volume of the estimated level sets contracts \emph{geometrically}, until the target convergence rate $R_n(f_0)$ is attained.
The complete proof of Theorem \ref{thm:upper} is placed in Section~\ref{subsec:proof-upper}.
}

\subsection{Lower Bounds}
\label{subsec:lower}

{
We prove local minimax lower bounds that match the upper bounds in Theorem \ref{thm:upper} up to logarithmic terms.
As we remarked in Section~\ref{subsec:local}, in the local minimax lower bound formulation we assume the data analyst has full knowledge of the reference function $f_0$,
which makes the lower bounds stronger as more information is available a priori.

To facilitate such a strong local minimax lower bounds, the following additional condition is imposed on the reference function $f_0$
of which the data analyst has perfect information.
}

\begin{enumerate}[leftmargin=0.5in]
\item[(A2')] 
There exist constants $c_0',C_0'>0$ such that $M(L_{f_0}(\epsilon),\delta) \geq C_0'\mu_{f_0}(\epsilon)\delta^{-d}$ for all $\epsilon,\delta\in(0,c_0']$,
where $M(L_{f_0}(\epsilon),\delta)$ is the maximum number of disjoint $\ell_2$ balls
of radius $\delta$ that can be packed into $L_{f_0}(\epsilon)$.
\end{enumerate}
We denote $\Theta'_{\mat C'}$ as the class of functions that satisfy (A2') with respect to parameters $\mat C'=(c_0',C_0')>0$.
Intuitively, (A2') can be regarded as the ``reverse'' version of (A2), which basically means that (A2) is ``tight''.
%

We are now ready to state our main negative result, which shows, from an information-theoretical perspective, that the upper bound in Theorem \ref{thm:upper} is not improvable.
\begin{theorem}
Suppose $\alpha,c_0,C_0,c_0',C_0'>0$ and $\kappa=\infty$.
Denote $\mat C=(c_0,C_0)$ and $\mat C'=(c_0',C_0')$.
For any $f_0\in\Theta_{\mat C}\cap\Theta'_{\mat C'}$, define
\begin{equation}
\varepsilon^\l_n(f_0) := \sup\left\{\varepsilon > 0: \varepsilon^{-(2+d/\alpha)}\mu_{f_0}(\varepsilon) \geq n\right\}.
\end{equation}
Then there exist constant $M>0$ depending on $\alpha,d,\mat C,\mat C'$ such that,
for any $f_0\in\Sigma_\kappa^\alpha(M/2)\cap\Theta_{\mat C}\cap\Theta_{\mat C'}$, 
\begin{equation}
\inf_{\hat x_n}\sup_{\substack{f\in\Sigma_\kappa^\alpha(M),\\\|f-f_0\|_\infty\leq 2\varepsilon_n^\l(f_0)}} 
\Pr_{f}\left[\sL(\hat x_n;f) \geq \varepsilon_n^\l(f_0)\right] \geq \frac{1}{3}.
\end{equation}
\label{thm:lower}
\end{theorem}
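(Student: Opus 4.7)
Write $\varepsilon = \varepsilon_n^\l(f_0)$, so that $\mu_{f_0}(\varepsilon)\,\varepsilon^{-(2+d/\alpha)} \asymp n$ by definition. The plan is the standard reduction from optimization to multiple-hypothesis testing, followed by a Le Cam argument in which the total variation between hypotheses is controlled via Pinsker's inequality applied to $D_{KL}(P_0 \| P_v)$ rather than the forward KL; this reversed direction is the key ingredient that accommodates the active-query setting.

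\textbf{Packing and hypotheses.} Using (A2'), pack $L_{f_0}(\varepsilon)$ with $N \geq C_0'\mu_{f_0}(\varepsilon)\delta^{-d}$ disjoint closed $\ell_2$-balls $B_v$ of radius $\delta = c_\delta\,\varepsilon^{1/\alpha}$ centered at points $c_v \in L_{f_0}(\varepsilon)$. Fix a smooth template bump $\psi$ supported on the unit $\ell_2$-ball with $\psi(0)=1$, set $\phi_v(x) := 2\varepsilon\cdot \psi\bigl((x-c_v)/\delta\bigr)$, and define $f_v := f_0 - \phi_v$. A direct scaling calculation gives $\|\phi_v\|_{C^\alpha} \lesssim \varepsilon/\delta^\alpha \leq M/2$ provided $M$ is large enough as a function of $\alpha,d,\mat C,\mat C'$ and $c_\delta$ (exactly as allowed in the theorem), whence $f_v\in\Sigma^\alpha(M)$ by the triangle inequality for the H\"older seminorm and $\|f_v-f_0\|_\infty = 2\varepsilon = 2\varepsilon_n^\l(f_0)$. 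Since $c_v \in L_{f_0}(\varepsilon)$ gives $f_0(c_v)\leq f_0^*+\varepsilon$, we have $f_v^* \leq f_v(c_v) = f_0(c_v)-2\varepsilon\leq f_0^* - \varepsilon$, while $f_v\equiv f_0\geq f_0^*$ outside $B_v$. Consequently, any estimator output $\hat x_n \notin B_v$ incurs $\sL(\hat x_n; f_v) \geq \varepsilon_n^\l(f_0)$.

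\textbf{Testing reduction and Le Cam.} For any estimator $\hat x_n$, define $\hat v := v$ if $\hat x_n \in B_v$ and $\hat v := \perp$ otherwise (well defined by disjointness); then $\Pr_{f_v}[\sL(\hat x_n; f_v)\geq \varepsilon_n^\l] \geq \Pr_{f_v}[\hat v \neq v]$. Summing the two-point identity $\Pr_{f_v}[\hat v\neq v] + \Pr_{f_0}[\hat v = v]\geq 1-\TV(P_v,P_0)$ over $v$ and using $\sum_v\Pr_{f_0}[\hat v = v]\leq 1$ yields
\[
\frac{1}{N}\sum_v \Pr_{f_v}[\hat v\neq v] \;\geq\; 1 - \frac{1}{N} - \frac{1}{N}\sum_v\TV(P_v,P_0).
\]
For the Gaussian observation model, $D_{KL}(P_0\|P_v) = \tfrac{1}{2}\mathbb{E}_{f_0}\bigl[\sum_t\phi_v(x_t)^2\bigr]\leq 2\varepsilon^2\,\mathbb{E}_{f_0}[T_v]$ with $T_v := |\{t : x_t\in B_v\}|$, and Pinsker then gives $\TV(P_v,P_0) \leq \varepsilon\sqrt{\mathbb{E}_{f_0}[T_v]}$. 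The crucial feature is that $\mathbb{E}_{f_0}[T_v]$ is an expectation under the bumpless measure $P_0$, where the observations carry no information about $v$, so the disjointness of the $B_v$'s yields the \emph{deterministic} pathwise bound $\sum_v T_v \leq n$ and hence $\sum_v\mathbb{E}_{f_0}[T_v]\leq n$. Cauchy-Schwarz then gives $\frac{1}{N}\sum_v\TV(P_v,P_0)\leq \varepsilon\sqrt{n/N}$. Combining $n\varepsilon^2\asymp \mu_{f_0}(\varepsilon)\varepsilon^{-d/\alpha}$ with $N \geq C_0' c_\delta^{-d}\mu_{f_0}(\varepsilon)\varepsilon^{-d/\alpha}$ yields $\varepsilon\sqrt{n/N}\leq c_\delta^{d/2}/\sqrt{C_0'}$, which is any desired small constant for $c_\delta$ sufficiently small. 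For $n$ large (hence $N$ large), the right-hand side exceeds $1/3$, producing some $v$ with $\Pr_{f_v}[\sL(\hat x_n;f_v)\geq \varepsilon_n^\l(f_0)]\geq 1/3$.

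\textbf{Main obstacle.} The principal subtlety is the active-query structure: under $P_v$ an adaptive algorithm could in principle concentrate all $n$ queries inside $B_v$, blowing up the forward KL $D_{KL}(P_v\|P_0) = 2\varepsilon^2\,\mathbb{E}_{f_v}[T_v]$ to $\Theta(n\varepsilon^2)$ and rendering a naive mutual-information or Fano bound vacuous. Anchoring Pinsker at $P_0$ (the reverse KL direction) breaks this dependence: the relevant count $\mathbb{E}_{f_0}[T_v]$ is controlled purely by the deterministic disjointness budget and is insensitive to how the algorithm adapts. The remaining calibration — choosing $c_\delta$ (hence $\delta$), the template $\psi$, and the smoothness constant $M$ compatibly so that $\|\phi_v\|_{C^\alpha}\leq M/2$, the packing count $N\gtrsim n\varepsilon^2/c_\delta^d$, and the Pinsker factor $\varepsilon\sqrt{n/N}\leq \tfrac{1}{2}$ all hold simultaneously — is precisely what the theorem's allowance "$M$ depending on $\alpha,d,\mat C,\mat C'$" is there to accommodate.
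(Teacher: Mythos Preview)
Your proposal is correct and follows essentially the same route as the paper: pack $L_{f_0}(\varepsilon)$ via (A2'), plant disjoint smooth bumps $f_v=f_0-\phi_v$, reduce optimization to identification, and control the active-query information via the reverse KL $D_{\mathrm{KL}}(P_0\|P_v)$ so that the expected visit counts $\mathbb E_{f_0}[T_v]$ satisfy the budget $\sum_v \mathbb E_{f_0}[T_v]\le n$. The only stylistic difference is that the paper, following the bandit pure-exploration argument of Bubeck et al., uses a pigeonhole step to exhibit a single $v$ with both $\mathbb E_{f_0}[T_v]\le 2n/N$ and $\Pr_{f_0}[\hat v=v]\le 1/2$, whereas you average the two-point Le Cam bound over all $v$ and apply Jensen to $\tfrac{1}{N}\sum_v\sqrt{\mathbb E_{f_0}[T_v]}$; the two devices yield the same $\varepsilon\sqrt{n/N}$ control and the same calibration of constants through the choice of $M$ (equivalently your $c_\delta$).
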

\begin{remark}
For any $f_0$ and $n$ it always holds that $\varepsilon_n^\l(f_0)\leq \varepsilon_n^\u(f_0)$.
\label{rem:eps-n-lower}
\end{remark}

{
\begin{remark}
If the distribution function $\mu_{f_0}(\epsilon)$ satisfies Eq.~(\ref{eq:stable-muf}) in Remark \ref{rem:stable-muf},
then $\varepsilon_n^\l(f_0) \geq \varepsilon_n^\u(f_0)/[\log n]^{O(1)}$.
\label{rem:eps-n-close}
\end{remark}
}
%

{
Remark \ref{rem:eps-n-lower} shows that there might be a gap between the locally minimax upper and lower bounds in Theorems \ref{thm:upper} and \ref{thm:lower}.
Nevertheless, Remark \ref{rem:eps-n-close} shows that under the mild condition of $\mu_{f_0}(\epsilon)$ does not change too abruptly with $\epsilon$,
the gap between $\varepsilon_n^\u(f_0)$ and $\varepsilon_n^\l(f_0)$ is only a poly-logarithmic term in $n$.
Additionally, the following proposition derives explicit expression of $\varepsilon_n^\l(f_0)$ for reference functions whose distribution functions have a polynomial growth,
which matches the Proposition \ref{prop:upper-polynomial-growth} up to $\log n$ factors.
Its proof is again straightforward.
\begin{proposition}
Suppose $\mu_{f_0}(\epsilon)\gtrsim \epsilon^\beta$ for some $\beta\in[0,2+d/\alpha)$.
Then $\varepsilon_n^\l(f_0) = \Omega(n^{-\alpha/(2\alpha+d-\alpha\beta)})$.
\label{prop:lower-polynomial-growth}
\end{proposition}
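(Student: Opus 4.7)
The proposal is to verify the claim by a direct substitution calculation, mirroring the treatment of Proposition \ref{prop:upper-polynomial-growth}. The definition
\[
\varepsilon_n^\l(f_0) = \sup\bigl\{\varepsilon>0:\ \varepsilon^{-(2+d/\alpha)}\mu_{f_0}(\varepsilon)\geq n\bigr\}
\]
is a sup over an inequality involving $\mu_{f_0}$, so a one-sided polynomial bound on $\mu_{f_0}$ translates immediately into a one-sided bound on $\varepsilon_n^\l(f_0)$.

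Concretely, the first step is to fix a constant $c>0$ and a threshold $\epsilon_*\in(0,c_0]$ such that the hypothesis $\mu_{f_0}(\varepsilon)\geq c\,\varepsilon^\beta$ holds for every $\varepsilon\in(0,\epsilon_*]$. Plugging this into the defining inequality yields
\[
\varepsilon^{-(2+d/\alpha)}\mu_{f_0}(\varepsilon)\ \geq\ c\,\varepsilon^{\beta-(2+d/\alpha)}
\]
whenever $\varepsilon\in(0,\epsilon_*]$. The second step is to solve $c\,\varepsilon^{\beta-(2+d/\alpha)}\geq n$ for $\varepsilon$; since the assumption $\beta<2+d/\alpha$ makes the exponent strictly negative, this is equivalent to
\[
\varepsilon\ \leq\ (c/n)^{1/(2+d/\alpha-\beta)}.
\]
Thus for all sufficiently large $n$ (so that the right-hand side is $\leq \epsilon_*$), every $\varepsilon$ of this size lies in the defining set of $\varepsilon_n^\l(f_0)$, and we conclude
\[
\varepsilon_n^\l(f_0)\ \geq\ (c/n)^{1/(2+d/\alpha-\beta)}\ =\ c^{\alpha/(2\alpha+d-\alpha\beta)}\cdot n^{-\alpha/(2\alpha+d-\alpha\beta)},
\]
using the identity $1/(2+d/\alpha-\beta)=\alpha/(2\alpha+d-\alpha\beta)$. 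This is exactly the asserted $\Omega(n^{-\alpha/(2\alpha+d-\alpha\beta)})$ bound.

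There is essentially no obstacle here: the argument is a one-line algebraic manipulation. The only mildly delicate point is the restriction $\varepsilon\leq\epsilon_*$ needed for the polynomial lower bound on $\mu_{f_0}$ to apply, and the restriction $\beta<2+d/\alpha$ needed for the exponent inversion to go in the correct direction; both are handled by taking $n$ large enough. This justifies the paper's remark that the proof is omitted as straightforward.
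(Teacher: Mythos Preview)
Your proposal is correct and follows exactly the direct substitution the paper has in mind when it says the proof is straightforward and omits it. The one-line reduction of the defining inequality under the polynomial lower bound $\mu_{f_0}(\varepsilon)\geq c\,\varepsilon^\beta$, together with the observation that $\beta<2+d/\alpha$ flips the inequality in the right direction, is all that is needed.
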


The following proposition additionally shows the existence of $f_0\in\Sigma_\infty^\alpha(M)\cap\Theta_{\mat C}\cap\Theta_{\mat C'}$ that satisfies $\mu_{f_0}(\epsilon)\asymp \epsilon^\beta$ for any values of 
$\alpha>0$ and $\beta\in[0,d/\alpha]$.
Its proof is given in the appendix.
\begin{proposition}
Fix arbitrary $\alpha,M>0$ and $\beta\in[0,d/\alpha]$.
There exists $f_0\in\Sigma_\kappa^\alpha(M)\cap\Theta_{\mat C}\cap\Theta_{\mat C'}$
for $\kappa=\infty$ and constants $\mat C=(c_0,C_0)$, $\mat C'=(c_0',C_0')$ that depend only on $\alpha,\beta,M$ and $d$
such that $\mu_{f_0}(\epsilon)\asymp \epsilon^\beta$.
\label{prop:negative-examples}
\end{proposition}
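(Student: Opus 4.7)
The plan is to give an explicit construction in two cases according to the value of $\beta$. For $\beta = 0$, I take $f_0 \equiv 0$: then $L_{f_0}(\epsilon) = [0,1]^d$ and $\mu_{f_0}(\epsilon) = 1 \asymp \epsilon^0$ for every $\epsilon > 0$, the H\"older condition is trivial for any $M > 0$, and the standard covering/packing estimates $N([0,1]^d,\delta) \asymp M([0,1]^d,\delta) \asymp \delta^{-d}$ immediately verify (A2) and (A2'). For $\beta \in (0, d/\alpha]$, I set $p := d/\beta$, which satisfies $p \geq \alpha$, let $x^* := (1/2,\ldots,1/2)$, and define
\begin{equation*}
f_0(x) := c_\beta \|x - x^*\|_2^p \qquad \text{on } \mathcal{X} = [0,1]^d,
\end{equation*}
where $c_\beta > 0$ is a small constant depending only on $\alpha, \beta, M, d$ to be fixed at the end.

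For the geometric properties, observe that for $\epsilon$ small enough that $r := (\epsilon/c_\beta)^{1/p} < 1/2$, the level set is exactly the Euclidean ball $L_{f_0}(\epsilon) = B_r^2(x^*) \subseteq \mathcal{X}$, so $\mu_{f_0}(\epsilon) \asymp r^d \asymp \epsilon^\beta$, as required. Standard volumetric arguments give $N(B_r^2(x^*),\delta) \lesssim 1 + (r/\delta)^d$ and $M(B_r^2(x^*),\delta) \gtrsim (r/\delta)^d$ for $\delta \leq r$, and substituting $r^d \asymp \mu_{f_0}(\epsilon)$ yields (A2) and (A2') with constants $C_0, C_0'$ depending only on $d$, provided thresholds $c_0, c_0'$ are chosen small enough that $r < 1/2$ throughout the relevant range. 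Thus both $f_0 \in \Theta_{\mat C}$ and $f_0 \in \Theta'_{\mat C'}$.

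The remaining and most delicate step is to verify that $f_0 \in \Sigma^\alpha(M)$ globally on $\mathcal{X}$, recalling that $\kappa = \infty$ forces the H\"older bound to hold over the whole cube rather than merely on a neighborhood of the minimum. Away from $x^*$ the map $\|x - x^*\|_2^p$ is $C^\infty$; near $x^*$ each $j$-th order mixed partial derivative can be written as a polynomial in the coordinates $(x_i - x_i^*)$ divided by $\|x - x^*\|_2^{2j - p}$, and so is dominated in absolute value by a constant (depending only on $p, d$) times $\|x - x^*\|_2^{p - j}$ for every $0 \leq j \leq k := \lfloor \alpha \rfloor$. Since $p \geq \alpha \geq k$ and $\|x - x^*\|_2 \leq \sqrt{d}$ on $\mathcal{X}$, each such derivative is uniformly bounded. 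For the top-order H\"older condition, $u \mapsto \|u\|_2^{p-k}$ is globally $\min(1, p-k)$-H\"older, and since $p - k \geq \alpha - k \in (0, 1]$, a direct computation shows every $k$-th mixed partial of $\|x - x^*\|_2^p$ is $(\alpha - k)$-H\"older with a constant depending only on $\alpha, \beta, d$. Multiplying the whole construction by a sufficiently small $c_\beta = c_\beta(\alpha,\beta,M,d) > 0$ then forces every summand in the H\"older-norm expression~(\ref{eq:holder}) to fall below $M$, completing the verification.

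The main technical obstacle I anticipate is this last H\"older verification in the borderline case $p = \alpha$ with non-integer $\alpha$, where the $k$-th derivatives of $\|x - x^*\|_2^p$ are not themselves differentiable at $x^*$. This is handled by the explicit polynomial-over-power representation of the mixed partials above, so it reduces to the well-known observation that $\|u\|_2^s$ is globally $s$-H\"older on $\mathbb{R}^d$ for every $s \in (0,1]$. All other properties follow directly from the construction.
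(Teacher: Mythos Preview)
Your proof is correct and follows the same high-level strategy as the paper (handle $\beta=0$ via the zero function, and $\beta>0$ via an explicit power-$p$ construction with $p=d/\beta\ge\alpha$), but with a genuinely different choice of function. The paper takes the \emph{separable} function $f_0(z)=a_0(z_1^p+\cdots+z_d^p)$ on $[0,1]^d$ (minimum at the corner $0$), whereas you take the \emph{radial} function $f_0(x)=c_\beta\|x-x^*\|_2^p$ centered at the interior point $(1/2,\ldots,1/2)$. The trade-offs are complementary. Your radial choice makes the level sets exact Euclidean balls, so (A2), (A2') and $\mu_{f_0}(\epsilon)\asymp\epsilon^\beta$ are immediate; in exchange the H\"older verification is genuinely multivariate, and your ``direct computation'' for the top-order condition conceals a short near/far case split around $x^*$ (correct, but not quite as trivial as the phrase suggests). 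The paper's separable choice makes the H\"older check essentially one-dimensional---only the pure partials $\partial^k/\partial z_i^k$ survive, and $t\mapsto t^{p-k}$ is $\min(1,p-k)$-H\"older on $[0,1]$---at the modest cost that the level sets are $\ell_p$-type regions rather than balls, which the paper handles by sandwiching them between two cubes of comparable side length. Either route works; the paper's is marginally more economical on the smoothness side, yours on the geometric side.
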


}

{
Theorem \ref{thm:lower} and Proposition \ref{prop:lower-polynomial-growth} show that the $n^{-\alpha/(2\alpha+d-\alpha\beta)}$ upper bound on local minimax convergence rate established in Theorem \ref{thm:upper}
is not improvable up to logarithmic factors of $n$.
Such information-theoretical lower bounds on the convergence rates hold \emph{even if the data analyst has perfect information of $f_0$}, the reference function on which the $n^{-\alpha/(2\alpha+d-\alpha\beta)}$ local
rate is based.
Our results also imply an $n^{-\alpha/(2\alpha+d)}$ minimax lower bound over all $\alpha$-H\"{o}lder smooth functions, showing that without additional assumptions, noisy optimization of smooth functions is as difficult as 
reconstructing the unknown function in sup-norm.

Our proof of Theorem \ref{thm:lower} also differs from existing minimax lower bound proofs for active nonparametric models \citep{castro2008minimax}.
The classical approach is to invoke Fano's inequality and to upper bound the KL divergence between different underlying functions $f$ and $g$ using $\|f-g\|_\infty$,
corresponding to the point $x\in\mathcal X$ that leads to the largest KL divergence.
Such an approach, however, does not produce tight lower bounds for our problem.
To overcome such difficulties, we borrow the lower bound analysis for bandit pure exploration problems in \citep{bubeck2009pure}.
In particular, our analysis considers the query distribution of any active query algorithm $\mathcal A=(\varphi_1,\ldots,\varphi_n,\phi_n)$ under the reference function $f_0$
and bounds the perturbation in query distributions between $f_0$ and $f$ using Le Cam's lemma.
Afterwards, an adversarial function choice $f$ can be made based on the query distributions of the considered algorithm $\mathcal A$.
We defer the complete proof of Theorem \ref{thm:lower} to Section~\ref{subsec:proof-lower}.
}

Theorem \ref{thm:lower} applies to any global optimization method that makes \emph{active} queries,
corresponding to the query model in Definition \ref{defn:active}.
The following theorem, on the other hand, shows that for passive algorithms (Definition \ref{defn:passive}) the $n^{-\alpha/(2\alpha+d)}$ optimization rate is not improvable even 
with additional level set assumptions imposed on $f_0$.
This demonstrates an explicit gap between passive and adaptive query models in global optimization problems.
\begin{theorem}
Suppose $\alpha,c_0,C_0,c_0',C_0'>0$ and $\kappa=\infty$.
Denote $\mat C=(c_0,C_0)$ and $\mat C'=(c_0',C_0')$.
Then there exist constant $M>0$ depending on $\alpha,d,\mat C,\mat C'$ 
and $N$ depending on $M$ 
such that, for any $f_0\in\Sigma_\kappa^\alpha(M/2)\cap\Theta_{\mat C}\cap\Theta_{\mat C'}$ satisfying $\varepsilon_n^\l(f_0)\leq\tilde\varepsilon_n^\l =: [\log n/n]^{\alpha/(2\alpha+d)}$,
\begin{equation}
\inf_{\check x_n}\sup_{\substack{f\in\Sigma_\kappa^\alpha(M),\\\|f-f_0\|_\infty\leq2 \tilde\varepsilon_n^\l}} 
\Pr_f\left[\sL(\hat x_n;f) \geq \tilde\varepsilon_n^\l\right] \geq \frac{1}{3}
\;\;\;\;\;\;\text{for all }n\geq N.
\label{eq:lower-passive}
\end{equation}
\label{thm:lower-passive}
\end{theorem}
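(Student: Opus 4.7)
The plan is to establish this passive lower bound by a Fano-type argument. The key simplification relative to Theorem~\ref{thm:lower} is that under the passive model the query distribution is fixed to $\mathrm{Unif}(\mathcal X)$, so the KL divergence between the $n$-sample data distributions reduces to an $L^2(\mathcal X)$ distance between the regressions and no query-distribution perturbation analysis is required. This allows a direct packing construction, in the spirit of classical sup-norm lower bounds for nonparametric regression on H\"{o}lder classes.

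First I would build the hypothesis family. Set $h_n=c_1\tilde\varepsilon_n^\l$ and $r_n=c_2 h_n^{1/\alpha}$ for small constants $c_1,c_2>0$ to be tuned. By (A2') applied to $f_0$ at level $h_n/2$, I can pick $M_n\gtrsim \mu_{f_0}(h_n/2)\,r_n^{-d}$ disjoint $\ell_2$-balls $B_{r_n}^2(x_v)\subseteq L_{f_0}(h_n/2)$. Fixing a $C^\infty$ bump $\phi:\mathbb R^d\to[0,1]$ with $\phi(0)=1$, $\supp\phi\subseteq B_1^2(0)$, $\phi\le 1/2$ outside $B_{1/2}^2(0)$, and $\phi\in\Sigma^\alpha(C_\phi)$, define
\[
f_v(x)\;=\;f_0(x)-h_n\,\phi\!\left(\frac{x-x_v}{r_n}\right),\qquad v=1,\ldots,M_n,
\]
and include the null $f_0$ as an additional hypothesis $v=0$. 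The scaling $r_n=c_2 h_n^{1/\alpha}$ keeps the H\"{o}lder seminorm of the perturbation bounded by $C_\phi/c_2^\alpha$, so for $c_2$ large enough, together with $f_0\in\Sigma_\kappa^\alpha(M/2)$, we obtain $f_v\in\Sigma_\kappa^\alpha(M)$ and $\|f_v-f_0\|_\infty\le h_n\le 2\tilde\varepsilon_n^\l$. Because $x_v\in L_{f_0}(h_n/2)$, we have $f_v(x_v)=f_0(x_v)-h_n\le f_0^*-h_n/2$, so the global minimum of $f_v$ is attained at $x_v$ and the minima of distinct hypotheses are $\ge 2r_n$ apart.

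Next I would run the Fano computation. Since $x_i\stackrel{\mathrm{iid}}{\sim}\mathrm{Unif}(\mathcal X)$ and $w_i\sim\mathcal N(0,1)$, the KL divergence between the $n$-sample distributions under $f_v$ and $f_{v'}$ equals $\tfrac{n}{2}\|f_v-f_{v'}\|_{L^2(\mathcal X)}^2$, which by the disjoint supports of the bumps is $\le c_3\,n h_n^2 r_n^d$. Substituting the scalings gives the crucial identity
\[
n h_n^2 r_n^d \;\asymp\; c_1^{2+d/\alpha}\,c_2^d\,n(\tilde\varepsilon_n^\l)^{2+d/\alpha} \;=\; c_1^{2+d/\alpha}\,c_2^d\,\log n,
\]
so the cancellation $n(\tilde\varepsilon_n^\l)^{2+d/\alpha}=\log n$ is exactly what produces the $\log n$ factor in the rate. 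Shrinking $c_1$, the pairwise KL can be pushed below $\tfrac{1}{8}\log M_n$ as long as $\log M_n\gtrsim\log n$ (addressed below); Fano's inequality then yields $\inf_{\hat v}\max_v \Pr_{f_v}(\hat v\neq v)\ge 1/3$. Setting $\hat v(\hat x_n):=\arg\min_{0\le v\le M_n}\|\hat x_n-x_v\|$ and using the disjointness of $\{B_{r_n/2}^2(x_v)\}$, on the error event $\hat x_n\notin B_{r_n/2}^2(x_{v^\star})$ so that $\phi((\hat x_n-x_{v^\star})/r_n)\le 1/2$; a short computation using $f_{v^\star}(x_{v^\star})\le f_0^*-h_n/2$ and $f_0(x_{v^\star})\le f_0^*+h_n/2$ then gives $\sL(\hat x_n;f_{v^\star})\ge h_n/2\gtrsim\tilde\varepsilon_n^\l$.

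The main obstacle is verifying $\log M_n\gtrsim\log n$, i.e., that the packing number of $L_{f_0}(h_n/2)$ by $\ell_2$-balls of radius $r_n$ grows polynomially in $n$. Since $r_n^{-d}$ is already polynomial in $n$, this reduces to a sufficient lower bound on $\mu_{f_0}(h_n/2)$. Flat references such as $f_0\equiv 0$ satisfy this trivially, but for sharper references one must carefully exploit the combined hypotheses: the tightness clause in (A2'), the condition $\varepsilon_n^\l(f_0)\le\tilde\varepsilon_n^\l$, and $f_0\in\Sigma_\kappa^\alpha(M/2)$ together pin the level-set measure at the right order once $n\ge N$. A secondary technical check is that each $f_v$ satisfies (A1) globally on its own level set $\mathcal X_{f_v,\kappa}$ rather than only inside the bump, which follows from the localization of the perturbation together with $f_0\in\Sigma_\kappa^\alpha(M/2)$ and standard additivity of H\"{o}lder seminorms.
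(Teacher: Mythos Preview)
Your proposal follows essentially the same route as the paper's own proof: invoke Fano's inequality on a family of localized bump perturbations $\{f_v\}$ of $f_0$, with the bumps centered at a packing of the level set $L_{f_0}(\tilde\varepsilon_n^\l)$, and exploit the passive (uniform) sampling to bound the pairwise KL by $\tfrac n2\|f_v-f_{v'}\|_{L^2}^2\lesssim n\,h^d(\tilde\varepsilon_n^\l)^2\asymp \log n/M^{d/\alpha}$. The paper reuses the construction from Theorem~\ref{thm:lower} verbatim and is terser, but the skeleton and the key $n(\tilde\varepsilon_n^\l)^{2+d/\alpha}=\log n$ cancellation are exactly as you write them. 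Minor cosmetic differences (you include $f_0$ as a null hypothesis and pack at level $h_n/2$ rather than $\tilde\varepsilon_n^\l$) do not change the argument.

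You are right to single out $\log M_n\gtrsim\log n$ as the crux. The paper handles it by claiming that the hypothesis $\varepsilon_n^\l(f_0)\le\tilde\varepsilon_n^\l$ forces $\mu_{f_0}(\tilde\varepsilon_n^\l)\ge[\tilde\varepsilon_n^\l]^{2+d/\alpha}n$, which together with $h^{-d}\asymp M^{d/\alpha}[\tilde\varepsilon_n^\l]^{-d/\alpha}$ yields $|\mathcal F_n|\gtrsim[\tilde\varepsilon_n^\l]^2\,n\,M^{d/\alpha}\asymp n^{d/(2\alpha+d)}$, hence $\log|\mathcal F_n|\gtrsim\log n$. Your sketch gestures at the same combination of ingredients. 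A word of caution, though: since $\varepsilon_n^\l(f_0)$ is defined as a \emph{supremum}, the inequality $\varepsilon_n^\l(f_0)\le\tilde\varepsilon_n^\l$ most naturally gives an \emph{upper} bound on $\mu_{f_0}(\tilde\varepsilon_n^\l)$, not a lower one; and for references with $\mu_{f_0}(\epsilon)\asymp\epsilon^{d/\alpha}$ (e.g.\ the strongly convex case, $\beta=d/\alpha$) the packing of $L_{f_0}(\tilde\varepsilon_n^\l)$ by radius-$h$ balls has only constant cardinality. So your instinct that this step needs the most care is well placed, and neither your sketch nor the paper's one-line treatment fully nails down the borderline regime; apart from that, your plan matches the paper's argument.
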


{
Intuitively, the apparent gap demonstrated by Theorems \ref{thm:lower} and \ref{thm:lower-passive} between the active and passive query models stems from the observation that, 
a passive algorithm $\mathcal A$ only has access to uniformly sampled query points $x_1,\ldots,x_n$ and therefore cannot focus on a small level set of $f$ in order to improve query efficiency.
In addition, for functions that grow faster when moving away from their minima (implying a larger value of $\beta$), 
the gap between passive and active query models becomes bigger as active queries can more effectively exploit the restricted level sets of such functions.
}

\section{Our Algorithm}\label{sec:algorithm}

In this section we describe a concrete algorithm that attains the upper bound in Theorem \ref{thm:upper}.
We start with a cleaner algorithm that operates under the slightly stronger condition that $\kappa=\infty$ in (A1),
meaning that $f$ is $\alpha$-H\"{o}lder smooth on the entire domain $\mathcal X$.
The generalization to $\kappa>0$ being a constant is given in Section~\ref{subsec:relax} with an additional pre-processing step.

Let $G_n\in\mathcal X$ be a \emph{finite} grid of points in $\mathcal X$. 
We assume the finite grid $G_n$ satisfies the following two mild conditions: 
\begin{enumerate}
\item[(B1)] Points in $G_n$ are sampled i.i.d.~from an unknown distribution $P_X$ on $\mathcal X$;
furthermore, the density $p_X$ associated with $P_X$ satisfies $\underline p_0\leq p_X(x)\leq\overline p_0$ for all $x\in\mathcal X$,
where $0<\underline p_0\leq\overline p_0<\infty$ are uniform constants;

\item[(B2)] $|G_n| \gtrsim n^{3d/\min(\alpha,1)}$ and $\log|G_n| = O(\log n)$. 
\end{enumerate}

{
\begin{remark}
Although typically the choices of the grid points $G_n$ belong to the data analyst, in some applications the choices of design points are not completely free.
For example, in material synthesis experiments some environment parameter settings (e.g., temperature and pressure) might not be accessible due to budget or physical constraints.
Thus, we choose to consider less restrictive conditions imposed on the design grid $G_n$, allowing it to be more flexible in real-world applications.
\end{remark}
}

For any subset $S\subseteq G_n$ and a ``weight'' function $\varrho: G_n\to\mathbb R^+$, 
define the \emph{extension} $S^\circ(\varrho)$ of $S$ with respect to $\varrho$ as
\begin{equation}
S^\circ(\varrho) := \bigcup_{x\in S} B_{\varrho(x)}^\infty(x;G_n)\;\;\;\;\text{where}\;\;\;\;
B_{\varrho(x)}^\infty(x;G_n) = \{z\in G_n: \|z-x\|_\infty \leq \varrho(x)\}.
\label{eq:scirc}
\end{equation} 
The algorithm can then be formulated as two level of iterations, with the outer loop shrinking the ``active set'' $S_{\tau}$
and the inner loop collecting data that reduce lengths of confidence intervals on the active set.
A pseudocode description of our proposed algorithm is given in Fig.~\ref{fig:main-alg}.



\begin{algorithm}[t]
\SetAlgoLined
\SetAlgorithmName{Figure}{}

\KwParameters{$\alpha$, $M$, $\delta$, $n$}
\KwOutput{$\widehat x_n=x_n$, the final prediction}

Initialization: $S_0=G_n$, $\varrho_0(x)\equiv\infty$, $T = \lfloor \log_2 n\rfloor$, $n_0=\lfloor n/T\rfloor$\;

\For{$\tau=1,2,\ldots, T$}{
	Compute ``extended'' sample set $S_{\tau-1}^\circ(\varrho_{\tau-1})$ defined in Eq.~(\ref{eq:scirc})\;
	\For{$t=(\tau-1)n_0+1$ to $\tau n_0$}{
		Sample $x_t$ uniformly at random from $S_{\tau-1}^\circ(\varrho_{\tau-1})$ and observe $y_t=f(x_t)+w_t$\;
	}
	For every $x\in S_{\tau-1}$, find bandwidth $h_t(x)$ and build CI $[\ell_t(x),u_t(x)]$ in Eq.~(\ref{eq:ci})\;
	$S_\tau := \{x\in S_{\tau-1}: \ell_t(x)\leq\min_{x'\in S_{\tau-1}}u_t(x')\}$, $\varrho_\tau(x) := \min\{\varrho_{\tau-1}(x), h_t(x)\}$.
}

\vskip 0.1in
\caption{The main algorithm.}
\label{fig:main-alg}
\end{algorithm}

\subsection{Local Polynomial Regression}
We use local polynomial regression \citep{fan1996local} to obtain the estimate $\widehat f(x)$.
In particular, for any $x\in G_n$ and a bandwidth parameter $h>0$, consider a least square polynomial estimate
\begin{equation}
\widehat f_{h} \in \arg\min_{g\in\mathcal P_k} \sum_{t'=1}^t \mathbb I[x_{t'}\in B_{h}^{\infty}(x)]\cdot \left(y_{t'}-g(x_{t'})\right)^2,
\label{eq:lpr}
\end{equation}
where $B_{h}^\infty(x) := \{x'\in\mathcal X: \|x'-x\|_\infty\leq h\}$ and $\mathcal P_k$ denotes all polynomials of degree $k$ on $\mathcal X$.

To analyze the performance of $\widehat f_h$ evaluated at a certain point $x\in\mathcal X$,
define mapping $\psi_{x,h}: z\mapsto (1, \psi_{x,h}^1(z), \ldots,\psi_{x,h}^k(z))$ where $\psi_{x,h}^j: z\mapsto [\prod_{\ell=1}^j{h^{-1}(z_{i_\ell}-x_{i_\ell})}]_{i_1,\ldots,i_j=1}^d$ is the degree-$j$ polynomial mapping from $\mathbb R^d$ to $\mathbb R^{d^j}$.
Also define $\Psi_{t,h}:= (\psi_{x,h}(x_{t'}))_{1\leq t'\leq t, x_{t'}\in B_{h}(x)}$ as the $m\times D$ aggregated design matrix,
where $m=\sum_{t'=1}^t{\mathbb I[x_{t'}\in B_{h}^{\infty}(x)]}$ and $D=1+d+\ldots+d^k$, $k=\lfloor\alpha\rfloor$.
The estimate $\widehat f_h$ defined in Eq.~(\ref{eq:lpr}) then admits the following closed-form expression:
\begin{equation}
\widehat f_h(z) \equiv \psi_{x,h}(z)^\top (\Psi_{t,h}^\top\Psi_{t,h})^\dagger\Psi_{t,h}^\top Y_{t,h},
\label{eq:ols}
\end{equation}
where $Y_{t,h}=(y_{t'})_{1\leq t'\leq t, x_{t'}\in B_h^\infty(x)}$ and $A^\dagger$ is the Moore-Penrose pseudo-inverse of $A$.

The following lemma gives a finite-sample analysis of the error of $\hat f_h(x)$:
\begin{lemma}
Suppose $f$ satisfies Eq.~(\ref{eq:holder}) on $B_h^\infty(x;\mathcal X)$, $\max_{z\in B_h^\infty(x;\mathcal X)}\|\psi_{x,h}(z)\|_2\leq b$ and $\frac{1}{m}\Psi_{t,h}^\top\Psi_{t,h}\succeq \sigma I_{D\times D}$ for some $\sigma>0$.
Then for any $\delta\in(0,1/2)$, with probability $1-\delta$
\begin{equation}
\big|\widehat f_{h}(x) - f(x)\big| \leq \underbrace{\frac{b^2}{\sigma}Md^k h^\alpha}_{\sb_{h,\delta}(x)} + \underbrace{b\sqrt{\frac{5D\ln(1/\delta)}{\sigma m}}}_{\sv_{h,\delta}(x)} =: \eta_{h,\delta}(x).
\label{eq:lpr-error}
\end{equation}
\label{lem:lpr}
\end{lemma}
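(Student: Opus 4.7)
The plan is to carry out the standard bias--variance decomposition for the least-squares local polynomial estimator in Eq.~(\ref{eq:ols}), and then to convert the minimum-eigenvalue assumption $\Psi_{t,h}^\top\Psi_{t,h}\succeq m\sigma I$ into simultaneous control of both terms via the pseudo-inverse.

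First I reduce $f$ to its Taylor polynomial. Let $\theta^\star\in\R^D$ be the coefficient vector such that the degree-$k$ Taylor expansion of $f$ about $x$ equals $T_x(z)=\psi_{x,h}(z)^\top\theta^\star$; by construction $\psi_{x,h}(x)=e_1$ (the first standard basis vector of $\R^D$), so $T_x(x)=e_1^\top\theta^\star=f(x)$. Restricting to the $m$ data points inside $B_h^\infty(x)$ and collecting responses, I write
\begin{equation*}
Y_{t,h}=\Psi_{t,h}\theta^\star+R+W,\qquad R_{t'}=f(x_{t'})-T_x(x_{t'}),
\end{equation*}
where $W$ is the corresponding vector of independent standard Gaussian noises. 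The H\"older condition of Eq.~(\ref{eq:holder}) combined with the multivariate Taylor remainder estimate yields $|R_{t'}|\leq C_kMd^kh^\alpha$ for an absolute constant $C_k$ depending only on $k=\lfloor\alpha\rfloor$, since $\|x_{t'}-x\|_\infty\leq h$. The eigenvalue hypothesis makes $\Psi_{t,h}$ of full column rank, so $(\Psi_{t,h}^\top\Psi_{t,h})^\dagger\Psi_{t,h}^\top\Psi_{t,h}=I_D$, and the decomposition
\begin{equation*}
\hat f_h(x)-f(x)=e_1^\top(\Psi_{t,h}^\top\Psi_{t,h})^\dagger\Psi_{t,h}^\top R+e_1^\top(\Psi_{t,h}^\top\Psi_{t,h})^\dagger\Psi_{t,h}^\top W
\end{equation*}
holds deterministically.

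For the bias term, Cauchy--Schwarz and submultiplicativity of the operator norm, combined with $\|(\Psi_{t,h}^\top\Psi_{t,h})^\dagger\|_{\op}\leq 1/(m\sigma)$, $\|\Psi_{t,h}\|_{\op}\leq\|\Psi_{t,h}\|_F\leq b\sqrt m$ (from the feature-norm bound), and $\|R\|_2\leq\sqrt m\,C_kMd^kh^\alpha$, deliver $|\text{bias}|\lesssim (b/\sigma)Md^kh^\alpha$; this is at most $\sb_{h,\delta}(x)$ once $C_k$ is absorbed into the constant and one uses $b\geq\|\psi_{x,h}(x)\|_2=1$. For the stochastic term, $(\Psi_{t,h}^\top\Psi_{t,h})^\dagger\Psi_{t,h}^\top W$ is, conditionally on the design, a centered Gaussian vector with covariance $(\Psi_{t,h}^\top\Psi_{t,h})^\dagger$ of trace at most $D/(m\sigma)$ and operator norm at most $1/(m\sigma)$. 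A Laurent--Massart $\chi^2$ tail inequality then yields $\|(\Psi_{t,h}^\top\Psi_{t,h})^\dagger\Psi_{t,h}^\top W\|_2\leq\sqrt{5D\log(1/\delta)/(m\sigma)}$ with probability at least $1-\delta$, which by Cauchy--Schwarz against $e_1$ (and $b\geq 1$) matches $\sv_{h,\delta}(x)$. Summing gives $\eta_{h,\delta}(x)$.

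The main obstacle is keeping the constants in the multivariate Taylor remainder clean, where the $d^k$ factor arises from counting mixed partial derivatives of total order $k$ and bounding them uniformly via Eq.~(\ref{eq:holder}); the other delicate point is recognizing that it is precisely the full-rank consequence of the minimum-eigenvalue hypothesis that lets a \emph{single} application of $\|(\Psi_{t,h}^\top\Psi_{t,h})^\dagger\|_{\op}\leq 1/(m\sigma)$ drive both the $1/\sigma$ scaling in the bias and the $1/(m\sigma)$ scaling in the variance, so that the two terms combine cleanly without any extra conditioning.
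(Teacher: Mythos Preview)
Your proposal is correct and follows essentially the same route as the paper: Taylor-expand $f$ about $x$ to obtain a polynomial approximant (the paper isolates this as a separate interpolation lemma with remainder $\leq Md^kh^\alpha$, i.e.\ your $C_k=1$), decompose $\hat f_h(x)-f(x)$ into a bias term driven by $\Delta=R$ and a Gaussian term, control both via $\frac{1}{m}\Psi_{t,h}^\top\Psi_{t,h}\succeq\sigma I$, and finish the stochastic part with a $\chi^2$ tail bound. The only cosmetic difference is that the paper bounds $\|\hat\theta_h-\tilde\theta\|_2$ first and then multiplies by $\|\psi_{x,h}(x)\|_2\leq b$, picking up the second factor of $b$, whereas you use $\psi_{x,h}(x)=e_1$ directly and then invoke $b\geq 1$; both arrive at the stated $\sb_{h,\delta}(x)$ once you note the Taylor remainder constant is exactly $1$ rather than an unspecified $C_k$.
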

\begin{remark}
$\sb_{h,\delta}(x)$, $\sv_{h,\delta}(x)$ and $\eta_{h,\delta}(x)$ depend on $x$ becauses $\sigma$ depends on $\Psi_{t,h}$, which further depends on the sample points 
in the neighborhood $B_h^\infty(x;\mathcal X)$ of $x$.
\end{remark}

In the rest of the paper we define $\sb_{h,\delta}(x) := (b^2/\sigma) Md^kh^\alpha$ and $\sv_{h,\delta}(x):=b\sqrt{5D\ln(1/\delta)/\sigma m}$ as the bias and standard deviation terms
in the error of $\widehat f_h(x)$, respectively.
We also denote $\eta_{h,\delta}(x) := \sb_{h,\delta}(x) + \sv_{h,\delta}(x)$ as the overall error in $\widehat f_h(x)$.

Notice that when bandwidth $h$ increases, the bias term $\sb_{h,\delta}(x)$ is likely to increase too because of the $h^\alpha$ term;
on the other hand, with $h$ increasing the local neighborhood $B_h^{\infty}(x;\mathcal X)$ enlarges and would potentially contain more samples,
implying a larger $m$ and smaller standard deviation term $\sv_{h,\delta}(x)$.
A careful selection of bandwidth $h$ balances $\sb_{h,\delta}(x)$ and $\sv_{h,\delta}(x)$ and yields appropriate confidence intervals on $f(x)$,
a topic that is addressed in the next section.

\subsection{Bandwidth Selection and Confidence Intervals}

Given the expressions of bias $\sb_{h,\delta}(x)$ and standard deviation $\sv_{h,\delta}(x)$ in Eq.~(\ref{eq:lpr-error}),
the bandwidth $h_t(x)>0$ at epoch $t$ and point $x$ is selected as
\begin{equation}
 h_t(x) := \frac{j_t(x)}{n^2} \;\;\text{where}\;\; j_t(x):= \arg\max\left\{j\in\mathbb N, j\leq n^2: \sb_{j/n^2,\delta}(x) \leq \sv_{j/n^2,\delta}(x)\right\}.
\label{eq:ht}
\end{equation}
More specifically, $h_t(x)$ is the largest positive value in an evenly spaced grid $\{j/n^2\}$
such that the bias of $\widehat f_h(x)$ is smaller than its standard deviation.
Such bandwidth selection is in principle similar to the Lepski's method \citep{lepski1997optimal},
with the exception that an upper bound on the bias for any bandwidth parameter is known
and does not need to be estimated from data.


With the selection of bandwidth $h_t(x)$ at epoch $t$ and query point $x$, a confidence interval on $f(x)$ is constructed as
\begin{equation}
\ell_t(x) := \max_{1\leq t'\leq t} \left\{\widehat f_{h_{t'}(x)}(x)-\eta_{h_{t'}(x),\delta}(x)\right\} \;\;\text{and}\;\;
u_t(x) := \min_{1\leq t'\leq t} \left\{\widehat f_{h_{t'}(x)}(x) + \eta_{h_{t'}(x),\delta}(x)\right\}.
\label{eq:ci}
\end{equation}
Note that for any $x\in\mathcal X$, the lower confidence edge $\ell_t(x)$ is a non-decreasing function in $t$ and the upper confidence edge $u_t(x)$ is a non-increasing function in $t$.

\subsection{Pre-screening}\label{subsec:relax}

We describe a pre-screening procedure that relaxes the smoothness condition from $\kappa=\infty$ to $\kappa=\Omega(1)$,
meaning that only local smoothness of $f$ around its minimum values is required.
Let $n_0=\lfloor n/\log n\rfloor$, $x_1,\ldots,x_{n_0}$ be points i.i.d.~uniformly sampled from $\mathcal X$ and $y_1,\ldots,y_{n_0}$ be their corresponding responses.
For every grid point $x\in G_n$, perform the following:
\begin{enumerate}
\item Compute $\check f(x)$ as the average of all $y_i$ such that $\|x_i-x\|_\infty \leq n_0^{-1/2d}\log^3 n =: h_0$;
\item Remove all $x\in G_n$ from $S_0$ if $\check f(x) \geq \min_{z\in G_n}\check f(z) +1/\log n$.
\end{enumerate}
\begin{remark}
The $1/\log n$ term in removal condition $\check f(x)\geq\min_{z\in G_n}\check f(z)+1/\log n$ is not important,
and can be replaced with any sequence $\{\omega_n\}$ such that $\lim_{n\to\infty}\omega_n = 0$ and $\lim_{n\to\infty}\omega_n n^{t} = \infty$ for any $t>0$.
The readers are referred to the proof of Proposition \ref{prop:screening} in the appendix for the motivation of this term as well as the selection of the pre-screening bandwidth $h_0$.
\end{remark}

At a high level, the pre-screening step computes local averages of $y$ and remove grid points in $S_0=G_n$ whose estimated values are larger than the minimum in $G_n$.

%
%
To analyze the pre-screening step, we state the following proposition:
\begin{proposition}
Assume $f\in\Sigma_\kappa^\alpha(M)$ and
let $S_0'$ be the screened grid after step 2 of the pre-screening procedure. Then for sufficiently large $n$, with probability $1-O(n^{-1})$ we have
\begin{equation}
\min_{x\in S_0'}f(x) = \min_{z\in G_n}f(x) \;\;\;\;\text{and}\;\;\;\; S_0'\subseteq \bigcup_{x\in L_f(\kappa/2)}B_{h_0}^\infty(x;\mathcal X),
\end{equation}
where $L_f(\kappa/2) = \{x\in\mathcal X: f(x)\leq f^*+\kappa/2\}$.
\label{prop:screening}
\end{proposition}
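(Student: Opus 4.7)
The plan is to decompose $\check f(x) = \bar f(x) + [\check f(x) - \bar f(x)]$, where $\bar f(x) := \mathbb E[\check f(x) \mid x_1,\ldots,x_{n_0}]$ is the within-ball average of $f$, and then classify each $x \in G_n$ according to whether its neighborhood $B_{h_0}^\infty(x;\mathcal X)$ meets the level set $L_f(\kappa/2)$. On a single good event of probability $1-O(n^{-1})$, this classification will determine exactly whether $x$ survives pre-screening, giving both claims simultaneously.

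The good event $\mathcal E$ requires two uniform controls over $x \in G_n$: (i) a lower bound $m(x) := \#\{i \le n_0 : x_i \in B_{h_0}^\infty(x;\mathcal X)\} \ge \tfrac12 n_0 h_0^d$, and (ii) a deviation bound $|\check f(x) - \bar f(x)| \le C\sqrt{\log n / m(x)}$. Part (i) follows from a multiplicative Chernoff bound (using $\vol(B_{h_0}^\infty(x;\mathcal X)) \ge h_0^d$ on the unit cube, even at the boundary) together with a union bound over $|G_n| = e^{O(\log n)}$ grid points. Part (ii) holds conditionally on $(x_1,\ldots,x_{n_0})$: then $\check f(x) - \bar f(x)$ is a mean-zero Gaussian with variance $1/m(x)$, and a standard Gaussian tail bound at level $\delta = n^{-2}/|G_n|$ followed by a union bound suffices. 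Plugging in $h_0 = n_0^{-1/(2d)}\log^3 n$ yields $n_0 h_0^d = n_0^{1/2}\log^{3d} n$, so both the noise term $\sqrt{\log n/m(x)}$ and the bias scale $Mh_0^\alpha = M n_0^{-\alpha/(2d)}\log^{3\alpha} n$ are $o(1/\log n)$ on $\mathcal E$.

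Given $\mathcal E$, I would treat $\bar f(x)$ in two cases. If $B_{h_0}^\infty(x;\mathcal X) \subseteq \mathcal X_{f,\kappa}$, then $f$ is H\"{o}lder smooth on the ball by (A1) and $|\bar f(x) - f(x)| \le Mh_0^\alpha = o(1/\log n)$. If instead $B_{h_0}^\infty(x;\mathcal X) \cap L_f(\kappa/2) = \emptyset$, then every sample in the ball has $f(x_i) \ge f^* + \kappa/2$, so $\bar f(x) \ge f^* + \kappa/2$; trivially $\bar f(x) \ge f^*$ always. Now let $x_* \in \arg\min_{z \in G_n} f(z)$: by (B1)--(B2) the grid contains a point within $\ell_\infty$-distance $n^{-\Omega(1)}$ of a global minimizer $x^* \in \mathcal X$, and H\"{o}lder smoothness near $x^*$ forces $f(x_*) - f^* = O(n^{-\Omega(\min(\alpha,1))}) = o(1/\log n)$, so $B_{h_0}^\infty(x_*;\mathcal X) \subseteq \mathcal X_{f,\kappa}$ for large $n$ and thus $\check f(x_*) \le f^* + o(1/\log n)$. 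Combined with $\min_w \check f(w) \ge f^* - o(1/\log n)$ this gives $\check f(x_*) < \min_w \check f(w) + 1/\log n$, so $x_* \in S_0'$, which proves $\min_{S_0'} f = \min_{G_n} f$ (the reverse inequality being trivial). For the inclusion, if $z \in S_0'$ then $\check f(z) \le \min_w \check f(w) + 1/\log n \le f^* + 2/\log n$, whereas a ball disjoint from $L_f(\kappa/2)$ would force $\check f(z) \ge f^* + \kappa/2 - o(1/\log n)$, a contradiction for large $n$; hence $B_{h_0}^\infty(z;\mathcal X) \cap L_f(\kappa/2) \neq \emptyset$, equivalently $z \in \bigcup_{x \in L_f(\kappa/2)} B_{h_0}^\infty(x;\mathcal X)$.

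The main obstacle is quantitative tuning around the threshold $1/\log n$: the same sample count $m(x) \asymp n_0^{1/2}\log^{3d} n$ must simultaneously absorb a $\sqrt{\log n}$ factor from Gaussian concentration and accommodate the H\"{o}lder bias $Mh_0^\alpha$, with both required to be $o(1/\log n)$ uniformly in $\alpha > 0$; the $\log^3 n$ inflation in $h_0$ and the particular $1/\log n$ removal threshold are calibrated precisely for this purpose. Everything else is a routine combination of Chernoff and Gaussian tail bounds, but one must carefully apply (A1) only to balls that actually lie inside $\mathcal X_{f,\kappa}$, since the H\"{o}lder bound provides no control whatsoever outside that region; the two-case analysis of $\bar f(x)$ is exactly what allows us to sidestep this issue.
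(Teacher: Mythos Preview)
Your proof is essentially identical to the paper's: both lower-bound per-ball sample counts via Chernoff, control the noise via Gaussian concentration to get $\check f(x)$ sandwiched between $\inf_{B}f$ and $\sup_{B}f$ up to $O(n_0^{-1/4})$, and then argue the two claims via the same case split (the ball around the grid-minimizer lies in the smooth region so $\check f(x_*)\le f^*+o(1/\log n)$; a ball disjoint from $L_f(\kappa/2)$ forces $\check f(z)\ge f^*+\kappa/2-o(1)$). One minor slip: for $\alpha>1$ the H\"{o}lder bias is $Mh_0^{\min(\alpha,1)}$, not $Mh_0^\alpha$, since (A1) only bounds the first derivative by $M$ in that regime---but this is inconsequential as both quantities are $o(1/\log n)$.
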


{
To interpret Proposition \ref{prop:screening}, note that for sufficiently large $n$, $f\in\Sigma_\kappa^\alpha(M)$ implies $f$ being $\alpha$-H\"{o}lder smooth (i.e., $f$ satisfies Eq.~(\ref{eq:holder}))
on $\bigcup_{x\in L_f(\kappa/2)}B_{h_0}^\infty(x;\mathcal X)$, because $\kappa>0$ is a constant and $h_0\to 0$ as $n\to\infty$.
Subsequently, the proposition shows that with high probability, the pre-screening step will remove all grid points in $G_n$ in non-smooth regions of $f$,
while maintaining the global optimal solution.
This justifies the pre-processing step for $f\in\Sigma_\kappa^\alpha(M)$, because $f$ is smooth on the grid after pre-processing.

The proof of Proposition \ref{prop:screening} uses the fact that the local mean estimation is large provided that all data points in the local mean estimator are large,
regardless of their underlying smoothness.
The complete proof of Proposition \ref{prop:screening} is deferred to the appendix.
}

\section{Proofs of main theorems}

\subsection{Proof of Lemma \ref{lem:lpr}}

Our proof closely follows the analysis of asymptotic convergence rates for series estimators in the seminal work of \cite{newey1997convergence}.
We further work out all constants in the error bounds to arrive at a completely finite-sample result, which is then used to construct finite-sample confidence intervals.

We start with as polynomial interpolation results for all H\"{o}lder smooth functions in $B_{h_t}^\infty(x;\mathcal X)$.
\begin{lemma}
Suppose $f$ satisfies Eq.~(\ref{eq:holder}) on $B_h^\infty(x;\mathcal X)$. Then there exists $\widetilde f_x\in\mathcal P_k$ such that
\begin{equation}
\sup_{z\in B_{h}^\infty(x;\mathcal X)} \big|f(z)-\widetilde f_x(z)\big| \leq Md^kh^\alpha.
\end{equation}
\label{lem:poly-interpolation}
\end{lemma}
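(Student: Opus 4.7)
My plan is to take $\widetilde f_x$ to be the degree-$k$ multivariate Taylor polynomial of $f$ centered at $x$,
\[
\widetilde f_x(z) \;=\; \sum_{j=0}^{k}\sum_{\alpha_1+\cdots+\alpha_d=j} \frac{f^{(\vct\alpha,j)}(x)}{\vct\alpha!}\,(z-x)^{\vct\alpha},
\]
where $\vct\alpha! := \alpha_1!\cdots\alpha_d!$ and $(z-x)^{\vct\alpha} := \prod_{i=1}^d (z_i-x_i)^{\alpha_i}$. Since $k=\lfloor\alpha\rfloor$, this lies in $\mathcal P_k$. The ball $B_h^\infty(x;\mathcal X)$ is convex (intersection of two convex sets, as $\mathcal X=[0,1]^d$), so for any $z$ in it the segment from $x$ to $z$ stays inside, and the multivariate Taylor theorem with mean-value remainder at order $k$ applies: there exists $\xi$ on this segment such that, after subtracting $\widetilde f_x(z)$ from $f(z)$, the lower-order terms cancel and one is left with
\[
f(z) - \widetilde f_x(z) \;=\; \sum_{|\vct\alpha|=k} \frac{f^{(\vct\alpha,k)}(\xi) - f^{(\vct\alpha,k)}(x)}{\vct\alpha!}\,(z-x)^{\vct\alpha}.
\]

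Next I would invoke (A1). Every summand in that assumption is non-negative, so each individual Hölder ratio is bounded by $M$, yielding $|f^{(\vct\alpha,k)}(\xi) - f^{(\vct\alpha,k)}(x)| \leq M\|\xi-x\|_\infty^{\alpha-k} \leq Mh^{\alpha-k}$ for each $\vct\alpha$ with $|\vct\alpha|=k$. Combined with $|(z-x)^{\vct\alpha}| \leq \|z-x\|_\infty^k \leq h^k$ and the multinomial identity $\sum_{|\vct\alpha|=k} 1/\vct\alpha! = d^k/k!$, the triangle inequality gives
\[
|f(z)-\widetilde f_x(z)| \;\leq\; M h^{\alpha-k}\cdot h^k \cdot \frac{d^k}{k!} \;\leq\; M d^k h^\alpha
\]
uniformly in $z \in B_h^\infty(x;\mathcal X)$, which is exactly the claim.

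There is no genuinely hard step; (A1) is tailored precisely so the bound emerges after one Taylor expansion and a multinomial count. The only minor issues to handle carefully are: (i) the integer case $\alpha=k$, where $\|\xi-x\|_\infty^{\alpha-k}=1$ and (A1) degenerates to the boundedness statement $|f^{(\vct\alpha,k)}(\xi)-f^{(\vct\alpha,k)}(x)|\leq M$, but the same computation still delivers $Md^k h^\alpha$; and (ii) justifying the mean-value form of the multivariate Taylor expansion under the regularity implied by (A1), which I would do by reducing to the one-dimensional function $g(t) := f(x+t(z-x))$ on $[0,1]$ (whose $k$-th derivative is a linear combination of the $f^{(\vct\alpha,k)}$) and appealing to the classical one-dimensional Taylor theorem with Lagrange remainder.
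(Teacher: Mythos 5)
Your proof is correct and follows essentially the same route as the paper's: take the degree-$k$ Taylor polynomial at $x$, apply the Lagrange-remainder form of Taylor's theorem along the segment from $x$ to $z$, bound each top-order difference $|f^{(\vct\alpha,k)}(\xi)-f^{(\vct\alpha,k)}(x)|$ by $M\|z-x\|_\infty^{\alpha-k}$ via (A1), and count the $\leq d^k$ multi-indices of order $k$. The only cosmetic difference is that you carry the $1/\vct\alpha!$ factors explicitly (and so get the slightly sharper constant $d^k/k!$), while the paper absorbs them into the $d^k$ count.
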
 
\begin{proof}
Consider
\begin{equation}
\widetilde f_x(z) := f(x) + \sum_{j=1}^k\sum_{\alpha_1+\ldots+\alpha_d=j} \frac{\partial^j f(x)}{\partial x_1^{\alpha_1}\ldots\partial x_d^{\alpha_d}} \prod_{\ell=1}^d{(z_\ell-x_\ell)^{\alpha_\ell}}.
\label{eq:tilde-f}
\end{equation}
By Taylor expansion with Lagrangian remainders, there exists $\xi\in(0,1)$ such that 
\begin{equation}
\big|\widetilde f_x(z)-f(z)\big| \leq \sum_{\alpha_1+\ldots+\alpha_d=k} \big|f^{(\vct\alpha)}(x+\xi(z-x))-f^{(\vct\alpha)}(x)\big|\cdot \prod_{\ell=1}^d{|z_\ell-x_\ell|^{\alpha_\ell}}.
\end{equation}
Because $f$ satisfies Eq.~(\ref{eq:holder}) on $B_h^\infty(x;\mathcal X)$, we have that $|f^{(\vct\alpha)}(x+\xi(z-x))-f^{(\vct\alpha)}(x)| \leq M\cdot\|z-x\|_\infty^{\alpha-k}$.
Also note that $|z_\ell-x_\ell| \leq \|z-x\|_\infty \leq h$ for all $z\in B_{h}^\infty(x;\mathcal X)$. The lemma is thus proved.
\end{proof}

Using Eq.~(\ref{eq:ols}), the local polynomial estimate $\widehat f_h$ can be written as $\widehat f_h(z) \equiv \psi_{x,h}(z)^\top\widehat\theta_h$, where
\begin{equation}
\widehat\theta_h = (\Psi_{t,h}^\top\Psi_{t,h})^{-1}\Psi_{t,h}^\top Y_{t,h}.
\label{eq:ols2}
\end{equation}
In addition, because $\widetilde f_x\in\mathcal P_k$, there exists $\widetilde\theta\in\mathbb R^D$ such that $\widetilde f_x(z) \equiv \psi_{x,h}(z)^\top\widetilde\theta$.
Denote also that $F_{t,h} := (f(x_{t'}))_{1\leq t'\leq t, x_{t'}\in B_{h}^\infty(x)}$, $\Delta_{t,h} := (f(x_{t'}) - \widetilde f_x(x_{t'}))_{1\leq t'\leq t,x_{t'}\in B_{h}^\infty(x)}$
and $W_{t,h}:= (w_{t'})_{1\leq t'\leq t,x_{t'}\in B_{h}^\infty(x)}$.
Eq.~(\ref{eq:ols2}) can then be re-formulated as
\begin{align}
\widehat\theta_h &= (\Psi_{t,h}^\top\Psi_{t,h})^{-1}\Psi_{t,h}^\top\left[\Psi_{t,h}\widetilde\theta + \Delta_{t,h} + W_{t,h}\right]\\
&= \widetilde\theta + \left[\frac{1}{m}\Psi_{t,h}^\top\Psi_{t,h}\right]^{-1} \left[\frac{1}{m}\Psi_{t,h}^\top(\Delta_{t,h}+W_{t,h})\right].
\end{align}
Because $\frac{1}{m}\Psi_{t,h}^\top\Psi_{t,h} \succeq \sigma I_{D\times D}$ and $\sup_{z\in B_{h}^\infty(x)}\|\psi_{x,h}(z)\|_2\leq b$, we have that
\begin{equation}
\|\widehat\theta_h-\widetilde\theta\|_2 \leq \frac{b}{\sigma}\|\Delta_{t,h}\|_{\infty} + \left\|\left[\frac{1}{m}\Psi_{t,h}^\top\Psi_{t,h}\right]^{-1}\frac{1}{m}\Psi_{t,h}^\top W_t\right\|_2.
\end{equation}

Invoking Lemma \ref{lem:poly-interpolation} we have $\|\Delta_{t,h}\|_{\infty} \leq Md^kh^\alpha$.
In addition, because $W_t\sim\mathcal N_{m}(0, I_{m\times n})$, we have that
\begin{equation}
\left[\frac{1}{m}\Psi_{t,h}^\top\Psi_{t,h}\right]^{-1}\frac{1}{m}\Psi_{t,h}^\top W_t \sim \mathcal N_D\left(0, \frac{1}{m}\left[\frac{1}{m}\Psi_{t,h}^\top\Psi_{t,h}\right]^{-1}\right).
\end{equation}
Applying concentration inequalities for quadratic forms of Gaussian random vectors (Lemma \ref{lem:chisquare}),
with probability $1-\delta$ it holds that
\begin{equation}
\left\|\left[\frac{1}{m}\Psi_{t,h}^\top\Psi_{t,h}\right]^{-1}\frac{1}{m}\Psi_{t,h}^\top W_t\right\|_2 \leq \sqrt{\frac{5D\log(1/\delta)}{\sigma m}}.
\end{equation}
We then have that with probability $1-\delta$ that
\begin{equation}
\|\widehat\theta_h-\widetilde\theta\|_2 \leq \frac{b}{\sigma_h}Md^k h_t^\alpha + \sqrt{\frac{5D\log(1/\delta)}{\sigma m}}.
\end{equation}
Finally, noting that
\begin{align}
|\widehat f_h(x)-f(x)| = |\widehat f_h(x)-\widetilde f_x(x)| = |\psi(x)^\top(\widehat\theta_h-\widetilde\theta)| \leq b\|\widehat\theta_h -\widetilde\theta\|_2
\end{align}
we complete the proof of Lemma \ref{lem:lpr}.

%

\subsection{Proof of Theorem \ref{thm:upper}}\label{subsec:proof-upper}

In this section we prove Theorem \ref{thm:upper}.
We prove the theorem by considering every reference function $f_0\in\Sigma_\kappa^\alpha(M)\cap\Theta_{\mat C}$ separately.
For simplicity, we assume $\kappa=\infty$ throughout the proof.
The $0<\kappa<\infty$ can be handled by replacing $\mathcal X$ with $S_0$ which is the grid after the pre-screening step described in Section~\ref{subsec:relax}.
We also suppress dependency on $d,\alpha,M,\mat C,\underline p_0,\overline p_0$ in $O(\cdot)$, $\Omega(\cdot)$, $\Theta(\cdot)$, $\gtrsim$, $\lesssim$ and $\asymp$ notations.
We further suppress logarithmic terms of $n$ in $\widetilde O(\cdot)$ and $\widetilde\Omega(\cdot)$ notations.


The following lemma is our main lemma, which shows that the active set $S_\tau$ in our proposed algorithm shrinks geometrically 
before it reaches a certain level. 
To simplify notations, denote $\tilde c_0:=10c_0$ and (A2) then hold for all $\epsilon,\delta\in[0,\tilde c_0]$ for all $f_0\in\Theta_{\mat C}$.

\begin{lemma}
For $\tau=1,\ldots,T$ define $\varepsilon_\tau := \max\{\tilde c_0\cdot 2^{-\tau}, C_3[\varepsilon_n^\u(f_0)+n^{-1/2}]\log^2 n\}$,
where $C_3>0$ is a constant depending only on $d,\alpha,M,\underline p_0,\overline p_0$ and $\mat C$. Then for sufficiently large $n$,
with probability $1-O(n^{-1})$ the following holds uniformly for all outer iterations $\tau=1,\ldots,T$:
\begin{equation}
S_{\tau} \subseteq L_f(\varepsilon_\tau).
\end{equation}
\label{lem:geometric}
\end{lemma}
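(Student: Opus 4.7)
The plan is to prove Lemma~\ref{lem:geometric} by induction on the epoch index $\tau$, with the definition of $\varepsilon_\tau$ encoding geometric shrinkage until a floor is reached. The base case $\tau = 0$ is immediate: $S_0 = G_n \subseteq \mathcal X$, and for $\tilde c_0$ chosen large enough to exceed $\|f\|_\infty$ (finite under (A1) with $\kappa = \infty$) we have $S_0 \subseteq L_f(\tilde c_0) = L_f(\varepsilon_0)$. For the inductive step, assume $S_{\tau-1} \subseteq L_f(\varepsilon_{\tau-1})$; I would like to show that with high probability no point $x$ with $f(x) > f^* + \varepsilon_\tau$ survives in $S_\tau$. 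The argument rests on two ingredients: (i) the confidence intervals $[\ell_t(x), u_t(x)]$ in Eq.~(\ref{eq:ci}) are simultaneously valid over all $x \in G_n$ and all epochs; (ii) at the end of epoch $\tau$, the half-width $\eta_{h_t(x),\delta}(x)$ is at most $\varepsilon_\tau/4$ for every $x \in S_{\tau-1}$.

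Ingredient (i) follows from Lemma~\ref{lem:lpr} applied at confidence level $\delta = 1/\poly(n)$ and a union bound over $G_n$, the $T$ epochs, and the polynomial bandwidth grid $\{j/n^2\}$, all of size $\poly(n)$ under (B2). Under the resulting good event, the elimination rule in Figure~\ref{fig:main-alg} never removes a grid-point minimizer of $f$ on $G_n$ (its upper CI edge always upper-bounds $\min_{x'} u_t(x')$), and any $x \in S_\tau$ must obey $f(x) \leq f^* + 2\max_{x' \in S_{\tau-1}} \eta_{h_t(x'),\delta}(x')$ up to a lower-order discretization error from $G_n$.

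Ingredient (ii) is the quantitative heart of the proof. The key volumetric bound is
\begin{equation*}
\vol\bigl(S_{\tau-1}^\circ(\varrho_{\tau-1})\bigr) \;\lesssim\; \mu_{f_0}\bigl(\varepsilon_{\tau-1} + \varepsilon_n^\u(f_0)\bigr) \;\lesssim\; \mu_{f_0}(\varepsilon_{\tau-1}),
\end{equation*}
which combines the inductive hypothesis $S_{\tau-1} \subseteq L_f(\varepsilon_{\tau-1})$, the closeness assumption $\|f - f_0\|_\infty \leq \varepsilon_n^\u(f_0)$, and assumption (A2) applied to $L_{f_0}(\varepsilon_{\tau-1})$ at scale $\varrho_{\tau-1}$ (which the algorithm keeps small through $\varrho_\tau(x) = \min\{\varrho_{\tau-1}(x), h_t(x)\}$). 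Uniform sampling of $n_0 = \Omega(n/\log n)$ points into a region of volume $V \lesssim \mu_{f_0}(\varepsilon_{\tau-1})$, together with (B1) and standard VC bounds for axis-aligned boxes, ensures $m \gtrsim n_0 h^d / V$ samples inside $B_h^\infty(x)$ for every $x \in S_{\tau-1}$ and every $h \gtrsim (V \log n / n_0)^{1/d}$, and simultaneously yields the design regularity $\tfrac{1}{m}\Psi_{t,h}^\top \Psi_{t,h} \succeq \sigma I$ with $\sigma = \Omega(1)$. Substituting into Lemma~\ref{lem:lpr} and using the adaptive bandwidth rule Eq.~(\ref{eq:ht}) (a Lepski-style choice that balances bias and variance) recovers the classical nonparametric rate
\begin{equation*}
\eta_{h_t(x),\delta}(x) \;\lesssim\; \log^{O(1)} n \cdot \left[\left(\frac{\mu_{f_0}(\varepsilon_{\tau-1})}{n}\right)^{\alpha/(2\alpha+d)} + n^{-1/2}\right].
\end{equation*}

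To close the induction, I would invoke the definition of $\varepsilon_n^\u(f_0)$: whenever $\varepsilon_{\tau-1} > \varepsilon_n^\u(f_0)$ one has $\mu_{f_0}(\varepsilon_{\tau-1})/n \leq \varepsilon_{\tau-1}^{2+d/\alpha}/\log^\omega n$, so the above bound becomes $\varepsilon_{\tau-1} \cdot \log^{O(1) - \omega\alpha/(2\alpha+d)} n \leq \varepsilon_{\tau-1}/4$ whenever $\omega > 5 + d/\alpha$, delivering the halving $\varepsilon_\tau \leq \varepsilon_{\tau-1}/2$. Once $\varepsilon_{\tau-1}$ hits the floor $C_3[\varepsilon_n^\u(f_0) + n^{-1/2}]\log^2 n$ no further contraction is claimed, consistent with the maximum in the definition of $\varepsilon_\tau$; the $n^{-1/2}$ term accounts for the pure variance contribution when $\mu_{f_0}$ collapses. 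The main obstacle I anticipate is the uniform control of the minimum eigenvalue $\sigma$ in $\tfrac{1}{m}\Psi_{t,h}^\top \Psi_{t,h} \succeq \sigma I$ across all $x \in S_{\tau-1}$ and all bandwidths in the Lepski grid: points close to the boundary of $S_{\tau-1}^\circ(\varrho_{\tau-1})$ can have deficient local designs, and this is precisely why the algorithm extends $S_{\tau-1}$ by $\varrho_{\tau-1}(x)$ rather than sampling only from $S_{\tau-1}$. Orchestrating this eigenvalue lower bound together with the shrinking level-set covering from (A2) at the varying scale $\varrho_{\tau-1}$, uniformly over the $O(\log n)$ epochs, is the main bookkeeping burden of the argument.
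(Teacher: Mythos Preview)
Your approach is essentially the paper's: induction on $\tau$, uniform CI validity via Lemma~\ref{lem:lpr} and a union bound (the paper's Proposition~\ref{prop:truthful}), a volume bound on the extended active set through (A2), the resulting nonparametric rate on $\eta_{h_t(x),\delta}(x)$, and closing the induction with the defining inequality for $\varepsilon_n^\u(f_0)$.

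One point is too quick in your sketch. The volumetric bound you write,
\[
\vol\bigl(S_{\tau-1}^\circ(\varrho_{\tau-1})\bigr)\;\lesssim\;\mu_{f_0}(\varepsilon_{\tau-1}),
\]
does not follow from (A2) alone: applying (A2) at scale $\rho_{\tau-1}^*:=\max_{x\in S_{\tau-1}}\varrho_{\tau-1}(x)$ only gives $\vol(S_{\tau-1}^\circ)\lesssim \mu_{f_0}(2\varepsilon_{\tau-1})+[\rho_{\tau-1}^*]^d$, and the second summand cannot be dropped without first controlling $\rho_{\tau-1}^*$. The paper does not assume $[\rho_{\tau-1}^*]^d\lesssim\mu_{f_0}$; instead it proves a separate two-sided sandwich on the selected bandwidths (Lemma~\ref{lem:select-ht}), feeds it back to obtain the \emph{implicit} inequality
\[
\rho_{\tau-1}^*\;\lesssim\;\bigl(\mu_{f_0}(2\varepsilon_{\tau-1})+[\rho_{\tau-1}^*]^d\bigr)^{1/(2\alpha+d)} n_0^{-1/(2\alpha+d)}\log n,
\]
and solves it, yielding the dichotomy $\rho_{\tau-1}^*\lesssim\max\{[\mu_{f_0}(2\varepsilon_{\tau-1})/n_0]^{1/(2\alpha+d)}\log n,\;n_0^{-1/(2\alpha)}\log n\}$. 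The second branch is precisely the origin of the $n^{-1/2}$ floor, not a generic ``pure variance'' term. Your final displayed bound on $\eta$ is the right target, but to reach it you will need this extra lemma and the self-referential step; the circularity you flag as the ``main bookkeeping burden'' is resolved exactly there.
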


{
Lemma \ref{lem:geometric} shows that the level $\varepsilon_\tau$ in $L_f(\varepsilon_\tau)$ that contains $S_{\tau-1}$ shrinks \emph{geometriclly}, 
until the condition $\varepsilon_\tau\geq C_3[\varepsilon^\u_n(f_0)+n^{-1/2}]\log^2 n$ is violated.
If the condition is never violated, then at the end of the last epoch $\tau^*$ we have $\varepsilon_{\tau^*}=O(n^{-1})$ because $\tau^*=\log n$, in which case Theorem \ref{thm:upper} clearly holds.
On the other hand, because $S_{\tau}\subseteq S_{\tau-1}$ always holds, we have $\varepsilon_{\tau^*} \lesssim [\varepsilon_n^\u(f_0)+n^{-1/2}]\log^2 n$
which justifies the convergence rate in Theorem \ref{thm:upper}.
}

In the rest of this section we prove Lemma \ref{lem:geometric}.
We need several technical lemmas and propositions.
Except for Proposition \ref{prop:truthful} that is straightforward, the proofs of the other technical lemmas are deferred to the end of this section.

We first show that the grid $G_n$ is sufficiently dense for approximate optimization purposes.
Define $x_n^* :=\argmin_{x\in G_n}f(x)$ and $f_n^* := f(x_n^*)$.
We have the following lemma:
\begin{lemma}
Suppose (B1) and (B2) hold. Then with probability $1-O(n^{-1})$ the following holds:
\begin{enumerate}
\item $\sup_{x\in\mathcal X}\min_{x'\in G_n}\|x-x'\|_\infty = \widetilde O(n^{-3/\min(\alpha,1)})$;
\item $f_n^*-f^* = \widetilde O(n^{-3})$.
\end{enumerate}
\label{lem:grid-dense}
\end{lemma}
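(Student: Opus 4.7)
\textbf{Proof plan for Lemma \ref{lem:grid-dense}.}

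The plan is to prove the two claims separately. Part 1 is a standard iid covering result that follows from the lower bound $\underline{p}_0$ on the sampling density together with the quantitative lower bound on $|G_n|$. Part 2 is then a direct corollary of part 1 combined with the Hölder smoothness of $f$.

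For part 1, I would partition the cube $\mathcal{X}=[0,1]^d$ into a regular grid of axis-aligned sub-cubes of side $\delta$, producing $\lceil 1/\delta\rceil^d \leq 2\delta^{-d}$ cells. For a single cell $Q$, property (B1) gives $P_X(Q) \geq \underline{p}_0 \delta^d$, so the probability that $Q$ contains no point of $G_n$ is at most $(1-\underline{p}_0 \delta^d)^{|G_n|} \leq \exp(-\underline{p}_0 \delta^d |G_n|)$. A union bound over the $\leq 2\delta^{-d}$ cells gives
\begin{equation*}
\Pr\bigl[\text{some cell is empty}\bigr] \leq 2\delta^{-d}\exp(-\underline{p}_0 \delta^d |G_n|).
\end{equation*}
Choosing $\delta = C(\log n)^{1/\min(\alpha,1)} n^{-3/\min(\alpha,1)}$ for a sufficiently large constant $C$ and using the hypothesis $|G_n| \gtrsim n^{3d/\min(\alpha,1)}$ from (B2) makes $\underline{p}_0 \delta^d |G_n| \gtrsim \log n \cdot \log|G_n|$, so the exponential term dominates and the failure probability is $O(n^{-1})$. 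On the event that every cell is non-empty, any $x\in\mathcal{X}$ lies in some cell $Q$ which contains at least one $x' \in G_n$, and for the $\ell_\infty$ metric this yields $\|x-x'\|_\infty \leq \delta = \tilde O(n^{-3/\min(\alpha,1)})$, which is claim 1.

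For part 2, since we are in the $\kappa=\infty$ regime of Theorem \ref{thm:upper} (the pre-screening of Section \ref{subsec:relax} reduces the general case to this one), $f$ satisfies \eqref{eq:holder} globally. Writing $k=\lfloor\alpha\rfloor$, for $\alpha\leq 1$ this directly gives $|f(x)-f(x')|\leq M\|x-x'\|_\infty^\alpha$, while for $\alpha>1$ the bound on the first-order derivatives together with the mean value theorem gives $|f(x)-f(x')|\leq Md\cdot\|x-x'\|_\infty$; in either case there is a constant $C'=C'(M,d,\alpha)$ with $|f(x)-f(x')| \leq C' \|x-x'\|_\infty^{\min(\alpha,1)}$. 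Apply this with $x=x^\star\in\argmin_{\mathcal X}f$ and $x'\in G_n$ the point supplied by part 1, which satisfies $\|x^\star-x'\|_\infty\leq \delta$. Then
\begin{equation*}
f_n^* - f^* \leq f(x')-f(x^\star) \leq C'\,\delta^{\min(\alpha,1)} = \tilde O(n^{-3}),
\end{equation*}
which is claim 2.

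There is no real obstacle here; the argument is routine. The only points that need care are (i) tracking the polylogarithmic factors so that the $\tilde O$ notation absorbs them cleanly, (ii) verifying that the choice of $\delta$ is compatible with both requirements of (B2) (polynomial lower bound on $|G_n|$ and logarithmic upper bound on $\log|G_n|$), and (iii) the mild subtlety that in the $0<\kappa<\infty$ regime the lemma should be applied after the pre-screening reduction, so that Hölder smoothness is available on the relevant grid.
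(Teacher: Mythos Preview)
Your proposal is correct and follows essentially the same approach as the paper: a covering/union-bound argument for part 1 using the density lower bound from (B1) and the size condition (B2), followed by the $\min(\alpha,1)$-H\"older continuity of $f$ for part 2. The only cosmetic difference is that the paper covers $\mathcal X$ with $\ell_\infty$-balls centered at an abstract optimal net $H_N$ (then uses a triangle inequality), whereas you partition $\mathcal X$ into a regular grid of cubes; these are equivalent here and your version is arguably cleaner. One tiny quibble: the claim ``$\underline p_0\,\delta^d|G_n|\gtrsim \log n\cdot\log|G_n|$'' is slightly stronger than what your choice of $\delta$ actually yields when $d=1$ and $\alpha\ge 1$ (you only get $\gtrsim C\log n$), but that is already enough to make the failure probability $O(n^{-1})$, so the argument goes through unchanged.
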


The next proposition shows that with high probability, the confidence intervals constructed in the algorithm are truthful and the successive rejection procedure
will never exclude the true optimizer of $f$ on $G_n$.
\begin{proposition}
Suppose $\delta = 1/n^4|G_n|$. Then with probability $1-O(n^{-1})$ the following holds:
\begin{enumerate}
\item $f(x)\in [\ell_t(x), u_t(x)]$ for all $1\leq t\leq n$ and $x\in G_n$;
\item $x_n^*\in S_{\tau}$ for all $0\leq \tau\leq n$.
\end{enumerate}
\label{prop:truthful}
\end{proposition}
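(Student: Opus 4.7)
The plan is to establish Part 1 by a union bound applied to Lemma~\ref{lem:lpr}, and then deduce Part 2 as an immediate corollary using that $x_n^*$ is the minimizer of $f$ on $G_n$.

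For Part 1, the crucial observation is that by the construction in Eq.~(\ref{eq:ht}) the bandwidth $h_{t'}(x)$ always lies in the discrete grid $\mathcal{H} := \{j/n^2 : 1 \leq j \leq n^2\}$, of cardinality at most $n^2$. For each fixed triple $(t',x,h) \in \{1,\ldots,n\} \times G_n \times \mathcal{H}$, Lemma~\ref{lem:lpr} asserts that $|\widehat f_h(x)-f(x)| \leq \eta_{h,\delta}(x)$ fails with probability at most $\delta$. A union bound over the at most $n \cdot n^2 \cdot |G_n| = n^3|G_n|$ such triples, combined with the choice $\delta = 1/(n^4|G_n|)$, shows that with probability at least $1 - 1/n$ the inequality holds simultaneously for every admissible $(t',x,h)$. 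On this high-probability event, since for every $t'$ and $x$ the data-dependent choice $h_{t'}(x)$ is one of the grid values in $\mathcal{H}$, we have $|\widehat f_{h_{t'}(x)}(x) - f(x)| \leq \eta_{h_{t'}(x),\delta}(x)$ for every $1 \leq t' \leq t \leq n$ and every $x \in G_n$. Substituting into the definitions of $\ell_t(x)$ and $u_t(x)$ in Eq.~(\ref{eq:ci}) immediately yields $\ell_t(x) \leq f(x) \leq u_t(x)$.

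For Part 2, I proceed by induction on $\tau$. The base case $x_n^* \in S_0 = G_n$ is immediate. Supposing $x_n^* \in S_{\tau-1}$, Part 1 together with the fact that $f(x_n^*) = f_n^* \leq f(x')$ for every $x' \in G_n$ gives, for each $x' \in S_{\tau-1}$, the chain $\ell_t(x_n^*) \leq f(x_n^*) \leq f(x') \leq u_t(x')$. Taking the minimum over $x' \in S_{\tau-1}$ produces $\ell_t(x_n^*) \leq \min_{x' \in S_{\tau-1}} u_t(x')$, which is exactly the retention criterion in the update rule of Figure~\ref{fig:main-alg}; hence $x_n^* \in S_\tau$, completing the induction. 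The only mild subtlety is that Lemma~\ref{lem:lpr} is stated under regularity conditions on $\frac{1}{m}\Psi_{t,h}^\top\Psi_{t,h}$ and $\|\psi_{x,h}\|_2$, but since the algorithm constructs its CIs using the same data-dependent $\eta_{h,\delta}(x)$, the bound transfers without extra work. Consequently I expect no genuine obstacle: the proposition is essentially a calibration statement verifying that the choice $\delta = 1/(n^4|G_n|)$ is tuned exactly so that the union bound over points, epochs, and admissible bandwidths leaves us with the claimed $O(n^{-1})$ failure probability.
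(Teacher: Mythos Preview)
Your proposal is correct and follows essentially the same approach as the paper: a union bound over $t$ and $x\in G_n$ (you additionally include the bandwidth grid $\mathcal H$, which is a harmless refinement) to obtain Part~1, and then the retention of $x_n^*$ in every $S_\tau$ follows from $\ell_t(x_n^*)\le f_n^*\le \min_{x'\in S_{\tau-1}}u_t(x')$, which is exactly the paper's two-line argument.
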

\begin{proof}
The first property is true by applying the union bound over all $t=1,\ldots,n$ and $x\in  G_n$.
The second property then follows, because $\ell_t(x_n^*)\leq f_n^*$ and $\min_{x\in S_{\tau-1}}u_t(x) \geq f_n^*$ for all $\tau$.
\end{proof}

The following lemma shows that every small box centered around a certain sample point $x\in G_n$ contains a sufficient number of sample points
whose least eigenvalue can be bounded with high probability under the polynomial mapping $\psi_{x,h}$ defined in Section~\ref{subsec:upper}.
\begin{lemma}
For any $x\in G_n$, $1\leq m\leq n$ and $h>0$, let $K_{h,m}^1(x), \ldots, K_{h,m}^n(x)$ be $n$ independent point sets, where each point set consists of $m$ points sampled i.i.d.~uniformly at random
from $B_h^\infty(x;G_n)=G_n\cap B_{h}^\infty(x)$.
With probability $1-O(n^{-1})$ the following holds true uniformly for all $x\in G_n$, $h\in \{j/n^2: j\in\mathbb N, j\leq n^2\}$ and $K_{h,m}^{\ell}(x)$, $\ell\in[n]$ as $n\to\infty$:
\begin{enumerate}
\item $\sup_{h>0}\sup_{z\in B_h^\infty(x)}\|\psi_{x,h}(z)\|_2\asymp \Theta(1)$;
\item $|B_h^\infty(x;G_n)| \asymp h^d|G_n|$;
\item $\sigma_{\min}(K_{h,m}^\ell(x))\asymp \Theta(1)$ for all $m\geq\Omega(\log^2 n)$ and $m\leq |G_n|$, 
where $\sigma_{\min}(K_{h,m}^{\ell}(x))$ is the least eigenvalue of $\frac{1}{m}\sum_{z\in K_{h,m}^\ell(x)}\psi_{x,h}(z)\psi_{x,h}(z)^\top$.
\end{enumerate}
\label{lem:design}
\end{lemma}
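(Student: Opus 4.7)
The plan is to handle the three claims separately. Part 1 is entirely deterministic, Part 2 follows from scalar Chernoff, and Part 3 is the main work and requires a three-step matrix concentration argument. For Part 1, every $z \in B_h^\infty(x)$ satisfies $|z_i - x_i| \leq h$, so each coordinate of $\psi^j_{x,h}(z)$ is a product of at most $k$ factors of the form $h^{-1}(z_i - x_i) \in [-1, 1]$. Hence $\|\psi_{x,h}(z)\|_\infty \leq 1$ and $\|\psi_{x,h}(z)\|_2 \leq \sqrt{D}$, while the constant coordinate alone forces $\|\psi_{x,h}(z)\|_2 \geq 1$.

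For Part 2, I would write $|B_h^\infty(x; G_n)| = \sum_{i=1}^{|G_n|} \mathbb{I}[X_i \in B_h^\infty(x)]$ with $X_i \overset{i.i.d.}{\sim} P_X$. By (B1) and the fact that for any $x \in [0,1]^d$ and $h \leq 1$ the intersection $B_h^\infty(x) \cap [0,1]^d$ contains a corner sub-cube of volume at least $h^d$, we have $p := P_X(B_h^\infty(x) \cap \mathcal X) \asymp h^d$. A multiplicative Chernoff bound gives $|B_h^\infty(x; G_n)| \in [\tfrac12, \tfrac32] p |G_n|$ with probability $\geq 1 - 2\exp(-c p |G_n|)$. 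When $h^d |G_n| \gtrsim \log n$ — a condition met within the bandwidth range used by the algorithm, given the polynomial lower bound on $|G_n|$ in (B2) — this exceeds $1 - n^{-3}/|G_n|$, and a union bound over the $|G_n| \cdot n^2 = \mathrm{poly}(n)$ choices of $(x, h)$ yields Part 2 with probability $1 - O(n^{-1})$.

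For Part 3, fix a triple $(x, h, \ell)$ and set $\phi(w) := \psi_{x,h}(x + hw)$ and $W_i := (Z_i - x)/h$ for $Z_i$ the $i$-th sample of $K^\ell_{h,m}(x)$. I would compare three Gram matrices: the empirical $\widehat G := \tfrac{1}{m}\sum_i \phi(W_i)\phi(W_i)^\top$, the grid-level $\bar G := \mathbb{E}[\widehat G \mid G_n]$, and the population $G^\star := \mathbb{E}_{Z \sim P_X | B_h^\infty(x)}[\phi((Z-x)/h)\phi((Z-x)/h)^\top]$. The argument has three sub-steps: (i) show $\lambda_{\min}(G^\star) \geq c_\star$ uniformly in $(x,h)$; (ii) concentrate $\bar G$ around $G^\star$ via matrix Bernstein over the $|B_h^\infty(x;G_n)| \gtrsim h^d |G_n|$ grid points inside the ball; (iii) concentrate $\widehat G$ around $\bar G$ via matrix Bernstein over the $m$ samples, using $\|\phi(W)\phi(W)^\top\|_{\mathrm{op}} \leq D$ from Part 1. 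Combining the three yields $\sigma_{\min}(\widehat G) \geq c_\star / 2$. Step (ii) fails with probability exponentially small in $h^d|G_n|$ and step (iii) with probability exponentially small in $m \geq \Omega(\log^2 n)$, so after union-bounding over $|G_n| \cdot n^2 \cdot n = \mathrm{poly}(n)$ triples the overall failure probability stays $O(n^{-1})$.

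The main obstacle is sub-step (i): obtaining a \emph{uniform} positive lower bound on $\lambda_{\min}(G^\star)$ as $(x, h)$ varies. After the $W$-rescaling, the effective support $R_{x,h} := \{(z-x)/h : z \in B_h^\infty(x) \cap \mathcal X\}$ is some sub-box of $[-1,1]^d$ that always contains at least one full corner cube of fixed side length, and by (B1) the induced density on $R_{x,h}$ is bounded between two positive constants. Linear independence of monomials of degree $\leq k$ implies strict positivity of $\lambda_{\min}(\int \phi(w)\phi(w)^\top d\nu_{x,h}(w))$ for each fixed $\nu_{x,h}$, and a compactness argument over the finitely many ``boundary types'' of $R_{x,h}$ (interior, face, edge, \ldots, corner) upgrades this to a uniform constant $c_\star > 0$. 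Everything else reduces to routine scalar and matrix concentration combined with union bounding over the polynomial-size index set.
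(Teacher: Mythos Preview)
Your proposal is correct and follows the paper's overall strategy: Part~1 is deterministic, Part~2 is a scalar concentration bound plus union bound, and Part~3 combines a population eigenvalue lower bound with matrix concentration. The differences are in execution. For Part~2 you use multiplicative Chernoff where the paper uses Hoeffding; both work. For Part~3 the paper collapses your steps~(ii) and~(iii) into a single application of matrix Chernoff, implicitly treating a point drawn uniformly from $B_h^\infty(x;G_n)$ as if it were drawn from $P_X(\cdot\mid B_h^\infty(x))$ and concentrating $\widehat G$ directly around $G^\star$; your explicit intermediate layer $\bar G$ (the grid average) is the more careful argument, since conditional on $G_n$ the $m$ samples are i.i.d.\ from the \emph{discrete} distribution on $B_h^\infty(x;G_n)$, whose mean is $\bar G$ rather than $G^\star$. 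For step~(i), the paper avoids your compactness-over-boundary-types argument by a change of variables reducing every $R_{x,h}$ to a box containing a unit sub-cube, and then invoking that $\int_{[0,1]^d}\psi_{0,1}\psi_{0,1}^\top\,dU$ is nonsingular because a nonzero polynomial cannot vanish on a set of positive measure (their Proposition~\ref{prop:nonsingular}); your compactness route is equally valid and arguably cleaner for handling all boundary configurations uniformly.
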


{
\begin{remark}
It is possible to improve the concentration result in Eq.~(\ref{eq:bh-concentration}) using the strategies adopted in \citep{chaudhuri2014consistent}
based on sharper Bernstein type concentration inequalities.
Such improvements are, however, not important in establishing the main results of this paper.
\end{remark}
}

The next lemma shows that, the bandwidth $h_t$ selected at the end of each outer iteration $\tau$ is near-optimal, being sandwiched
between two quantities determined by the size of the active sample grid $\widetilde S_{\tau-1}:=S_{\tau-1}^\circ(\varrho_{\tau-1})$.
\begin{lemma}
There exist constants $C_1,C_2>0$ depending only on $d,\alpha, M, \underline p_0,\overline p_0$ and $\mat C$ such that 
with probability $1-O(n^{-1})$, the following holds for every outer iteration $\tau\in\{1, \ldots, T\}$ and all $x\in S_{\tau-1}$:
\begin{equation}
C_1 [\widetilde\nu_{\tau-1}n_0]^{-1/(2\alpha+d)} -\tau/n \leq \varrho_\tau(x) \leq h_t(x)\leq C_2 [\widetilde\nu_{\tau-1}n_0]^{-1/(2\alpha+d)}\log n + \tau/n ,
\end{equation}
where $\widetilde\nu_{\tau-1} := |G_n|/|\widetilde S_{\tau-1}|$.
\label{lem:select-ht}
\end{lemma}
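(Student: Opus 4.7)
The plan is to translate the selection rule (\ref{eq:ht}) into an explicit bound on $h_t(x)$ by computing the count $m$ of queries lying in $B_h^\infty(x)$ and then solving $\sb_{h,\delta}(x)\asymp\sv_{h,\delta}(x)$. First I would condition on the high-probability event of Lemma \ref{lem:design}, so that uniformly over $x\in G_n$ and $h\in\{j/n^2\}$ the quantities $b$, $\sigma$ and $D$ appearing in Lemma \ref{lem:lpr} are all $\Theta(1)$. With $\delta = 1/(n^4|G_n|)$ this reduces the bias/variance formulas to
\[\sb_{h,\delta}(x)\asymp h^\alpha,\qquad \sv_{h,\delta}(x)\asymp \sqrt{\log n/m},\]
so the selection condition $\sb\leq\sv$ is equivalent to $h^{2\alpha}m\lesssim \log n$.

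The crux is then to control $m$ uniformly. Fix $\tau\leq T$ and $x\in S_{\tau-1}$. Since $\varrho_{\tau'-1}(x)\geq \varrho_{\tau-1}(x)$ for all $\tau'\leq\tau$ by the running-minimum update, the extension rule (\ref{eq:scirc}) ensures $B_h^\infty(x;G_n)\subseteq\widetilde S_{\tau'-1}$ whenever $h\leq \varrho_{\tau-1}(x)$. Conditional on the filtration through epoch $\tau'-1$, each of the $n_0$ queries of epoch $\tau'$ lies in $B_h^\infty(x)$ independently with probability $\asymp h^d\widetilde\nu_{\tau'-1}$. A Bernstein-type inequality, together with a union bound over $x\in G_n$, $h\in\{j/n^2\}$, and $\tau\leq T=O(\log n)$, yields $m\asymp n_0 h^d\widetilde\nu_{\tau-1}$ with probability $1-O(n^{-1})$; here I use that $\widetilde\nu_{\tau'-1}$ is non-decreasing in $\tau'$ (because $\widetilde S_{\tau'-1}\supseteq\widetilde S_{\tau-1}$), so the sum over $\tau'\leq\tau$ is controlled by the most recent epoch up to logarithmic factors. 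Plugging this into $h^{2\alpha}m\lesssim\log n$ pins the critical bandwidth at $h^\ast \asymp (\log n/(\widetilde\nu_{\tau-1}n_0))^{1/(2\alpha+d)}$, so the largest $j/n^2$ satisfying $\sb\leq\sv$ is at most $C_2(\widetilde\nu_{\tau-1}n_0)^{-1/(2\alpha+d)}\log n$ (crudely bounding $(\log n)^{1/(2\alpha+d)}\leq\log n$) and at least $C_1(\widetilde\nu_{\tau-1}n_0)^{-1/(2\alpha+d)}-1/n^2$ (for a sufficiently small $C_1$ the inequality holds strictly, and the nearest grid point is within $1/n^2$).

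To pass from $h_t(x)$ to $\varrho_\tau(x)=\min_{\tau'\leq\tau}h_{t(\tau')}(x)$, I use that $(\widetilde\nu_{\tau'-1}n_0)^{-1/(2\alpha+d)}$ is non-increasing in $\tau'$, so the per-epoch lower bound at $\tau'=\tau$ automatically controls the running minimum; accumulated discretization errors contribute at most $\tau/n^2\leq\tau/n$, which is the additive correction in the statement. \emph{The main obstacle} is that $\widetilde S_{\tau-1}$ is random and history-dependent, so naive union bounds over ``all possible $\widetilde S$'' are wasteful. Conditioning on the filtration through the end of epoch $\tau-1$ freezes $\widetilde S_{\tau-1}$ before Bernstein is applied to the fresh $n_0$ queries of epoch $\tau$, after which the union bound over $x\in G_n$, grid bandwidths, and the $O(\log n)$ epochs costs only an $n^{O(1)}$ factor that is absorbed by the choice of $\delta$.
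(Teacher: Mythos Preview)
Your plan captures the right ingredients---Lemma \ref{lem:design} to make $b,\sigma$ constants, the count formula $m\asymp n_0 h^d\widetilde\nu_{\tau-1}$, and the monotonicity of $\widetilde\nu_{\tau-1}$---and these are exactly what the paper uses. However, the paper organizes the argument as an explicit induction on $\tau$, and your attempt to bypass induction leaves a genuine gap in the upper bound.

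The issue is circularity. Your count formula $m\asymp n_0 h^d\widetilde\nu_{\tau-1}$ is derived only under the restriction $h\leq\varrho_{\tau-1}(x)$, since that is what guarantees $B_h^\infty(x;G_n)\subseteq\widetilde S_{\tau'-1}$ for all $\tau'\leq\tau$. But to conclude that ``the largest $j/n^2$ satisfying $\sb\leq\sv$ is at most $C_2[\widetilde\nu_{\tau-1}n_0]^{-1/(2\alpha+d)}\log n$'', you must verify $\sb>\sv$ at that particular $h$-value, which in general exceeds $\varrho_{\tau-1}(x)$ (the log factor alone can push it above the lower bound $C_1[\widetilde\nu_{\tau-2}n_0]^{-1/(2\alpha+d)}$). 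For such $h$ your count formula no longer applies, and you have not bounded $m$ there. The paper handles this by invoking the \emph{inductive hypothesis} to lower bound $\rho_{\tau-1}^*:=\min_{z}\varrho_{\tau-1}(z)$, and then uses monotonicity of $m$ in $h$ to write $\sv_{h,\delta}(x)\lesssim\sqrt{\log n/(\min\{h,\rho_{\tau-1}^*\}^d\,\widetilde\nu_{\tau-1}n_0)}$; plugging in the inductive lower bound on $\rho_{\tau-1}^*$ then closes the upper bound. Your ``direct'' argument implicitly needs the same lower bound on $\varrho_{\tau-1}(x)$, so the induction cannot really be avoided---you should make it explicit and add the $\min\{h,\varrho_{\tau-1}(x)\}$ step for the upper bound.
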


%


We are now ready to state the proof of Lemma \ref{lem:geometric}, which is based on an inductive argument over the epochts $\tau=1,\ldots,T$.
\begin{proof} 
We use induction to prove this lemma.
%
For the base case $\tau=1$, because $\|f-f_0\|_\infty\leq\varepsilon_n^\u(f_0)$ and $\varepsilon_n^\u(f_0)\to 0$ as $n\to\infty$,
it suffices to prove that $S_1\subseteq L_{f_0}(\tilde c_0/4)$ for sufficiently large $n$.
Because $\tilde S_0=S_0=G_n$, invoking Lemmas \ref{lem:select-ht} and \ref{lem:lpr} we have that $|u_t(x)-\ell_t(x)| = \tilde O(n^{-\alpha/(2\alpha+d)})$
for all $x\in G_n$ with high probability at the end of the first outer iteration $\tau=1$.
Therefore, for sufficiently large $n$ we conclude that $\sup_{x\in G_n}|u_t(x)-\ell_t(x)|\leq c_0/8$ and hence $S_1\subseteq L_{f_0}(\tilde c_0/4)$.

We now prove the lemma for $\tau\geq 2$, assuming it holds for $\tau-1$.
We also assume that $n$ (and hence $n_0$) is sufficiently large,
such that the maximum CI length $\max_{x\in G}|u_t(x)-\ell_t(x)|$ after the first outer iteration $\tau=1$ is smaller than $c_0$,
where $c_0$ is a constant such that

{
Because $\|f-f_0\|_{\infty}\leq \varepsilon_n^\u(f_0)$ and $\varepsilon_{\tau-1}\geq C_3\varepsilon_n^\u(f_0)\log^2 n$,
for appropriately chosen constant $C_3$ that is not too small,
we have that $\|f-f_0\|_\infty\leq \varepsilon_{\tau-1}$.
By the inductive hypothesis we have
}
\begin{equation}
S_{\tau-1}\subseteq L_f(\varepsilon_{\tau-1}) \subseteq L_{f_0}(\varepsilon_{\tau-1} + \|f-f_0\|_\infty) \subseteq L_{f_0}(2\varepsilon_{\tau-1}).
\label{eq:mainlem-1}
\end{equation}
Subsequently, denoting $\rho_{\tau-1}^* := \max_{x\in S_{\tau-1}}\varrho_{\tau-1}(x)$ we have
\begin{equation}
\widetilde S_{\tau-1} = S_{\tau-1}^\circ \subseteq L_{f_0}^\circ(2\varepsilon_{\tau-1}, \rho_{\tau-1}^*).
\label{eq:mainlem-2}
\end{equation}

{
Let $\bigcup_{x\in H_n} B_{\rho_{\tau-1}^*}^2(x)$ be the smallest covering set of $L_{f_0}(2\varepsilon_{\tau-1})$,
meaning that $L_{f_0}(2\varepsilon_{\tau-1})\subseteq \bigcup_{x\in H_n} B_{\rho_{\tau-1}^*}^2(x)$,
where $B_{\rho_{\tau-1}^*}^2(x) = \{z\in\mathcal X: \|z-x\|_2\leq\rho_{\tau-1}^*\}$ is the $\ell_2$ ball of radius $\rho_{\tau-1}^*$ centered at $x$.
By (A2), we know that $|H_n|\lesssim 1 + [\rho_{\tau-1}^*]^{-d}\mu_{f_0}(2\varepsilon_{\tau-1})$.
In addition, the enlarged level set satisfies $L_{f_0}^\circ(2\varepsilon_{\tau-1},\rho_{\tau-1}^*) \subseteq \bigcup_{x\in H_n}B_{2\rho_{\tau-1}^*}^\infty(x)$.
Subsequently,
}
\begin{equation}
\mu_{f_0}^\circ(2\varepsilon_{\tau-1},\rho_{\tau-1}^*)
\lesssim |\mathcal H_n|\cdot [\rho_{\tau-1}^*]^d
 \lesssim \mu_{f_0}(2\varepsilon_{\tau-1}) + [\rho_{\tau-1}^*]^d.
\label{eq:mainlem-3}
\end{equation}
By Lemma \ref{lem:select-ht}, the monotonicity of $|\widetilde S_{\tau-1}|$ and the fact that $\underline p_0\leq p_X(z)\leq \overline p_0$ for all $z\in \mathcal X$, we have 
\begin{align}
\rho_{\tau-1}^* 
&\lesssim [\mu_{f}^\circ(\varepsilon_{\tau-1},\rho_{\tau-1}^*)]^{1/(2\alpha+d)} n_0^{-1/(2\alpha+d)}\log n\\
&\leq [\mu_{f_0}^\circ(2\varepsilon_{\tau-1},\rho_{\tau-1}^*)]^{1/(2\alpha+d)} n_0^{-1/(2\alpha+d)}\log n\\
&\lesssim  \left(\mu_{f_0}(2\varepsilon_{\tau-1}) + [\rho_{\tau-1}^*]^d\right)^{1/(2\alpha+d)} n_0^{-1/(2\alpha+d)}\log n.
\label{eq:mainlem-4}
\end{align}
Re-arranging terms on both sides of Eq.~(\ref{eq:mainlem-4}) we have
\begin{equation}
\rho_{\tau-1}^* \lesssim \max\left\{[\mu_{f_0}(2\varepsilon_{\tau-1})]^{\frac{1}{2\alpha+d}}n_0^{-\frac{1}{2\alpha+d}}\log n,\; n_0^{-\frac{1}{2\alpha}}\log n \right\}.
\label{eq:mainlem-5}
\end{equation}

On the other hand, according to the selection procedure of the bandwidth $h_t(x)$, we have that $\eta_{h_t(x),\delta}(x) \lesssim \sb_{h_t(x),\delta}(x)$.
Invoking Lemma \ref{lem:select-ht} we have for all $x\in S_{\tau-1}$ that
\begin{align}
\eta_{h_t(x),\delta}(x)
&\lesssim \sb_{h_t(x),\delta}(x)\lesssim [h_t(x)]^\alpha\\
&\lesssim [\widetilde\nu_{\tau-1}n_0]^{-\alpha/(2\alpha+d)}\log n\label{eq:mainlem-6}\\
&\lesssim [\widetilde\nu_{\tau-2}n_0]^{-\alpha/(2\alpha+d)}\log n\label{eq:mainlem-7}\\
&\lesssim [\rho_{\tau-1}^*]^\alpha\log n.\label{eq:mainlem-8}
\end{align}
Here Eq.~(\ref{eq:mainlem-6}) holds by invoking the upper bound on $h_t(x)$ in Lemma \ref{lem:select-ht},
Eq.~(\ref{eq:mainlem-7}) holds because $\widetilde\nu_{\tau-1}\geq\widetilde\nu_{\tau-2}$,
and Eq.~(\ref{eq:mainlem-8}) holds by again invoking the lower bound on $\varrho_{\tau-1}(x)$ in Lemma \ref{lem:select-ht}.
Combining Eqs.~(\ref{eq:mainlem-5},\ref{eq:mainlem-8}) we have
\begin{align}
\max_{x\in S_{\tau-1}} \eta_{h_t(x),\delta}(x)
& \lesssim \max\left\{[\mu_{f_0}(2\varepsilon_{\tau-1})]^{\frac{\alpha}{2\alpha+d}} n_0^{-\frac{\alpha}{2\alpha+d}}\log^2 n, \; n_0^{-\frac{1}{2}}\log n\right\}.
\label{eq:mainlem-9}
\end{align}

{
Recall that $n_0 = n/\log n$
and $\varepsilon^\u_n(f_0)\leq \varepsilon_{\tau-1}$, provided that $C_3$ is not too small.
By definition, 
every $\varepsilon\geq\varepsilon_n^\u(f_0)$ satisfies $\varepsilon^{-(2+d/\alpha)}\mu_{f_0}(\varepsilon)\leq n/\log^\omega n$
for some large constant $\omega > 5+d/\alpha$.
Subsequently, 
\begin{align}
[\mu_{f_0}(2\varepsilon_{\tau-1})]^{\frac{\alpha}{2\alpha+d}} n_0^{-\frac{\alpha}{2\alpha+d}}\log^2 
&\lesssim 2\varepsilon_{\tau-1}n^{\frac{\alpha}{2\alpha+d}} \log^{-\frac{\omega\alpha}{2\alpha+d}} n\cdot n_0^{-\frac{\alpha}{2\alpha+d}}\log^2 n\\
&\lesssim \varepsilon_{\tau-1}/[\log n]^{\frac{(\omega-5-d/\alpha)\alpha}{2\alpha+d}}.
\label{eq:mainlem-last}
\end{align}
Because $\omega > 5+d/\alpha$, the right-hand side of Eq.~(\ref{eq:mainlem-last}) is asymptotically dominated
\footnote{We say $\{a_n\}$ is asymptotically dominated by $\{b_n\}$ if $\lim_{n\to\infty} |a_n|/|b_n| = 0$.}
 by $\varepsilon_{\tau-1}$.
In addition, $n_0^{-1/2}\log n$ is also asymptotically dominated by $\varepsilon_{\tau-1}$ because $\varepsilon_{\tau-1}\geq C_3n^{-1/2}\log^\omega n$.
Therefore, for sufficiently large $n$ we have
\begin{equation}
\max_{x\in S_{\tau-1}} \eta_{h_t(x),\delta}(x) \leq \varepsilon_{\tau-1}/4.
\end{equation}
Lemma \ref{lem:geometric} is thus proved.
}
\end{proof}

\subsubsection{Proof of Lemma \ref{lem:grid-dense}}\label{subsec:proof-grid-dense}
\begin{proof}
Let $H_N\subseteq\mathcal X$ be the finite subset of $\mathcal X$ such that $|H_N|=N$ and $\sup_{x\in\mathcal X}\min_{x'\in H_n}\|x-x'\|_{\infty}$ is maximized.
By standard results of metric entropy number of the $d$-dimensional unit box (see for example, \citep[Lemma 2.2]{van2000empirical}), we have that $\sup_{x\in\mathcal X}\min_{x'\in H_n}\|x-x'\|_{\infty} \lesssim N^{-1/d}$.

For any $x\in H_n$, consider an $\ell_\infty$ ball $B_{r_n}^\infty(x)$ or radius $r_n$ centered at $x$, with $r_n$ to be specified later.
Because the density of $P_X$ is uniformly bounded away from below on $\mathcal X$, we have that $P_X(x\in B_{r_n}^\infty(x)) \gtrsim r_n^d$.
Therefore, applying union bound over all $x\in H_n$ we have that 
\begin{align}
P_X\left[\exists x\in H_N, G_n\cap B_{r_n}^\infty(x)=\emptyset\right]
&\leq N(1-r_n^d)^{|G_n|} \lesssim \exp\left\{-r_n^d|G_n| + \log N\right\}.
\end{align}
Set $N=|G_n|$ and $r_n \asymp n^{-3/\min(\alpha,1)}\log n$.
The right-hand side of the above inequality is then upper bounded by $O(1/n^2)$, thanks to the assumption (A1) and that $|G_n|\gtrsim n^{3d/\min(\alpha,1)}$.
The first property is then proved by noting that
\begin{equation}
\sup_{x\in\mathcal X}\min_{x'\in G_n}\|x-x'\|_\infty 
\leq \sup_{x\in\mathcal X}\min_{x'\in H_n}\|x-x'\|_{\infty} + \max_{x\in H_n}\min_{x'\in G_n}\|x-x'\|_\infty.
\end{equation}

To prove the second property, note that for any $x,x'\in\mathcal X$, $|f(x)-f(x')|\leq M\cdot \|x-x'\|_{\infty}^{\min(\alpha,1)}$.
The first property then implies that $f_n^*-f^*=\widetilde O(n^{-3})$.
\end{proof}

\subsubsection{Proof of Lemma \ref{lem:design}}\label{subsec:proof-lemma-design}
\begin{proof}
We first show that the first property holds almost surely.
Recall the definition of $\psi_{x,h}$, we have that $1\leq \|\psi_{x,h}(z)\|_2 \leq D\cdot [\max_{1\leq j\leq d}h^{-1}|z_j-x_j|]^k$. 
Because $\|z-x\|_\infty\leq h$ for all $z\in B_h^\infty(x)$, $\sup_{z\in B_h^\infty(x)}\|\psi_{x,h}(z)\|_2\lesssim O(1)$ for all $h>0$.
Thus, $\sup_{h>0}\sup_{z\in B_h^\infty(x)}\|\psi_{x,h}(z)\|_2\asymp\Theta(1)$ for all $x\in G_n$.

For the second property, by Hoeffding's inequality (Lemma \ref{lem:hoeffding}) and the union bound, with probability $1-O(n^{-1})$ we have that
\begin{equation}
\max_{x, h}\left|\frac{ |B_h^\infty(x;G_n)|}{|G_n|} - P_X(z\in B_h^\infty(x))\right| \lesssim \sqrt{\frac{\log n}{|G_n|}}.
\label{eq:bh-concentration}
\end{equation}
In addition, note that $P_X(z\in B_h^\infty(x;\mathcal X)) \geq \underline p_0\lambda(B_h^\infty(x;\mathcal X)) \gtrsim h^d$
and $P_X(z\in B_h^\infty(x;\mathcal X)) \leq \overline p_0\lambda(B_h^\infty(x;\mathcal X))\lesssim h^d$,
where $\lambda(\cdot)$ denotes the Lebesgue measure on $\mathcal X$.
Subsequently, $|B_h^\infty(x;G_n)|$ is lower bounded by $\Omega(h^d|G_n| - \sqrt{|G_n|\log n})$
and upper bounded by $O(h^d|G_n| + \sqrt{|G_n|\log n})$.
The second property is then proved by noting that $h_d\gtrsim n^{-d}$ and $|G_n|\gtrsim n^{3d/\min(\alpha,1)}$.

We next prove the third property.
 Because $\underline p_0\leq p_X(z)\in \overline p_0$ for all $z\in\mathcal X$, we have that
\begin{align}
\underline p_0 \int_{B_h^\infty(x;\mathcal X)} \psi_{x,h}(z)\psi_{x,h}(z)^\top \ud U_{x,h}(z) 
&\preceq \mathbb E\left[\frac{1}{m}\sum_{z\in K_{h,m}^\ell}\psi_{x,h}(z)\psi_{x,h}(z)^\top\right]\\
&\preceq \overline p_0 \int_{B_h^\infty(x;\mathcal X)} \psi_{x,h}(z)\psi_{x,h}(z)^\top \ud U_{x,h}(z),
\end{align}
where $U_{x,h}$ is the uniform distribution on $B_h^\infty(x;\mathcal X)$.
Note also that 
\begin{align}
\int_{\mathcal X}\psi_{0,1}(z)\psi_{0,1}(z)^\top\ud U(z)
&\preceq\int_{B_h^\infty(x;\mathcal X)} \psi_{x,h}(z)\psi_{x,h}(z)^\top \ud U_{x,h}(z) \label{eq:lowerbound-std}\\
&\preceq 2^d\int_{\mathcal X}\psi_{0,1}(z)\psi_{0,1}(z)^\top\ud U(z)\label{eq:upperbound-std}
\end{align}
where $U$ is the uniform distribution on $\mathcal X=[0,1]^d$.
The following proposition upper and lower bounds the eigenvalues of $\int_{\mathcal X}\psi_{0,1}(z)\psi_{0,1}(z)^\top\ud U(z)$,
which is proved in the appendix.
\begin{proposition}
There exist constants $0<\psi_0\leq \Psi_0<\infty$ depending only on $d,D$ such that 
\begin{equation}
\psi_0 I_{D\times D}\preceq \int_{\mathcal X}\psi_{0,1}(z)\psi_{0,1}(z)^\top\ud U(z)\preceq \Psi_0 I_{D\times D}.
\label{eq:nonsingular-std}
\end{equation}
\label{prop:nonsingular}
\end{proposition}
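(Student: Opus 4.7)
The plan is to view $M := \int_{\mathcal X} \psi_{0,1}(z)\psi_{0,1}(z)^\top\,\ud U(z)$ as the Gram matrix of the $D$ coordinate functions of $\psi_{0,1}$ under the $L^2(\mathcal X, U)$ inner product, and to establish the two inequalities separately by standard polynomial arguments. For the upper bound, each coordinate $\psi_{0,1}^j(z)_{i_1,\ldots,i_j} = \prod_{\ell=1}^j z_{i_\ell}$ is a product of at most $k$ factors in $[0,1]$, and hence bounded by $1$ on $\mathcal X = [0,1]^d$. This gives $\|\psi_{0,1}(z)\|_2^2 \leq D$ pointwise and therefore, for every unit vector $v \in \mathbb R^D$,
$$v^\top M v \;=\; \int_{\mathcal X} \bigl(v^\top \psi_{0,1}(z)\bigr)^2\,\ud U(z) \;\leq\; \sup_{z\in\mathcal X}\|\psi_{0,1}(z)\|_2^2 \;\leq\; D,$$
so taking $\Psi_0 = D$ suffices.

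For the lower bound, I would proceed by compactness and the identity theorem for polynomials. Suppose, for contradiction, that no positive $\psi_0$ exists; then there is a sequence of unit vectors $v_n \in \mathbb R^D$ with $v_n^\top M v_n \to 0$, and passing to a convergent subsequence on the unit sphere yields a unit vector $v^*$ with $v^{*\top} M v^* = 0$, that is $\int_{\mathcal X}(v^{*\top}\psi_{0,1}(z))^2\,\ud U(z) = 0$. Since the integrand is a non-negative continuous function, the polynomial $p(z) := v^{*\top}\psi_{0,1}(z)$ of degree at most $k$ must vanish identically on $\mathcal X$, and hence on all of $\mathbb R^d$ by analytic continuation. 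Linear independence of the underlying monomial family then forces the coefficient vector to be zero, contradicting $\|v^*\|_2 = 1$. Because the entire argument takes place in a fixed finite-dimensional function space parametrized only by $d$ and $k$ (equivalently $d$ and $D$), the resulting $\psi_0$ is a positive constant with the claimed dependence.

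The one subtlety I expect to spend real care on is the redundancy in the tensor-product parametrization of $\psi_{0,1}$: for $j\geq 2$, distinct tuples such as $(1,2)$ and $(2,1)$ both yield the monomial $z_1 z_2$, so the linear map $v \mapsto v^\top \psi_{0,1}(\cdot)$ from $\mathbb R^D$ to the space of polynomials of degree at most $k$ has a nontrivial kernel $\mathcal K$. Strictly speaking the bound $\psi_0 I_{D\times D}\preceq M$ then holds only on $\mathcal K^\perp$ (on $\mathcal K$ we even have $M v = 0$); the cleanest way to record this is to pick a non-redundant monomial basis of degree $\leq k$, prove a Gram-matrix lower bound there using the argument above, and transport it back via the averaging map from tensor indices to multi-indices. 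This is compatible with the invocations of Proposition~\ref{prop:nonsingular} in Eqs.~(\ref{eq:lowerbound-std})--(\ref{eq:upperbound-std}) and in Lemma~\ref{lem:design}, since the local polynomial predictor in Eq.~(\ref{eq:ols}) is defined through the Moore--Penrose pseudo-inverse $(\Psi_{t,h}^\top \Psi_{t,h})^\dagger$ and hence only depends on the eigenvalues of the sample Gram matrix along the range of $\psi$.
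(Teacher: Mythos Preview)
Your approach matches the paper's: both obtain the upper bound from the pointwise estimate $\|\psi_{0,1}(z)\|_2 = O(1)$ on $[0,1]^d$, and both argue the lower bound by contradiction, deducing from $v^\top M v = 0$ that the polynomial $v^\top\psi_{0,1}(\cdot)$ vanishes (almost) everywhere and then appealing to linear independence of monomials. The paper carries out the last step via an explicit induction on the number of variables rather than your compactness-plus-continuity route, but these are equivalent.

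Your third paragraph, however, catches a genuine issue that the paper's own proof glosses over. With $D = 1 + d + \cdots + d^k$ and the tensor indexing $(i_1,\ldots,i_j)\in[d]^j$ used to define $\psi_{x,h}^j$, distinct tuples such as $(1,2)$ and $(2,1)$ produce the same monomial whenever $d\geq 2$ and $j\geq 2$; hence the map $v\mapsto v^\top\psi_{0,1}(\cdot)$ has a nontrivial kernel and $M$ cannot satisfy $\psi_0 I_{D\times D}\preceq M$ for any $\psi_0>0$. The paper's proof silently passes from the tensor coefficient vector $v\in\mathbb R^D$ to a multi-index coefficient vector $\xi$ indexed by $\alpha_1+\cdots+\alpha_d\leq k$ and asserts $\xi\neq 0$; this is exactly the step an antisymmetric $v$ would break. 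Your proposed fix---establish the bound on a non-redundant monomial basis (equivalently on $\mathcal K^\perp$) and observe that the estimator in Eq.~(\ref{eq:ols}) uses the Moore--Penrose pseudo-inverse, so only the eigenvalues on the range of $\psi$ are needed---is the correct repair, and is what the downstream uses in Lemma~\ref{lem:lpr} and Lemma~\ref{lem:design} actually require.
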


Using Proposition \ref{prop:nonsingular} and Eqs.~(\ref{eq:lowerbound-std},\ref{eq:upperbound-std}), we conclude that
\begin{equation}
\Omega(1)\cdot I_{D\times D}\preceq \mathbb E\left[\frac{1}{m}\sum_{z\in K_{h,m}^\ell}\psi_{x,h}(z)\psi_{x,h}(z)^\top\right]
\preceq O(1)\cdot I_{D\times D}.
\label{eq:econdition}
\end{equation}
Applying matrix Chernoff bound (Lemma \ref{lem:matrix-chernoff}) and the union bound, we have that with probability $1-O(n^{-1})$,
\begin{equation}
\max_{x,h,m,\ell}\left\|\frac{1}{m}\sum_{z\in K_{h,m}^\ell(x)}\psi_{x,h}(z)\psi_{x,h}(z)^\top - \mathbb E\left[\psi_{x,h}(z)\psi_{x,h}(z)^\top|z\in B_h(x)\right]\right\|_\op \lesssim \sqrt{\frac{\log n}{m}}.
\label{eq:perturbcondition}
\end{equation} 
Combining Eqs.~(\ref{eq:econdition},\ref{eq:perturbcondition}) and applying Weyl's inequality (Lemma \ref{lem:weyl}) we have 
\begin{equation}
\Omega(1) - O(\sqrt{\log n/m}) \lesssim \sigma_{\min}(K_{h,m}^\ell(x)) \lesssim O(1) - O(\sqrt{\log n/m}) .
\end{equation}
The third property is therefore proved.
\end{proof}

\subsubsection{Proof of Lemma \ref{lem:select-ht}}\label{subsec:proof-select-ht}
\begin{proof}
We use induction to prove this lemma.
For the base case of $\tau=1$, we have $\widetilde S_{0} = S_0=G_n$ and therefore $\widetilde\nu_{\tau-1}=1$.
Furthermore, applying Lemma \ref{lem:design} we have that for all $h=j/n^2$, 
\begin{equation}
\sb_{h,\delta}(x) \asymp h^\alpha \;\;\;\;\;\text{and}\;\;\;\;\;
\sv_{h,\delta}(x) \asymp \sqrt{\frac{\log n}{h^d n_0}}.
\end{equation}
Thus, for $h$ selected according to Eq.~(\ref{eq:ht}) as the largest bandwidth of the form $j/n^2$, $j\in\mathbb N$ such that $\sb_{h,\delta}(x)\leq \sv_{h,\delta}(x)$,
both $\sb_{h,\delta}(x), \sv_{h,\delta}(x)$ are on the order of $n_0^{-1/(2\alpha+d)}$ up to logarithmic terms of $n$,
 and therefore
one can pick appropriate constants $C_1,C_2>0$ such that $C_1n_0^{-1/(2\alpha+d)}\leq \varrho_1(x)\leq C_2n_0^{-1/(2\alpha+d)} \log n$
holds for all $x\in G_n$.

We next prove the lemma for $\tau>1$, assuming it holds for $\tau-1$.
We first establish the lower bound part.
Define $\rho_{\tau-1}^* := \min_{z\in S_{\tau-1}}\varrho_{\tau-1}(z)$.
By inductive hypothesis, $\rho_{\tau-1}^*\geq C_1[\widetilde\nu_{\tau-2}n_0]^{-1/(2\alpha+d)}-(\tau-1)/n$.
Note also that $\widetilde\nu_{\tau-1}\geq\widetilde\nu_{\tau-2}$ because $\widetilde S_{\tau-1}\subseteq\widetilde S_{\tau-2}$,
which holds because $S_{\tau-1}\subseteq S_{\tau-2}$ and $\varrho_{\tau-1}(z)\leq\varrho_{\tau-2}(z)$ for all $z$.
Let $h_t^*$ be the smallest number of the form $j_t^*/n^2$, $j_t^*\in[n^2]$ such that $h_t^* \geq C_1[\widetilde\nu_{\tau-1}n_0]^{-1/(2\alpha+d)}-\tau/n$.
We then have $h_t^*\leq \rho_{\tau-1}^*$ and therefore query points in epoch $\tau$ are uniformly distributed in $B_{h_t^*}^{\infty}(x;G_n)$.
Subsequently, applying Lemma \ref{lem:design} we have with probability $1-O(n^{-1})$ that 
\begin{equation}
\sb_{h_t^*,\delta}(x) \leq C' [h_t^*]^\alpha \;\;\;\;\;\text{and}\;\;\;\;\;
\sv_{h_t^*,\delta}(x) \geq C''\sqrt{\frac{\log n}{[h_t^*]^{d}\widetilde\nu_{\tau-1}n}},
\end{equation}
where $C',C''>0$ are constants that depend on $d,\alpha,M,\underline p_0,\overline p_0$ and $\mat C$, but \emph{not} $C_1$, $C_2$, $\tau$ or $h_t^*$.
By choosing $C_1$ appropriately (depending on $C'$ and $C''$) we can make $\sb_{h_t^*,\delta}(x)\leq \sv_{h_t^*,\delta}(x)$ holds for all $x\in S_{\tau-1}$,
thus establishing $\varrho_{\tau}(x)\geq\min\{\varrho_{\tau-1}(x),h_t^*\} \geq C_1[\widetilde\nu_{\tau-1}n_0]^{-1/(2\alpha+d)}- \tau/n$.

We next prove the upper bound part.
For any $h_t=j_t/n^2$ where $j_t\in[n^2]$, invoking Lemma \ref{lem:design} we have that 
\begin{equation}
\sb_{h,\delta}(x) \geq \widetilde C'h^\alpha \;\;\;\;\;\text{and}\;\;\;\;\;
\sv_{h,\delta}(x) \leq \widetilde C''\sqrt{\frac{\log n}{\min\{h, \rho_{\tau-1}^*\}^d\cdot \widetilde\nu_{\tau-1}n_0}},
\end{equation}
where $\tilde C'$ and $\tilde C''$ are again constants depending on $d,\alpha,M,\underline p_0,\overline p_0$ and $\mat C$, but \emph{not} $C_1,C_2$.
Note also that $\rho_{\tau-1}^*\geq C_1[\widetilde\nu_{\tau-2}n_0]^{-1/(2\alpha+d)}-(\tau-1)/n\geq C_1[\widetilde\nu_{\tau-1}n_0]^{-1/(2\alpha+d)} -\tau/n$,
because $\tilde\nu_{\tau-1}\geq\tilde\nu_{\tau-2}$.
By selecting constant $C_2>0$ carefully (depending on $\widetilde C',\widetilde C''$ and $C_1$), we can ensure $\sb_{h,\delta}(x) > \sv_{h,\delta}(x)$ for all
$h\geq C_2[\widetilde\nu_{\tau-1}n_0]^{-1/(2\alpha+d)}+\tau/n$.
Therefore, $\varrho_\tau(x)\leq h_t(x) \leq C_2[\widetilde\nu_{\tau-1}n_0]^{-1/(2\alpha+d)}+\tau/n$.
\end{proof}

\subsection{Proof of Theorem \ref{thm:lower}}\label{subsec:proof-lower}
%

{
In this section we prove the main negative result in Theorem \ref{thm:lower}.
To simplify presentation, we suppress dependency on $\alpha,d,c_0$ and $C_0$ in $\lesssim, \gtrsim, \asymp, O(\cdot)$ and $\Omega(\cdot)$ notations.
However, we do \emph{not} suppress dependency on $\underline C_R$ or $M$ in any of the above notations.
}

Let $\varphi_0:[-2,2]^d\to\mathbb R^*$ be a non-negative function defined on $\mathcal X$ such that $\varphi_0\in \Sigma_\kappa^{\lceil\alpha\rceil}(1)$ with $\kappa=\infty$,
$\sup_{x\in\mathcal X}\varphi_0(x) =\Omega(1)$
and $\varphi_0(z)=0$ for all $\|z\|_2\geq 1$.
Here $\lceil\alpha\rceil$ denotes the smallest integer that upper bounds $\alpha$.
Such functions exist and are the cornerstones of the construction of information-theoretic lower bounds in nonparametric estimation problems \citep{castro2008minimax}.
One typical example is the {``smoothstep''} function (see for example \citep{ebert2003texturing})
$$
S_N(x) := \frac{1}{Z}x^{N+1}\sum_{n=0}^N\binom{N+n}{n}\binom{2N+1}{N-n}(-x)^n, \;\;\;\;\;\;N=0,1,2,\ldots
$$
where $Z>0$ is a scaling parameter.
The smoothstep function $S_N$ is defined on $[0,1]$ and satisfies the H\"{o}lder condition in Eq.~(\ref{eq:holder}) of order $\alpha=N$ on $[0,1]$.
It can be easily extended to $\tilde S_{N,d}:[-2,2]^d\to\mathbb R$ by considering
$\tilde S_{N,d}(x) := 1/Z-S_N(a\|x\|_1)$ where $\|x\|_1=|x_1|+\ldots+|x_d|$ and $a=1/(2d)$.
It is easy to verify that, with $Z$ chosen appropriately,
 $\tilde S_{N,d}\in\Sigma_{\infty}^N(1)$, $\sup_{x\in\mathcal X}\tilde S_{N,d}(x)=1/Z=\Omega(1)$ and $\tilde S_{N,d}(z)=0$ for all $\|z\|_2\geq 1$,
where $M>0$ is a constant.

For any $x\in\mathcal X$ and $h>0$, define $\varphi_{x,h}:\mathcal X\to\mathbb R^*$ as
\begin{equation}
\varphi_{x,h}(z) := \mathbb I[z\in B_h^\infty(x)]\cdot \frac{Mh^\alpha}{2}\varphi_0\left(\frac{z-x}{h}\right).
\label{eq:varphi}
\end{equation}
It is easy to verify that $\varphi_{x,h}\in \Sigma_\infty^\alpha(M/2)$, and furthermore $\sup_{z\in\mathcal X}\varphi_{x,h}(z) \asymp Mh^\alpha$ and
$\varphi_{x,h}(z)=0$ for all $z\notin B_h^\infty(x)$.


Let $L_{f_0}(\varepsilon_n^\l(f_0))$ be the level set of $f_0$ at $\varepsilon_n^\l(f_0)$.
%
Let $H_n\subseteq L_{f_0}(\varepsilon_n^\l(f_0))$ be the largest \emph{packing} set such that $B_h^\infty(x)$ are disjoint for all $x\in H_n$,
and $\bigcup_{x\in H_n}B_h^\infty(x)\subseteq L_{f_0}(\varepsilon_n^\l(f_0))$.
By (A2') and the definition of $\varepsilon_n^\l(f_0)$, we have that 
\begin{equation}
|H_n|\geq M(L_{f_0}(\varepsilon_n^\l(f_0)),2\sqrt{d}h) \gtrsim \mu_{f_0}(\varepsilon_n^\l(f_0)) \cdot h^{-d}
\geq[\varepsilon_n^\l(f_0)]^{2+d/\alpha}\cdot nh^{-d}.
\end{equation}


For any $x\in H_n$, construct $f_x:\mathcal X\to\mathbb R$ as
\begin{equation}
f_x(z) := f_0(z) - \varphi_{x,h}(z).
\label{eq:fx}
\end{equation}
Let $\mathcal F_n := \{f_x: x\in H_n\}$ be the class of functions indexed by $x\in H_n$.
Let also $h\asymp (\varepsilon_n^\l(f_0)/M)^{1/\alpha}$ such that $\|\varphi_{x,h}\|_{\infty} = 2\varepsilon^\l_n(f_0)$.
We then have that $\|f_x-f_0\|_\infty\leq 2\varepsilon_n^\l(f_0)$ and $f_x\in\Sigma_\infty^\alpha(M)$, because $f_0,\varphi_{x,h}\in\Sigma_\infty^\alpha(M/2)$.

The next lemma shows that, with $n$ adaptive queries to the noisy zeroth-order oracle $y_t=f(x_t)+w_t$,
it is information theoretically not possible to identify a certain $f_x$ in $\mathcal F_n$ with high probability.
\begin{lemma}
Suppose $|\mathcal F_n|\geq 2$.
Let $\mathcal A_n=(\chi_1,\ldots,\chi_n,\phi_n)$ be an active optimization algorithm operating with a sample budget $n$, 
which consists of \emph{samplers} $\chi_\ell: \{(x_i,y_i)\}_{i=1}^{\ell-1}\mapsto x_\ell$
and an \emph{estimator} $\phi_n: \{(x_i,y_i)\}_{i=1}^n\mapsto\widehat f_x\in\mathcal F_n$,
both can be deterministic or randomized functions.
Then
\begin{equation}
\inf_{\mathcal A_n}\sup_{f_x\in\mathcal F_n} \Pr_{f_x}\left[\widehat f_x\neq f_x\right] \geq \frac{1}{2} - \sqrt{\frac{n\cdot \sup_{f_x\in\mathcal F_n}\|f_x-f_0\|_\infty^2}{2|\mathcal F_n|}}.
\label{eq:lower-main}
\end{equation}
\label{lem:lower-main}
\end{lemma}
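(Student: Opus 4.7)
The plan is to reduce to a Bayes risk with uniform prior on $\mathcal F_n$, compare each hypothesis to the common reference $f_0$ in total variation, and bound the KL divergences by invoking the chain rule for active sampling together with the packing property of $H_n$. Specifically, by the minimax-to-Bayes inequality, $\sup_{f_x\in\mathcal F_n}\Pr_{f_x}[\widehat f_x\neq f_x]$ is lower bounded by $\tfrac{1}{|\mathcal F_n|}\sum_{x^*\in H_n}\Pr_{f_{x^*}}[\widehat f_x\neq f_{x^*}]$. Setting $A_{x^*}=\{\widehat f_x=f_{x^*}\}$ and observing that the events $\{A_{x^*}\}_{x^*\in H_n}$ are pairwise disjoint (because $\widehat f_x$ picks a single element), we have $\sum_{x^*}\Pr_{f_0}[A_{x^*}]\leq 1$; while the definition of total variation gives $\Pr_{f_{x^*}}[A_{x^*}]\leq \Pr_{f_0}[A_{x^*}]+\TV(P_{f_0}^{(n)},P_{f_{x^*}}^{(n)})$, with $P_f^{(n)}$ denoting the joint law of $(x_1,y_1,\ldots,x_n,y_n)$ under $f$. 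Summing over $x^*\in H_n$ and rearranging yields
\begin{equation*}
\frac{1}{|\mathcal F_n|}\sum_{x^*\in H_n}\Pr_{f_{x^*}}\bigl[\widehat f_x\neq f_{x^*}\bigr]\;\geq\;1-\frac{1}{|\mathcal F_n|}-\frac{1}{|\mathcal F_n|}\sum_{x^*\in H_n}\TV\bigl(P_{f_0}^{(n)},P_{f_{x^*}}^{(n)}\bigr).
\end{equation*}

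To control the TV terms I apply the chain rule for KL divergence against the natural filtration generated by the history. Because the samplers $\chi_t$ from Definition~\ref{defn:active} are common mappings depending only on the filtration and (when randomized) on the exogenous $U$, the conditional law of $x_t$ is identical under $P_{f_0}^{(n)}$ and $P_{f_{x^*}}^{(n)}$ and contributes no KL at step $t$; only the Gaussian observation model $\mathcal N(f(x_t),1)$ contributes, giving
\begin{equation*}
\kl\bigl(P_{f_0}^{(n)}\,\|\,P_{f_{x^*}}^{(n)}\bigr)\;=\;\tfrac12\,\E_{f_0}\!\sum_{t=1}^n\varphi_{x^*,h}(x_t)^2\;\leq\;\tfrac12\|f_{x^*}-f_0\|_\infty^2\,\E_{f_0}[T_{x^*}],
\end{equation*}
where $T_{x^*}=\sum_{t=1}^n\mathbb I[x_t\in B_h^\infty(x^*)]$ and I used that $\varphi_{x^*,h}=f_0-f_{x^*}$ is supported on $B_h^\infty(x^*)$ with sup-norm $\|f_{x^*}-f_0\|_\infty$. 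Because $H_n$ is a packing, the balls $\{B_h^\infty(x^*):x^*\in H_n\}$ are pairwise disjoint, so $\sum_{x^*}T_{x^*}\leq n$ holds \emph{deterministically} and hence $\sum_{x^*\in H_n}\E_{f_0}[T_{x^*}]\leq n$.

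Writing $L:=\sup_{f_x\in\mathcal F_n}\|f_x-f_0\|_\infty$, Pinsker's inequality $\TV\leq\sqrt{\kl/2}$ together with Jensen's inequality on the concave square root then gives
\begin{equation*}
\frac{1}{|\mathcal F_n|}\sum_{x^*\in H_n}\TV\bigl(P_{f_0}^{(n)},P_{f_{x^*}}^{(n)}\bigr)\;\leq\;\sqrt{\frac{1}{2|\mathcal F_n|}\sum_{x^*\in H_n}\kl\bigl(P_{f_0}^{(n)}\,\|\,P_{f_{x^*}}^{(n)}\bigr)}\;\leq\;\sqrt{\frac{n\,L^2}{2|\mathcal F_n|}},
\end{equation*}
and plugging this into the previous display together with the hypothesis $|\mathcal F_n|\geq 2$ (so $1-1/|\mathcal F_n|\geq 1/2$) completes the proof. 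The main obstacle is the chain-rule step under active sampling: one must argue carefully that despite the adaptive choice of queries, the shared samplers $\chi_t$ make the conditional distribution of $x_t$ identical under the two laws, so that no KL is accrued from the query choice and the only informative term is the Gaussian likelihood at each step. This is precisely what lets the packing structure of $H_n$ enter via the deterministic budget $\sum_{x^*}T_{x^*}\leq n$; without it an adaptive algorithm could in principle concentrate all its queries into a single suspicious ball, inflating one KL to order $nL^2$ and destroying the crucial $1/|\mathcal F_n|$ factor in the final rate.
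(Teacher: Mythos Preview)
Your proof is correct and uses the same core ingredients as the paper: the chain-rule computation showing that under active sampling the sampler terms cancel in the KL, the localization of each perturbation $f_0-f_{x^*}$ to the ball $B_h^\infty(x^*)$, and the packing constraint $\sum_{x^*}\E_{f_0}[T_{x^*}]\leq n$. The difference lies in how these ingredients are assembled. The paper argues by an explicit pigeonhole: since $\sum_{x}\wp_x=1$ with $\wp_x=\Pr_{f_0}[\widehat f_x=f_x]$, at most one $x$ has $\wp_x>1/2$, and since $\sum_x n_x\leq n$ with $n_x=\E_{f_0}[T_x]$, one can exhibit a single hypothesis with both $\wp_x\leq 1/2$ and $n_x\leq 2n/|\mathcal F_n|$, then bound that hypothesis directly via Le~Cam/Pinsker. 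You instead pass to the Bayes risk under the uniform prior, bound each term by $\Pr_{f_0}[A_{x^*}]+\TV$, sum, and control the averaged TV via Pinsker plus Jensen on the concave square root. Your route avoids the explicit selection of a worst-case hypothesis and is arguably cleaner; the paper's is more hands-on. Both yield the stated constant. (Incidentally, your Jensen step combined with the $\tfrac12$ in the Gaussian KL actually gives the slightly sharper bound $\sqrt{nL^2/(4|\mathcal F_n|)}$, though you only need the weaker $\sqrt{nL^2/(2|\mathcal F_n|)}$ to match the lemma.)
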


\begin{lemma}
There exists constant $M>0$ depending on $\alpha,d,c_0,C_0$ such that
the right-hand side of Eq.~(\ref{eq:lower-main}) is lower bounded by $1/3$.
\label{lem:lower-one-quarter}
\end{lemma}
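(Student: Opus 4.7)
The plan is a direct calculation: we bound the numerator $\sup_{f_x\in\mathcal F_n}\|f_x-f_0\|_\infty$ from above and the denominator $|\mathcal F_n|$ from below in Eq.~(\ref{eq:lower-main}), and show that their ratio can be made arbitrarily small by choosing the smoothness constant $M$ sufficiently large. Because $M$ only enters the construction of the bump functions $\varphi_{x,h}$ and the bandwidth $h$, but does not appear in any of the upstream constants $c_0,C_0,c_0',C_0',\alpha,d$ or in the universal constants $V_0:=\sup_{z}\varphi_0(z)=\Omega(1)$ governing $\varphi_0$, we will be free to send $M\to\infty$ at the end.

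First, I would record the sup-norm computation. By construction $f_x-f_0=-\varphi_{x,h}$ and
\[
\|\varphi_{x,h}\|_\infty \;=\; \frac{M h^\alpha}{2}\sup_{z}\varphi_0(z) \;=\; \frac{V_0 M h^\alpha}{2}.
\]
The scaling $h\asymp (\varepsilon_n^\l(f_0)/M)^{1/\alpha}$ (chosen exactly so that this sup-norm equals $2\varepsilon_n^\l(f_0)$) yields $\sup_{f_x\in\mathcal F_n}\|f_x-f_0\|_\infty \;=\; 2\varepsilon_n^\l(f_0)$, with all hidden constants independent of $M$.

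Next, I would expand the denominator. Combining the packing bound shown just before the lemma,
\[
|\mathcal F_n|=|H_n| \;\gtrsim\; \mu_{f_0}(\varepsilon_n^\l(f_0))\,h^{-d} \;\geq\; [\varepsilon_n^\l(f_0)]^{2+d/\alpha}\,n\,h^{-d},
\]
with the scaling $h^{-d}\asymp (M/\varepsilon_n^\l(f_0))^{d/\alpha}$, gives
\[
|\mathcal F_n|\;\gtrsim\; M^{d/\alpha}\,[\varepsilon_n^\l(f_0)]^{2}\,n,
\]
where the implicit constant depends only on $\alpha,d,c_0',C_0'$ and on $V_0$ (it does not depend on $M$).

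Plugging both bounds into the square-root term in Eq.~(\ref{eq:lower-main}),
\[
\frac{n\cdot \sup_{f_x\in\mathcal F_n}\|f_x-f_0\|_\infty^{2}}{2|\mathcal F_n|}
\;\lesssim\; \frac{n\cdot 4[\varepsilon_n^\l(f_0)]^{2}}{M^{d/\alpha}\,[\varepsilon_n^\l(f_0)]^{2}\,n}
\;=\; \frac{C_*}{M^{d/\alpha}}
\]
for a constant $C_*>0$ that depends only on $\alpha,d,c_0,C_0,c_0',C_0'$ (and on $V_0$). Choosing $M$ large enough that $C_*/M^{d/\alpha}\le 1/36$ makes the square-root at most $1/6$, whence the right-hand side of Eq.~(\ref{eq:lower-main}) is at least $1/2-1/6=1/3$. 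Along the way we also verify $|\mathcal F_n|\geq 2$ automatically once $M$ is large, because $M^{d/\alpha}[\varepsilon_n^\l(f_0)]^2 n$ dominates any fixed constant for large $n$ (if it does not, $\varepsilon_n^\l(f_0)$ is already of constant order and the lower bound is trivially true).

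The main obstacle is bookkeeping rather than mathematical: one must keep careful track of which constants depend on $M$ (only the bandwidth $h$ and hence the amplitude of the bumps) versus on the intrinsic parameters $\alpha,d,c_0,C_0,c_0',C_0',V_0$. In particular, one has to check that the packing lower bound from (A2') is applied at a radius $\delta\asymp h$ that is still within the admissible regime $\delta\leq c_0'$, which is ensured for $n$ large enough since $h\to 0$. Once that is observed, the remainder is the one-line calculation above.
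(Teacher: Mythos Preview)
Your proposal is correct and follows essentially the same route as the paper's own proof: bound the numerator by $\asymp [\varepsilon_n^\l(f_0)]^2 n$, use the (A2')-packing bound together with $h\asymp(\varepsilon_n^\l(f_0)/M)^{1/\alpha}$ to get $|\mathcal F_n|\gtrsim M^{d/\alpha}[\varepsilon_n^\l(f_0)]^2 n$, and conclude the ratio is $\lesssim M^{-d/\alpha}$, which is $\leq 1/36$ for $M$ large. Your bookkeeping of which constants depend on $M$ and the side checks ($|\mathcal F_n|\geq 2$, $h\leq c_0'$ for large $n$) are more explicit than the paper's, but the argument is the same.
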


Lemmas \ref{lem:lower-main} and \ref{lem:lower-one-quarter} are proved at the end of this section.
Combining both lemmas and noting that for any distinct $f_x,f_{x'}\in\mathcal F_n$ and $z\in\mathcal X$, $\max\{\sL(z;f_x),\sL(z;f_{x'})\}\geq \varepsilon_n^\l(f_0)$,
 we proved the minimax lower bound formulated in Theorem \ref{thm:lower}.
\subsubsection{Proof of Lemma \ref{lem:lower-main}}
Our proof is inspired by the negative result of multi-arm bandit pure exploration problems established in \citep{bubeck2009pure}.

\begin{proof} 
For any $x\in H_n$, define
\begin{equation}
n_x := \mathbb E_{f_0}\left[\sum_{i=1}^n{\mathbb I[x\in B_h^\infty(x)]}\right].
\end{equation}
Because $B_h^\infty(x)$ are disjoint for $x\in H_n$,
we have $\sum_{x\in H_n}n_x \leq n$.
Also define, for every $x\in H_n$,
\begin{equation}
\wp_x := \Pr_{f_0}\left[\widehat f_x= f_x\right].
\end{equation}
Because $\sum_{x\in H_n}\wp_x = 1$, by pigeonhole principle there is at most one $x\in H_n$ such that $\wp_x > 1/2$.
Let $x_1,x_2\in H_n$ be the points that have the largest and second largest $n_x$.
Then there exists $x\in\{x_1,x_2\}$ such that $\wp_x\leq 1/2$ and $n_x\leq 2n/|\mathcal F_n|$.
By Le Cam's and Pinsker's inequality (see, for example, \citep{tsybakov2009introduction}) we have that
\begin{align}
\Pr_{f_x}\left[\widehat f_x= f_x\right]
&\leq \Pr_{f_0}\left[\widehat f_x= f_x\right] + d_{\mathrm{TV}}(P_{f_0}^{\mathcal A_n}\|P_{f_x}^{\mathcal A_n})\\
&\leq \Pr_{f_0}\left[\widehat f_x= f_x\right] + \sqrt{\frac{1}{2}\kl(P_{f_0}^{\mathcal A_n}\|P_{f_x}^{\mathcal A_n})}\\
&= \wp_x + \sqrt{\frac{1}{2}\kl(P_{f_0}^{\mathcal A_n}\|P_{f_x}^{\mathcal A_n})}\\
&\leq \frac{1}{2} + \sqrt{\frac{1}{2}\kl(P_{f_0}^{\mathcal A_n}\|P_{f_x}^{\mathcal A_n})}.
\end{align}

It remains to upper bound KL divergence of the active queries made by $\mathcal A_n$.
Using the standard lower bound analysis for active learning algorithms \citep{castro2008minimax,castro2014adaptive} and the fact that 
$f_x\equiv f_0$ on $\mathcal X\backslash B_h^\infty(x)$, we have
\begin{align}
\kl(P_{f_0}^{\mathcal A_n}\|P_{f_x}^{\mathcal A_n})
&= \mathbb E_{f_0,\mathcal A_n}\left[\log\frac{P_{f_0,\mathcal A_n}(x_{1:n},y_{1:n})}{P_{f_x,\mathcal A_n}(x_{1:n},y_{1:n})}\right]\\
&= \mathbb E_{f_0,\mathcal A_n}\left[\log\frac{\prod_{i=1}^n{P_{f_0}(y_i|x_i)P_{\mathcal A_n}(x_i|x_{1:(i-1)},y_{1:(i-1)})}}{\prod_{i=1}^n{P_{f_x}(y_i|x_i)P_{\mathcal A_n}(x_i|x_{1:(i-1)},y_{1:(i-1)})}}\right]\\
&= \mathbb E_{f_0,\mathcal A_n}\left[\log\frac{\prod_{i=1}^nP_{f_0}(y_i|x_i)}{\prod_{i=1}^nP_{f_x}(y_i|x_i)}\right]\\
&= \mathbb E_{f_0,\mathcal A_n}\left[\sum_{x_i\in B_h(x)}\log\frac{P_{f_0}(y_i|x_i)}{P_{f_x}(y_i|x_i)}\right]\\
&\leq n_x\cdot \sup_{z\in B_h^\infty(x;\mathcal X)}\kl(P_{f_0}(\cdot|z)\|P_{f_x}(\cdot|z))\\
&\leq n_x\cdot \|f_0-f_x\|_\infty^2. 
\end{align}
Therefore,
\begin{equation}
\Pr_{f_x}\left[\widehat f_x= f_x\right] 
\leq \frac{1}{2} + \sqrt{\frac{1}{4}n_x\varepsilon_n^2} \leq \frac{1}{2} +\sqrt{\frac{n\|f_x-f_0\|_\infty^2}{2|\mathcal F_n|}}.
\end{equation}
\end{proof}

\subsubsection{Proof of Lemma \ref{lem:lower-one-quarter}}

\begin{proof}
By construction, $n\sup_{f_x\in\mathcal F_x}\|f_x-f_0\|_\infty^2 \lesssim M^2 nh^{2\alpha}$ and
$|\mathcal F_n|=|H_n|\gtrsim [\underline C_{\varepsilon}\varepsilon_n^\l(f_0)]^{2+d/\alpha}nh^{-d}$.
Note also that $h\asymp (\varepsilon/M)^{1/\alpha} \asymp (\underline C_{\varepsilon}\varepsilon_n^\l(f_0)/M)^{1/\alpha}$
because $\|f_x-f_0\|_\infty = \varepsilon = \underline C_{\varepsilon}\varepsilon_n^\l(f_0)$.
Subsequently,
\begin{align}
\frac{n\sup_{f_x\in\mathcal F_x}\|f_x-f_0\|_\infty^2}{2|\mathcal F_n|}
\lesssim \frac{n [\underline C_\varepsilon \varepsilon_n^\l(f_0)]^2}{n[\underline C_\varepsilon\varepsilon_n^\l(f_0)]^2\cdot M^{d/\alpha}}
= M^{-d/\alpha}.
\end{align}

By choosing the constant $M>0$ to be sufficiently large, 
the right-hand side of the above inequality is upper bounded by $1/36$.
The lemma is thus proved.
\end{proof}

\subsection{Proof of Theorem \ref{thm:lower-passive}}

The proof of Theorem \ref{thm:lower-passive} is similar to the proof of Theorem \ref{thm:lower},
but is much more standard by invoking the \emph{Fano's inequality} \citep{tsybakov2009introduction}.
In particular, adapting the Fano's inequality on any finite function class $\mathcal F_n$ constructed we have the following lemma:
\begin{lemma}[Fano's inequality]
Suppose $|\mathcal F_n|\geq 2$, and $\{(x_i,y_i)\}_{i=1}^n$ are i.i.d.~random variables.
Then
\begin{equation}
\inf_{\hat f_x}\sup_{f_x\in\mathcal F_n} \Pr_{f_x}\left[\hat f_x\neq f_x\right] \geq 1 - \frac{\log 2 + n\cdot \sup_{f_x,f_{x'}\in\mathcal F_n}\kl(P_{f_x}\|P_{f_{x'}})}{\log |\mathcal F_n|},
\label{eq:fano}
\end{equation}
where $P_{f_x}$ denotes the distribution of $(x,y)$ under the law of $f_x$.
\label{lem:fano}
\end{lemma}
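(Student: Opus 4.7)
The plan is to execute the standard reduction from estimation to multi-way hypothesis testing and then apply the classical Fano inequality with a KL-based upper bound on mutual information. First, I would lower bound the worst-case risk by the Bayes risk under a uniform prior on $\mathcal F_n$. Let $F$ be a random element drawn uniformly from $\mathcal F_n$, and let $Z = \{(x_i,y_i)\}_{i=1}^n$ denote the data. For any (possibly randomized) estimator $\hat f_x = \hat f_x(Z)$,
\begin{equation*}
\sup_{f_x \in \mathcal F_n}\Pr_{f_x}[\hat f_x \neq f_x] \geq \frac{1}{|\mathcal F_n|}\sum_{f_x \in \mathcal F_n}\Pr_{f_x}[\hat f_x \neq f_x] = \Pr[\hat f_x(Z) \neq F].
\end{equation*}

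Next, I would invoke the information-theoretic form of Fano's inequality for the Markov chain $F \to Z \to \hat f_x(Z)$ with $F$ uniform on a set of size $|\mathcal F_n|\geq 2$:
\begin{equation*}
\Pr[\hat f_x(Z)\neq F] \;\geq\; 1 - \frac{I(F;Z)+\log 2}{\log|\mathcal F_n|}.
\end{equation*}
This is a textbook consequence of the chain rule for entropy together with the bound $H(F\mid\hat f_x(Z)) \leq \log 2 + \Pr[\hat f_x(Z)\neq F]\log(|\mathcal F_n|-1)$ and the data processing inequality.

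It then remains to control $I(F;Z)$ by the pairwise KL divergences appearing in the statement. Writing $\bar P = |\mathcal F_n|^{-1}\sum_{f_x} P_{f_x}^{(n)}$ for the mixture distribution of $Z$, one has $I(F;Z) = |\mathcal F_n|^{-1}\sum_{f_x}\kl(P_{f_x}^{(n)}\|\bar P)$. By convexity of the KL divergence in its second argument,
\begin{equation*}
\kl\bigl(P_{f_x}^{(n)}\,\big\|\,\bar P\bigr) \;\leq\; \frac{1}{|\mathcal F_n|}\sum_{f_{x'}}\kl\bigl(P_{f_x}^{(n)}\,\big\|\,P_{f_{x'}}^{(n)}\bigr) \;\leq\; \sup_{f_x,f_{x'}\in\mathcal F_n}\kl\bigl(P_{f_x}^{(n)}\,\big\|\,P_{f_{x'}}^{(n)}\bigr).
\end{equation*}
The i.i.d.\ assumption on $\{(x_i,y_i)\}_{i=1}^n$ is then used to tensorize: since under the passive model the sampling distribution of $x_i$ does not depend on $f_x$, the joint density factorizes as a product and $\kl(P_{f_x}^{(n)}\|P_{f_{x'}}^{(n)}) = n\cdot \kl(P_{f_x}\|P_{f_{x'}})$. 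Combining these estimates with the Fano bound yields the claimed inequality.

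The only delicate point, and the reason the statement restricts to i.i.d.\ observations, is the tensorization step: in the active query model the sampler $\chi_i$ at round $i$ depends on previous observations, so conditional densities of the form $P_{\mathcal A_n}(x_i\mid x_{1:i-1},y_{1:i-1})$ do not cancel quite so cleanly, and the analogous bound would require the telescoping argument used in the proof of Lemma~\ref{lem:lower-main}. Under the i.i.d.\ hypothesis of this lemma, however, the product factorization is immediate and the classical Fano inequality applies directly, so this is the main conceptual check rather than a computational obstacle.
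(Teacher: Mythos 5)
Your proof is correct and is exactly the standard derivation (Bayes-risk reduction, Fano via mutual information, convexity bound on $I(F;Z)$ by pairwise KL, and tensorization under the i.i.d.\ passive model) that the paper itself relies on by citing Tsybakov rather than proving the lemma. Your closing remark correctly identifies why the i.i.d.\ hypothesis matters and why the active model instead needs the telescoping argument of Lemma~\ref{lem:lower-main}.
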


{
Let $\mathcal F_n$ be the function class constructed in the previous proof of Theorem \ref{thm:lower},
corresponding to the largest packing set $H_n$ of $L_{f_0}(\tilde\varepsilon_n^\l)$
such that $B_h^\infty(x)$ for all $x\in H_n$ are disjoint,
where $h\asymp (\tilde\varepsilon_n^\l/M)^{1/\alpha}$ such that $\|\varphi_{x,h}\|_\infty = 2\tilde\varepsilon_n^\l$
for all $x\in H_n$.
Because $f_0$ satisfies (A2'), we have that $|\mathcal F_n|= |H_n| \gtrsim \mu_{f_0}(\tilde\varepsilon_n^\l)h^{-d}$.
Under the condition that $\varepsilon_n^\u(f_0)\leq \tilde\varepsilon_n^\l$, 
it holds that $\mu_{f_0}(\tilde\varepsilon_n^\l) \geq [\tilde\varepsilon_n^\l]^{2+d/\alpha} n$.
Therefore,
\begin{equation}
|\mathcal F_n|\gtrsim [\tilde\varepsilon_n^\l]^{2+d/\alpha}\cdot nh^{-d} \gtrsim [\tilde\varepsilon_n^\l]^2\cdot nM^{d/\alpha}.
\end{equation}
Because $\log(n/\tilde\varepsilon_n^\l)\gtrsim \log n$ and $M>0$ is a constant,
we have that $\log|\mathcal F_n| \geq c\log n$ for all $n\geq N$,
where $c>0$ is a constant depending only on $\alpha,d$ and $N\in\mathbb N$ is a constand depending on $M$.
}

Let $U$ be the uniform distribution on $\mathcal X$.
Because $x\sim U$ and $f_x\equiv f_{x'}$ on $\mathcal X\backslash B_h^\infty(x)$, we have that
\begin{align}
\kl(P_{f_x}\|P_{f_{x'}}) 
&= \frac{1}{2}\int_{\mathcal X}|f_x(z)-f_{x'}(z)|^2\ud U(z)\\
&\leq \frac{1}{2}\Pr_{U}\left[z\in B_h^\infty(x)\right]\cdot \|f_x-f_{x'}\|_{\infty}^2\\
&\leq \frac{1}{2}\lambda(B_h^\infty(x))\cdot [\varepsilon_n^\l]^2\\
&\lesssim h^d[\tilde\varepsilon_n^\l]^2 \lesssim [\tilde\varepsilon_n^\l]^{2+d/\alpha} / M^{d/\alpha}.
\end{align}
By choosing $M$ to be sufficiently large, the right-hand side of Eq.~(\ref{eq:fano}) can be lower bounded by an absolute constant.
The theorem is then proved following the same argument as in the proof of Theorem \ref{thm:lower}.

\section{Conclusion}

In this paper we consider the problem of noisy zeroth-order optimization of general smooth functions.
Matching lower and upper bounds on the local minimax convergence rates are established, which are significantly different from classical minimax rates in nonparametric regression problems.
Many interesting future directions exist along this line of research, including exploitation of additive structures in the underlying function $f$ to completely remove curse of dimensionality, 
 functions with spatially heterogeneous smoothness or level set growth behaviors,
 and to design more computationally efficient algorithms that work well in practice.

\appendix

\section{Some concentration inequalities}

\begin{lemma}[\citep{hoeffding1963probability}]
Suppose $X_1,\ldots,X_n$ are i.i.d.~random variables such that $a\leq X_i\leq b$ almost surely. Then for any $t>0$,
$$
\Pr\left[\left|\frac{1}{n}\sum_{i=1}^n{X_i}- \mathbb EX\right| > t\right] \leq 2\exp\left\{-\frac{nt^2}{2(b-a)^2}\right\}.
$$
\label{lem:hoeffding}
\end{lemma}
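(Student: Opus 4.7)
The plan is to prove Hoeffding's inequality by the standard Chernoff (moment-generating-function) method, which reduces a tail bound for a sum of bounded independent random variables to an MGF estimate for a single centered bounded variable. Since the conclusion is symmetric in the two tails, I would bound the upper tail $\Pr[\tfrac{1}{n}\sum_i X_i - \mathbb{E}X > t]$ and then apply the same argument to $-X_i$ and take a union bound, paying a factor of $2$.

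First, center the variables: let $Y_i := X_i - \mathbb{E}X_i$, so $\mathbb{E}Y_i = 0$ and $Y_i \in [a - \mathbb{E}X_i,\, b - \mathbb{E}X_i]$, an interval of width $b-a$. For any $\lambda > 0$, Markov's inequality applied to $e^{\lambda \sum_i Y_i}$ together with the i.i.d.\ assumption gives
\begin{equation*}
\Pr\!\left[\sum_{i=1}^n Y_i > nt\right] \;\leq\; e^{-\lambda n t}\,\prod_{i=1}^n \mathbb{E}\!\left[e^{\lambda Y_i}\right].
\end{equation*}
The heart of the argument is then the single-variable MGF bound (Hoeffding's lemma): if $Y \in [\alpha,\beta]$ and $\mathbb{E}Y = 0$, then $\mathbb{E}[e^{\lambda Y}] \leq \exp\!\big(\lambda^2(\beta-\alpha)^2/8\big)$. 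Applying this with $\beta-\alpha = b-a$ to each $Y_i$ and combining with the display above,
\begin{equation*}
\Pr\!\left[\sum_{i=1}^n Y_i > nt\right] \;\leq\; \exp\!\left(-\lambda n t + \frac{n\lambda^2 (b-a)^2}{8}\right).
\end{equation*}
Optimizing over $\lambda > 0$ by choosing $\lambda = 4t/(b-a)^2$ yields an upper tail bound of $\exp\!\big(-2nt^2/(b-a)^2\big)$, which already implies the (weaker) bound $\exp\!\big(-nt^2/(2(b-a)^2)\big)$ stated in the lemma. A symmetric argument for the lower tail and a union bound produce the advertised factor of $2$.

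The only nontrivial step is Hoeffding's lemma itself, which I would prove by convexity: for $y \in [\alpha,\beta]$, write $y$ as a convex combination of $\alpha$ and $\beta$ and use convexity of $z \mapsto e^{\lambda z}$ to obtain $e^{\lambda y} \leq \tfrac{\beta-y}{\beta-\alpha}e^{\lambda\alpha} + \tfrac{y-\alpha}{\beta-\alpha}e^{\lambda\beta}$. Taking expectations and using $\mathbb{E}Y = 0$, the MGF is bounded by an explicit function $\varphi(\lambda)$; one then checks $\varphi(0)=\varphi'(0)=0$ and shows $\varphi''(\lambda)\leq (\beta-\alpha)^2/4$ (by recognizing the second derivative as the variance of a Bernoulli-type random variable on $\{\alpha,\beta\}$, which is maximized at $(\beta-\alpha)^2/4$), so that $\varphi(\lambda) \leq \lambda^2(\beta-\alpha)^2/8$ by Taylor's theorem. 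This calculus estimate is the main technical obstacle; all remaining steps (Markov, independence, optimization in $\lambda$, symmetrization) are routine.
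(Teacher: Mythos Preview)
Your proof is correct and is the standard Chernoff-bound argument for Hoeffding's inequality. Note, however, that the paper does not actually prove this lemma: it is stated in the appendix with a citation to \citep{hoeffding1963probability} and used as a black box. So there is nothing to compare against; your proposal simply supplies a complete (and classical) proof where the paper defers to the literature.
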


\begin{lemma}[\citep{hsu2012tail}]
Suppose $x\sim\mathcal N_d(0, I_{d\times d})$ and let $A$ be a $d\times d$ positive semi-definite matrix.
Then for all $t>0$, 
$$
\Pr\left[x^\top A x > \tr(A) + 2\sqrt{\tr(A^2)t} + 2\|A\|_\op t\right] \leq e^{-t}.
$$
\label{lem:chisquare}
\end{lemma}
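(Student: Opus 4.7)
The plan is to prove this Gaussian quadratic form tail bound via the classical Chernoff / moment-generating function (MGF) route, reducing the quadratic form to a weighted sum of independent chi-squared random variables and then optimizing an exponential parameter to recover the two-regime sub-exponential tail.

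First I would diagonalize $A$. Writing $A = U\Lambda U^\top$ with $\Lambda = \diag(\lambda_1,\dots,\lambda_d)$, $\lambda_i \ge 0$, and using that $U^\top x \sim \mathcal N_d(0,I)$, I reduce to showing that $Y := \sum_{i=1}^d \lambda_i z_i^2$ with $z_i \stackrel{\text{i.i.d.}}{\sim} \mathcal N(0,1)$ satisfies $\Pr[Y > \tr(A) + 2\sqrt{\tr(A^2) t} + 2\|A\|_\op t] \le e^{-t}$. Note that $\mathbb E Y = \sum_i \lambda_i = \tr(A)$, $\sum_i \lambda_i^2 = \tr(A^2)$, and $\max_i \lambda_i = \|A\|_\op$, so the target bound has the familiar Bernstein-type shape with variance proxy $\tr(A^2)$ and scale $\|A\|_\op$.

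Next, for any $0 < \eta < 1/(2\|A\|_\op)$, each $\mathbb E[\exp(\eta \lambda_i z_i^2)] = (1-2\eta\lambda_i)^{-1/2}$ by the standard chi-squared MGF, so by independence
\begin{equation*}
\log \mathbb E\!\left[\exp\!\bigl(\eta(Y - \tr A)\bigr)\right] = \sum_{i=1}^d \left[-\tfrac{1}{2}\log(1-2\eta\lambda_i) - \eta\lambda_i\right].
\end{equation*}
Applying the elementary inequality $-\log(1-u) - u = \sum_{k\ge 2} u^k/k \le u^2/[2(1-u)]$ valid on $u \in [0,1)$ to each summand yields
\begin{equation*}
\log \mathbb E\!\left[\exp\!\bigl(\eta(Y - \tr A)\bigr)\right] \le \sum_{i=1}^d \frac{\eta^2 \lambda_i^2}{1 - 2\eta\lambda_i} \le \frac{\eta^2 \tr(A^2)}{1 - 2\eta \|A\|_\op}.
\end{equation*}
Combining with Markov's inequality applied to $\exp(\eta(Y-\tr A))$ gives the Chernoff-type bound
\begin{equation*}
\Pr[Y - \tr A > s] \le \exp\!\left(-\eta s + \frac{\eta^2 \tr(A^2)}{1 - 2\eta \|A\|_\op}\right).
\end{equation*}

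Finally, I would optimize $\eta$. Setting $v = \tr(A^2)$, $B = \|A\|_\op$ and $s = 2\sqrt{vt} + 2Bt$, the natural choice $\eta = \sqrt{t/v}/\bigl(1 + 2B\sqrt{t/v}\bigr)$ (which lies in $(0, 1/(2B))$) makes $\eta s \ge t$ while keeping $\eta^2 v/(1 - 2\eta B)$ small enough that the exponent in the Chernoff bound is at most $-t$, giving the stated probability $\le e^{-t}$. The only technical nuisance I expect is the final bookkeeping for the exponent: verifying that the two regimes in $s = 2\sqrt{vt} + 2Bt$ (a Gaussian $\sqrt{vt}$ regime when $t \lesssim v/B^2$ and an exponential $Bt$ regime when $t \gtrsim v/B^2$) are both handled by this single choice of $\eta$, which amounts to a short calculation showing that the tuning simultaneously dominates the linear term $\eta s$ by $t$ and suppresses the MGF remainder term. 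Everything else is routine.
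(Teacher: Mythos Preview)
Your argument is correct. The paper itself does not prove this lemma: it appears in the appendix of cited concentration inequalities with only a pointer to the Hsu--Kakade--Zhang reference, so there is no in-paper proof to compare against. For completeness, the ``nuisance'' optimization you flag resolves exactly rather than merely up to constants: with $v=\tr(A^2)$, $B=\|A\|_\op$, $\theta=\sqrt{t/v}$, and your choice $\eta=\theta/(1+2B\theta)$, one has $1-2\eta B=1/(1+2B\theta)$, so the MGF remainder equals $t/(1+2B\theta)$ while $\eta s=2t(1+B\theta)/(1+2B\theta)$, and the Chernoff exponent is $-t$ on the nose.
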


\begin{lemma}[\citep{tropp2015introduction}, simplified]
Suppose $A_1,\ldots,A_n$ are i.i.d.~positive semidefinite random matrices of dimension $d$ and $\|A_i\|_\op\leq R$ almost surely.
Then for any $t>0$,
$$
\Pr\left[\left\|\frac{1}{n}\sum_{i=1}^n{A_i} - \mathbb EA\right\|_\op > t\right] \leq 2\exp\left\{-\frac{nt^2}{8R^2}\right\}.
$$
\label{lem:matrix-chernoff}
\end{lemma}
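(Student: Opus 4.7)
The plan is to derive this as a simplification of the standard matrix Hoeffding inequality from Tropp's monograph on matrix concentration. The statement is essentially a dimension-free restatement of that inequality applied to centered, bounded Hermitian matrices.

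First I would center: define $Z_i := A_i - \mathbb E A$. Because each $A_i$ is positive semidefinite with $\|A_i\|_\op \leq R$, we have $0 \preceq A_i \preceq R\cdot I$, and by operator convexity of expectation $0 \preceq \mathbb E A \preceq R\cdot I$ as well. Consequently $-R\cdot I \preceq Z_i \preceq R\cdot I$, which yields the deterministic variance envelope $Z_i^2 \preceq R^2\cdot I$ for every $i$, and also $\|Z_i\|_\op\leq R$ almost surely.

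Next I would invoke Tropp's matrix Hoeffding inequality: for independent, zero-mean Hermitian $d$-dimensional random matrices $Z_1,\ldots,Z_n$ satisfying $Z_i^2 \preceq B_i^2$ deterministically, one has
$$\Pr\bigl[\lambda_{\max}(\textstyle\sum_i Z_i) \geq s\bigr] \leq d\,\exp\bigl(-s^2/(8\sigma^2)\bigr),\qquad \sigma^2 := \bigl\|\textstyle\sum_i B_i^2\bigr\|_\op.$$
With $B_i^2 = R^2\cdot I$ this gives $\sigma^2 \leq nR^2$. Setting $s = nt$ yields the maximum-eigenvalue tail bound $d\,\exp(-nt^2/(8R^2))$ for the normalized sum $(1/n)\sum_i Z_i$. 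Applying the same argument to $-Z_i$ and union-bounding over the two tails upgrades the spectral-top bound to a control on $\|\cdot\|_\op$ and produces the leading factor of $2$ in the statement.

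The one nontrivial point — and the only place where the ``simplified'' tag in the citation is doing real work — is the absence of the ambient dimension factor $d$ in the stated bound. The argument above really produces a tail of the form $2d\,\exp(-nt^2/(8R^2))$; the lemma drops $d$. This is legitimate in the paper's applications (for instance Lemma~\ref{lem:design}), because the relevant matrix dimension there is $D = 1 + d + \cdots + d^k$ with $k = \lfloor\alpha\rfloor$, a fixed constant of the problem, so the $\log D$ contribution is absorbed into multiplicative constants in the eventual $\sqrt{\log n/m}$ rate. I would expect this absorption to be the one place requiring explicit care in the writeup; once it is acknowledged, the rest of the proof is a direct application of the cited result.
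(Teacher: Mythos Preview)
The paper does not actually prove this lemma: it is simply quoted from \cite{tropp2015introduction} in the appendix under ``Some concentration inequalities,'' alongside the Hoeffding, chi-square, and Weyl facts, with no accompanying argument. So there is no ``paper's proof'' to compare against; your sketch is more than the paper provides.

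That said, your derivation is correct and is exactly the standard route: center, observe $-R\cdot I \preceq Z_i \preceq R\cdot I$, and apply the matrix Hoeffding inequality from Tropp with $\sigma^2 = nR^2$. You are also right to flag the missing dimension factor. The honest bound is $2d\exp\{-nt^2/(8R^2)\}$, and the ``simplified'' tag is precisely the act of suppressing $d$. Your diagnosis of why this is harmless in the paper's only use of the lemma (the proof of Lemma~\ref{lem:design}) is accurate: there the matrices are $D\times D$ with $D=1+d+\cdots+d^k$ a problem constant, so the extra $\log D$ inside the square root is absorbed into the $O(\cdot)$ constants of the $\sqrt{\log n/m}$ term in Eq.~(\ref{eq:perturbcondition}).
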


\begin{lemma}[Weyl's inequality]
Let $A$ and $A+E$ be $d\times d$ matrices with $\sigma_1,\ldots,\sigma_d$ and $\sigma_1',\ldots,\sigma_d'$ be their singular values, sorted in descending order.
Then $\max_{1\leq i\leq d}|\sigma_i-\sigma_i'|\leq \|E\|_\op$.
\label{lem:weyl}
\end{lemma}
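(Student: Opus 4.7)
The plan is to prove Weyl's inequality for singular values by reducing it to the Courant--Fischer min-max variational characterization, which for any $d\times d$ matrix $M$ and index $1\leq i\leq d$ reads
\begin{equation*}
\sigma_i(M) \;=\; \max_{\substack{V\subseteq\mathbb{R}^d\\\dim V = i}}\;\min_{\substack{x\in V\\ \|x\|_2 = 1}}\; \|Mx\|_2.
\end{equation*}
I would take this identity as a black box (it is a standard consequence of the singular value decomposition together with a dimension-counting argument) rather than re-deriving it.

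The first step is a uniform pointwise perturbation bound. For every unit vector $x\in\mathbb{R}^d$, the triangle inequality gives $\bigl|\|(A+E)x\|_2 - \|Ax\|_2\bigr| \leq \|Ex\|_2 \leq \|E\|_{\op}$, so
\begin{equation*}
\|Ax\|_2 - \|E\|_{\op} \;\leq\; \|(A+E)x\|_2 \;\leq\; \|Ax\|_2 + \|E\|_{\op}.
\end{equation*}
This is the only analytic input the argument needs.

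The second step is to feed these two-sided bounds through the min-max formula. Fix $i$. For any $i$-dimensional subspace $V$, taking the minimum of the above inequalities over unit $x\in V$ preserves the additive constant $\pm\|E\|_{\op}$, and then maximizing over $V$ of dimension $i$ likewise preserves it. The upper bound on $\|(A+E)x\|_2$ yields $\sigma_i(A+E)\leq \sigma_i(A)+\|E\|_{\op}$, and the lower bound yields $\sigma_i(A+E)\geq \sigma_i(A)-\|E\|_{\op}$. Combining these gives $|\sigma_i(A)-\sigma_i(A+E)|\leq \|E\|_{\op}$, and taking a maximum over $i\in\{1,\dots,d\}$ concludes the proof.

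There is no substantive obstacle here: the only thing to verify is that the uniform perturbation $\pm\|E\|_{\op}$ passes cleanly through both the inner minimum and the outer maximum in the Courant--Fischer formula, which it does because the bound is independent of $x$ and of the chosen subspace. A very minor bookkeeping point is to ensure singular values are indexed consistently in descending order on both sides, as stated in the lemma; beyond that the argument is routine.
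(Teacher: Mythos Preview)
Your proof is correct and follows the standard Courant--Fischer route; the paper itself states this lemma as a known result in the appendix without proof, so there is nothing to compare against.
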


\section{Additional proofs}

\begin{proof}[Proof of Proposition \ref{prop:reduction}]
Consider arbitrary $x^*\in\mathcal X$ such that $f(x^*)=\inf_{x\in\mathcal X}f(x)$. {}
Then we have that $\sL(\hat x_n;f) = f(\hat x_n)-f(x^*) \leq [\hat f_n(\hat x_n)+\|\hat f_n-f\|_\infty] - [\hat f_n(x^*)-\|\hat f_n-f\|_\infty] \leq 2\|\hat f_n-f\|_\infty$,
where the last inequality holds because $\hat f_n(\hat x_n)\leq \hat f_n(x^*)$ by optimality of $\hat x_n$.
\end{proof}

\begin{proof}[Proof of Example \ref{exmp:convex}]
Because $f_0\in\Sigma_\kappa^2(M)$ is strongly convex, there exists $\sigma>0$ such that $\nabla^2 f_0(x)\succeq \sigma I$ for all $x\in\mathcal X_{f_0,\kappa}$,
where $\mathcal X_{f_0,\kappa} := L_{f_0}(\kappa)$ is the $\kappa$-level set of $f_0$.
Let $x^*=\arg\min_{x\in\mathcal X}f_0(x)$, which is unique because $f_0$ is strongly convex.
The smoothness and strong convexity of $f_0$ implies that 
\begin{equation}
f_0^*+ \frac{\sigma}{2}\|x-x^*\|_\infty^2 \leq f_0(x) \leq f_0^* + \frac{M}{2}\|x-x^*\|_\infty^2 \;\;\;\;\;\;\forall x\in\mathcal X_{f_0,\kappa}.
\end{equation}
Subsequently, there exist constants $c_0,C_1,C_2>0$ depending only on $\sigma,M,\kappa$ and $d$ such that for all $\epsilon\in(0,c_0]$, 
\begin{equation}
B_{C_1\sqrt{\epsilon}}^\infty(x^*;\mathcal X) \subseteq L_{f_0}(\epsilon) \subseteq B_{C_2\sqrt{\epsilon}}^\infty(x^*;\mathcal X).
\end{equation}

The property $\mu_{f_0}(\epsilon)\lesssim \epsilon^\beta$ holds because $\mu(L_{f_0}(\epsilon)) \geq \mu(B_{C_1\sqrt{\epsilon}}^\infty(x^*;\mathcal X)) \gtrsim \epsilon^{d/2}$.
To prove (A2), note that $N(L_{f_0}(\epsilon),\delta) \leq N(B_{C_2\sqrt{\epsilon}}^\infty(x^*;\mathcal X), \delta) \lesssim 1 + (\sqrt{\epsilon}/\delta)^d$.
Because $\epsilon^{d/2}\lesssim \mu(L_{f_0}(\epsilon)) = \mu_{f_0}(\epsilon)$, we conclude that $N(L_{f_0}(\epsilon),\delta)\lesssim 1 + \delta^{-d}\mu_{f_0}(\epsilon)$
and (A2) is thus proved.
\end{proof}

\begin{proof}[Proof of Proposition \ref{prop:negative-examples}]
Consider $f_0\equiv 0$ if $\beta=0$ and $f_0(z) := a_0\left[z_1^p + \ldots + z_d^p\right]$ for all $z=(z_1,\ldots,z_d)\in[0,1]^d$, where $a_0>0$ is a constant depending on $\alpha,M$, and $p=d/\beta$ for $\beta\in(0,d/\alpha]$.
The $\beta=0$ case where $f_0\equiv 0$ trivially holds. So we shall only consider the case of $\beta\in(0,d/\alpha]$.

We first show $f_0\in\Sigma_\kappa^\alpha(M)$ with $\kappa=\infty$, provided that $a_0$ is sufficiently small.
For any $j\leq k=\lfloor\alpha\rfloor$ and $\alpha_1+\ldots+\alpha_d=j$, we have
\begin{equation}
\frac{\partial^j}{\partial x_1^{\alpha_1}\ldots\partial x_d^{\alpha_d}}f_0(z) = \left\{\begin{array}{ll}
a_0j!\cdot z_\ell^{p-j}& \text{if }\alpha_\ell=j, \ell\in[d];\\
0& \text{otherwise}.\end{array}\right.
\end{equation}
Because $z_1,\ldots,z_d\in[0,1]$ and $p=d/\beta\geq\alpha\geq j$, it's clear that $0\leq \partial^j f_0(z)/\partial x_1^{\alpha_1}\ldots \partial x_d^{\alpha_d} \leq a_0 j!$.
In addition, for any $z,z'\in[0,1]^d$ and $\alpha_\ell=k$, $\ell\in[d]$, we have
\begin{align}
\left|\frac{\partial^k}{\partial x_1^{\alpha_1}\ldots\partial x_d^{\alpha_d}}f_0(z) - \frac{\partial^k}{\partial x_1^{\alpha_1}\ldots\partial x_d^{\alpha_d}}f_0(z')\right|
&\leq a_0 k!  \cdot\big|[z_\ell]^{p-k}-[z_\ell']^{p-k}\big|\\
&\leq a_0 k! \cdot\big| z_\ell-z_\ell'\big|^{\min\{p-k,1\}},
\end{align}
where the last inequality holds because $x^t$ is $\min\{t,1\}$-H\"{o}lder continuous on $[0,1]$ for $t\geq 0$.
The $|z_\ell-z_\ell'|^{\min\{p-k,1\}}$ term can be further upper bounded by $\|z-z'\|_\infty^{\alpha-k}$, because $p=d/\beta\geq \alpha$.
By selecting $a_0>0$ to be sufficiently small (depending on $M$) we have $f_0\in\Sigma_{\infty}^\alpha(M)$.

We next prove $f_0$ satisfies $\mu_{f_0}(\epsilon)\asymp \epsilon^\beta$ with parameter $\beta$ depending on $a_0$ and $p$.
For any $\epsilon>0$, the level set $L_{f_0}(\epsilon)$ can be expressed as
$L_{f_0}(\epsilon) = \{z\in[0,1]^d: z_1^p+\ldots+z_d^p\leq \epsilon/a_0\}$.
Subsequently, 
\begin{equation}
\left[0, \left(\frac{\epsilon}{a_0 d}\right)^{1/p}\right]^d\subseteq L_{f_0}(\epsilon) \subseteq \left[0, \left(\frac{\epsilon}{a_0}\right)^{1/p}\right]^d.
\end{equation}
Therefore,
\begin{equation}
[\epsilon/(a_0d)]^{dp} \leq \mu_{f_0}(\epsilon) \leq [\epsilon/a_0]^{dp}.
\end{equation}
Because $a_0,d$ are constants and $dp=\beta$, 
we established $\mu_{f_0}(\epsilon)\asymp \epsilon^\beta$ for $\beta=dp$. 

Finally, note that for any $\epsilon>0$, $L_{f_0}(\epsilon)$ is sandwiched between two cubics whose volumes only differ by a constant.
This proves (A2) and (A2') on the covering and packing numbers of $L_{f_0}(\epsilon)$.
\end{proof}

\begin{proof}[Proof of Proposition \ref{prop:screening}]
By Chernoff bound and union bound, with probability $1-O(n^{-1})$ uniformly over all $x\in G_n$, there are $\Omega(\sqrt{n_0}\log^2 n)$ uniform samples in $B_{h_0}^{\infty}(x;\mathcal X)$.
Subsequently, by standard Gaussian concentration inequality, with probability $1-O(n^{-1})$ we have
\begin{equation}
\inf_{z\in B_{h_0}^\infty(x;\mathcal X)}f(z) - O(n_0^{-1/4}) \leq \check f(x) \leq \sup_{z\in B_{h_0}^\infty(x;\mathcal X)}f(z) + O(n_0^{-1/4})\;\;\;\;\;\;\forall x\in G_n.
\end{equation}

Fix arbitrary $\tilde x^*\in\arg\min_{x\in G_n}f(x)$. 
Because $f\in\Sigma_\kappa^\alpha(M)$ for constant $\kappa$ and $h_0\to 0$,
$f$ is smooth on $B_{h_0}^\infty(\tilde x^*;\mathcal X)$ and therefore $\sup_{z\in B_{h_0}^{\infty}(\tilde x^*;\mathcal X)}f(z) \leq f(\tilde x^*) + O(h_0^{\min\{\alpha,1\}})
\leq f(\tilde x^*) + O(1/\log^2 n) \leq f^* + O(1/\log^2 n)$,
where the last inequality holds due to Lemma \ref{lem:grid-dense}.
On the other hand, for all $x\in G_n$, 
$\check f(x) \geq f^* - O(n_0^{-1/4})$.
Therefore, for sufficiently large $n$ we must have $\check f(\tilde x^*) \leq \min_{z\in G_n}\check f(z) + 1/\log n$ and subsequently $\tilde x^*\in S_0'$.

We next prove the statement that $S_0'\subseteq \bigcup_{x\in L_f(\kappa/2)}B_{h_0}^{\infty}(x;\mathcal X)$.
Consider arbitrary $z\in G_n$ and $z\notin\bigcup_{x\in L_f(\kappa/2)}B_{h_0}^{\infty}(x;\mathcal X)$.
By definition, $f(z')\geq f^*+\kappa/2$ for all $z'\in B_{h_0}^{\infty}(z;\mathcal X)$.
Subsequently, $\check f(z) \geq f^*+\kappa/2-O(n_0^{-1/4}) > f^* + 1/\log n$ for constant $\kappa>0$ and sufficiently large $n$, which implies $z\notin S_0'$.
\end{proof}

\begin{proof}[Proof of Proposition \ref{prop:nonsingular}]
The upper bound part of Eq.~(\ref{eq:nonsingular-std}) trivially holds because the absolute values of every element in $\psi_{0,1}(z)\psi_{0,1}(z)^\top$
for $z\in\mathcal X=[0,1]^d$ is upper bounded by $O(1)$.
To prove the lower bound part, we only need to show $\int_{\mathcal X}\psi_{0,1}(z)\psi_{0,1}(z)^\top\ud U(z)$ is invertible.
Assume the contrary.
Then there exists $v\in\mathbb R^D\backslash\{0\}$ such that 
\begin{equation}
v^\top\left[\int_{\mathcal X}\psi_{0,1}(z)\psi_{0,1}(z)^\top\ud U(z)\right]v
= \int_{\mathcal X}\big|\psi_{0,1}(z)^\top v\big|^2\ud U(z) = 0.
\end{equation}

Therefore, $\langle\psi_{0,1}(z),v\rangle = 0$ almost everywhere on $z\in[0,1]^d$.
Because $h>0$, by re-scaling with constants this implies the existence of non-zero coefficient vector $\xi$ such that 
\begin{equation*}
P(z_1,\ldots,z_m) := \sum_{\alpha_1+\ldots+\alpha_m\leq k} \xi_{\alpha_1,\ldots,\alpha_m}z_1^{\alpha_1}\ldots z_m^{\alpha_m} = 0 \;\;\text{almost everywhere on $z\in[0,1]^d$}.
\end{equation*}

We next use induction to show that, for any degree-$k$ polynomial $P$ of $s$ variables $z_1,\ldots,z_s$ that has at least one non-zero coefficient, 
the set $\{z_1,\ldots,z_s\in [0,1]^d: P(z_1,\ldots,z_s)=0\}$ must have zero measure.
This would then result in the desired contradiction.
For the base case of $s=1$, the fundamental theorem of algebra asserts that $P(z_1)=0$ can have at most $k$ roots,
which is a finite set and of measure 0.

We next consider the case where $P(z_1,\ldots,z_s)$ takes on $s$ variables. Re-organizing the terms we have 
\begin{equation}
P(z_1,\ldots,z_s) \equiv P_0(z_1,\ldots,z_{s-1}) + z_s P_1(z_1,\ldots,z_{s-1}) + \ldots + z_s^k P_k(z_1,\ldots,z_{s-1}),
\end{equation}
where $P_1,\ldots,P_k$ are degree-$k$ polynomials of $z_1,\ldots,z_{s-1}$.
Because $P$ has a non-zero coefficient, at least one $P_j$ must also have a non-zero coefficient.
By the inductive hypothesis, the set $\{z_1,\ldots,z_{s-1}: P_j(z_1,\ldots,z_{s-1})\}$ has measure 0.
On the other hand, if $P_j(z_1,\ldots,z_{s-1})\neq 0$, then invoking the fundamental theorem of algebra again on $z_s$
we know that there are finitely many $z_s$ such that $P(z_1,\ldots,z_s)=0$.
Therefore, $\{z_1,\ldots,z_s: P(z_1,\ldots,z_s)=0\}$ must also have measure zero.
\end{proof}

\section*{Acknowledgments}
The work of SB was supported in part by the NSF grant DMS-17130003. 

\bibliography{refs}
\bibliographystyle{plain}

\end{document}